\documentclass[preprint,3p]{elsarticle}

\journal{arXiv}
\usepackage{tabularx}

\usepackage[utf8]{inputenc}
\usepackage{bm}
\usepackage{amsmath}
\usepackage[hidelinks]{hyperref}
\hypersetup{colorlinks=true,linkcolor=blue,urlcolor=blue}
\usepackage{multicol}
\usepackage{booktabs}
\usepackage{amsthm}
\usepackage{esint}
\usepackage[table,xcdraw,dvipsnames]{xcolor}
\usepackage{graphicx}
\usepackage{float}
\usepackage{subcaption}
\usepackage{algorithm}
\usepackage{algpseudocode}
\usepackage{pdflscape}
\usepackage{array}
\usepackage{multirow}
\usepackage{diagbox}
\usepackage{cleveref}
\usepackage{amsfonts}
\usepackage{amssymb}
\usepackage{ulem}
\usepackage{fourier-orns}
\usepackage[labelfont=bf,labelsep=period]{caption}
\usepackage[labelfont=rm,font=footnotesize]{subcaption}

\numberwithin{equation}{section}

\DeclareMathAlphabet{\altmathcal}{OMS}{cmsy}{m}{n}
\DeclareMathAlphabet{\altmathcalb}{OMS}{cmsy}{b}{n}
\DeclareMathAlphabet{\mathcalboondox}{U}{BOONDOX-calo}{m}{n}
\DeclareMathAlphabet{\mathbbmsl}{U}{bbm}{m}{sl}

\newtheorem{theorem}{Theorem}
\newtheorem{corollary}{Corollary}
\newtheorem{assumption}{Assumption}
\newtheorem{lemma}{Lemma}

\newdefinition{remark}{Remark}

\newcommand{\orcid}[1]{\href{https://orcid.org/#1}{\texorpdfstring{\includegraphics*[width=8pt]{figs/orcid}}~~}{}}

\definecolor{drot}{rgb}{0.7,0,0.1}

\newcommand{\bo}[1]{\mathbf{#1}}

\begin{document}
	\baselineskip14pt
	\sloppy
	
	\begin{frontmatter}
		
		\title{$\Omega$: Operator-based Mixture Ensemble for Generative Assimilation}

		\author[UW]{Pouria Behnoudfar}
		\author[UW]{Nan Chen\corref{corr} }
		\cortext[corr]{Corresponding author}
		\address[UW]{{University of Wisconsin-Madison, Department of Mathematics, Madison, WI, USA}}
		
		\begin{abstract}
			Characterizing non-Gaussian posterior distributions in partially observed high-dimensional nonlinear systems remains a fundamental challenge in data assimilation. Ensemble Kalman filters rely on Gaussian approximations that can be inaccurate for strongly non-Gaussian posteriors, whereas particle filters suffer from severe scalability limitations. Recent score-based generative approaches improve posterior characterization but typically require supervised training with ground-truth posterior samples, which are unavailable in most practical applications. We introduce $\Omega$ (Operator-based Mixture Ensemble for Generative Assimilation), a scalable framework that integrates conditional Gaussian surrogate modeling, unsupervised score learning, and generative sampling. The conditional Gaussian surrogate provides a nonlinear non-Gaussian baseline approximation while admitting closed-form conditional posterior distributions for the unresolved variables. First, $\Omega$ exploits these closed-form conditional distributions to analytically recover the high-dimensional unobserved component, reducing computational cost and mitigating the curse of dimensionality. Second, $\Omega$ learns only the residual discrepancy beyond an analytical baseline through denoising score matching using ensemble trajectories alone, eliminating the need for ground-truth posterior samples and substantially reducing the learning burden. Third, $\Omega$ reconstructs the full non-Gaussian posterior distribution of both observed and unobserved variables through a Gaussian-mixture representation, enabling multimodal, skewed, and heavy-tailed statistics to be captured. Finally, $\Omega$ employs annealed Langevin sampling to iteratively refine ensemble members from the baseline toward the target posterior. The framework is validated on several turbulent models with intermittency and extreme events. Across all examples, $\Omega$ consistently improves posterior accuracy and the recovery of unobserved variables relative to the ensemble Kalman filter and conditional-Gaussian baselines, while remaining computationally tractable in high-dimensional settings.
			
		\end{abstract}
		\begin{keyword}
			Operator Learning; Non-Gaussian Data Assimilation; Score-Based Generative Modeling; Conditional Gaussian Nonlinear Systems; Langevin Sampling
		\end{keyword}
	\end{frontmatter}
	
	\section{Introduction}
	\label{sec:intro}

	Data assimilation (DA) is a framework for estimating the evolving state of a dynamical system by combining model forecasts with available observations~\cite{law2015data, majda2012filtering, asch2016data, evensen2022data,poulet2024slip}. It plays a central role in forecasting and uncertainty quantification across a wide range of applications, including geophysics, oceanography, and Earth science. At each assimilation time, the objective is to characterize the filtering posterior distribution and propagate this information forward through the dynamical model. A fundamental challenge is that the posterior distribution is often strongly non-Gaussian due to nonlinear dynamics, intermittency, regime transitions, and extreme events~\cite{bocquet2010beyond, van2019particle}. This challenge becomes particularly severe in partially observed high-dimensional systems, where sparse observations and unresolved processes further complicate posterior inference~\cite{Freitag2013Synergy, chen2023stochastic, wang2000data, chen2019multi,poulet2023physics}.

	Traditional DA methods seek to balance computational efficiency and statistical accuracy. The ensemble Kalman filter (EnKF) and its variants~\cite{evensen1994sequential, evensen2003ensemble, houtekamer2005ensemble, BEHNOUDFAR2026115035} are among the most widely used approaches because of their robustness, scalability, and success in large-scale applications such as meteorology~\cite{Anderson1996Data, kalnay2003atmospheric}, groundwater modeling~\cite{vrugt2003shuffled}, and other geophysical systems~\cite{lahoz2014data}. However, the EnKF relies on Gaussian approximations and linear covariance updates, which can introduce systematic errors when the posterior is strongly non-Gaussian and often require empirical tuning through inflation and localization~\cite{anderson2012localization, anderson2007exploring, buehner2007spectral, whitaker2012evaluating}. Particle filters (PF) and related sequential Monte Carlo methods~\cite{van2000unscented, gustafsson2010particle} can represent arbitrary posterior distributions, but their performance deteriorates rapidly in high-dimensional systems because of weight degeneracy and the associated curse of dimensionality~\cite{elfring2021particle}. Consequently, achieving accurate non-Gaussian posterior characterization while maintaining computational tractability remains a major open problem in DA.
	
	Recent advances in machine learning have motivated a new generation of DA methodologies based on deep generative models. Normalizing flows~\cite{rezende2015variational, kobyzev2020normalizing, papamakarios2021normalizing} provide flexible posterior representations but require tractable density evaluation and often scale poorly to high-dimensional state spaces. Variational autoencoders and related latent-variable models~\cite{kingma2013auto,khemakhem2020variational,behnoudfar2026bridging} have also been applied to state estimation, although they typically impose Gaussian latent representations that partially reintroduce the Gaussian bottleneck they seek to overcome. More recently, score-based and diffusion generative models~\cite{song2020score, ho2020denoising} have emerged as powerful tools for posterior sampling in inverse problems and DA because of their ability to represent complex multimodal distributions through iterative Langevin or reverse-diffusion refinement~\cite{manshausen2025generative, xun2026posterior}. Despite their success, existing score-based DA methods typically rely on supervised training using reference states or posterior samples that are unavailable in most practical applications~\cite{rozet2023score, batzolis2021conditional}, incur substantial computational cost during inference~\cite{price2025probabilistic, manshausen2025generative, hasan2025pc}, and often focus their non-Gaussian corrections on the observed variables, leaving the posterior distribution of unresolved variables largely governed by Gaussian approximations.

	In this work, we introduce $\Omega$ (Operator-based Mixture Ensemble for Generative Assimilation), a hybrid framework that integrates conditional Gaussian surrogate modeling, unsupervised score learning, and generative sampling. The framework begins with a Conditional Gaussian Nonlinear System (CGNS) surrogate~\cite{chen2018conditional,chen2022conditional}, which serves as a computationally efficient forecast model while providing closed-form conditional posterior distributions for the unresolved variables. Importantly, the CGNS is not a globally Gaussian approximation: the overall dynamics remain nonlinear and are capable of generating strongly non-Gaussian statistics, while the conditional distributions of the unresolved variables can still be obtained analytically. These closed-form conditional distributions therefore provide a principled baseline approximation of the posterior and enable analytical treatment of the unresolved variables. Building upon this baseline, a residual operator is trained through Noise2Noise denoising score matching (DSM)~\cite{lehtinen2018noise2noise} using only ensemble trajectories, eliminating the need for ground-truth posterior samples and substantially reducing the learning burden. Finally, annealed Langevin sampling refines the ensemble toward the target posterior, and the corrected conditional distributions are assembled into a Gaussian-mixture representation of the full posterior distribution.
	
	The key advantages of $\Omega$ are fourfold. First, we reformulate the DA problem using an augmented state consisting of the observed variables and the CGNS conditional mean of the unresolved variables, which keeps the learning and sampling procedures out of the full unresolved-state space while retaining information about its posterior distribution (Section~\ref{sec:cgns_forecast}). Second, we train a non-Gaussian residual operator on this augmented state that goes beyond the linear covariance corrections of the EnKF, allowing observational information to propagate into the unresolved variables in a fully nonlinear manner (Section~\ref{sec:joint_score}). Third, the residual operator is learned entirely through unsupervised Noise2Noise denoising score matching using ensemble trajectories alone, eliminating the need for paired training data or ground-truth posterior samples (Section~\ref{sec:training}). Fourth, annealed Langevin sampling refines the augmented ensemble toward the target posterior without learning the full posterior from scratch. After the Langevin correction, each unresolved-state sample is drawn analytically from a conditional Gaussian distribution parameterized by the corrected conditional mean. Marginalizing over the ensemble yields a Gaussian-mixture posterior that is fully non-Gaussian through the diversity of the learned corrections, enabling multimodal, skewed, and heavy-tailed statistics to be accurately captured without Langevin ever acting directly in the unresolved-state space (Section~\ref{sec:sampling}).
	
	The remainder of the paper is organized as follows. Section~\ref{sec:cgns} reviews the CGNS filtering framework. Section~\ref{sec:cgns_enkf} introduces the augmented-state formulation based on the EnKF and CGNS. Section~\ref{sec:joint_score} presents the joint residual operator, the CGNS analytical score baseline, the unsupervised Noise2Noise denoising score matching (DSM) training framework, and the annealed Langevin analysis step together with the analytical sampling procedure for the unresolved variables. Section~\ref{sec:experiments} presents numerical experiments on a hierarchy of the turbulent systems with intermittency and extreme events. Finally, Section~\ref{sec:conclud} concludes the paper and discusses future research directions.

	\section{Conditional Gaussian Nonlinear System (CGNS)}
	\label{sec:cgns}
	
	Conditional Gaussian nonlinear systems (CGNS) provide a broad class of nonlinear stochastic dynamical systems that retain an analytically tractable conditional structure~\cite{chen2018conditional}. A key feature of the framework is that the full system can exhibit strongly nonlinear, non-Gaussian, intermittent, and multiscale behavior, while the conditional distribution of a subset of variables remains Gaussian and admits closed-form filtering and smoothing formulae. Importantly, the conditional Gaussian structure does not imply that the joint distribution of the full system is Gaussian. In fact, CGNS models can generate highly non-Gaussian statistics, including intermittency, skewness, heavy tails, and regime transitions~\cite{chen2018conditional}. This unique combination of analytical tractability and nonlinear expressiveness makes CGNS an attractive building block for scalable uncertainty quantification and data assimilation in high-dimensional systems. The framework also serves as the foundation of the proposed $\Omega$ methodology, where the closed-form conditional distributions provide a principled baseline approximation of the posterior and enable analytical treatment of high-dimensional unresolved variables.
	
	Consider a dynamical system in which the state vector $\mathbf{U} = (\bo X, \bo Y) \in \mathbb{R}^{d_X} \times \mathbb{R}^{d_Y}$ is partitioned into an observed component $\bo X \in \mathbb{R}^{d_X}$ and an unobserved (or latent) component $\bo Y \in \mathbb{R}^{d_Y}$. The CGNS is defined by the coupled stochastic differential equations~\cite{chen2018conditional}:
	\begin{align}
		d\bo X &= \left[\mathbf{A}_0(\bo X, t) + \bo {A}_1(\bo X, t)\, \bo Y\right] dt
		+ \boldsymbol{\Sigma}_X(\bo X, t)\, d\bo{W}_X(t),
		\label{eq:cgns_obs} \\
		d\bo Y &= \left[\mathbf{a}_0(\bo X, t) + \bo {a}_1(\bo X, t)\, \bo Y\right] dt
		+ \boldsymbol{\Sigma}_Y(\bo X, t)\, d\bo {W}_Y(t),
		\label{eq:cgns_lat}
	\end{align}
	where $\mathbf{W}_X \in \mathbb{R}^{d_X}$ and $\mathbf{W}_Y \in \mathbb{R}^{d_Y}$ are independent standard Wiener processes. The coefficient functions $\mathbf{A}_0$, $\mathbf{A}_1$, $\mathbf{a}_0$, $\mathbf{a}_1$, $\boldsymbol{\Sigma}_X$, and $\boldsymbol{\Sigma}_Y$ may be arbitrary nonlinear functions of the observed state $\mathbf X$ and time $t$. Although the dynamics are conditionally linear in $\mathbf Y$, the coupled system remains fully nonlinear and can generate strongly non-Gaussian statistics.
	
	\subsection{Closed-Form Posterior for the Latent Variables}
	
	The CGNS framework requires that the observed component $\bo X$ is available {continuously in time}. Formally, the filtration generated by the observation path
	\begin{equation}
		\mathcal{F}_t^X = \{\bo X(s) : 0 \leq s \leq t\}
	\end{equation}
	is assumed to be fully accessible up to the current time $t$. Then, conditioned on the observation path $\mathcal{F}_t^X$, CGNS provides the posterior distribution of the latent state $\bo Y(t)$ for all $t$:
	\begin{equation}
		p\!\left(\bo Y(t) \mid \mathcal{F}_t^X\right) =
		\mathcal{N}\!\left(\boldsymbol{\mu}(t),\, \mathbf{R}(t)\right).
	\end{equation}
	The conditional mean $\boldsymbol{\mu}(t) \in \mathbb{R}^{d_Y}$ and conditional covariance $\mathbf{R}(t) \in \mathbb{R}^{d_Y \times d_Y}$ evolve according to the following {closed-form ODEs}, driven by the observed path $\bo X(t)$ \cite{liptser2013statistics}:
	\begin{subequations}
		\label{eq:closed}
		\begin{align}
			d\bo \mu &= (\bo a_0 + \bo a_1 \boldsymbol{\mu}) dt + \bo R \bo A_1^* (\boldsymbol{\Sigma}_X \boldsymbol{\Sigma}_X^*)^{-1} \left[ d\bo X - (\bo A_0 + \bo A_1 \boldsymbol{\mu}) dt \right],  \label{eq:cgns_mean}\\[2ex]
			d \bo R &= \left[ \bo a_1 \bo R + \bo R \bo a_1^* + \boldsymbol{\Sigma}_Y \boldsymbol{\Sigma}_Y^* - (\bo R A_1^*) (\boldsymbol{\Sigma}_X \boldsymbol{\Sigma}_X^*)^{-1} (\bo A_1 \bo R) \right] dt.
			\label{eq:cgns_cov}
		\end{align}
	\end{subequations}
	Equation~\eqref{eq:closed} is a matrix Riccati equation with a solution that depends on the model coefficients evaluated along $\bo X(t)$. Equation~\eqref{eq:cgns_mean} updates the posterior mean in response to innovations in the observed process $d \bo {X}$. The closed-form nature of these equations is a direct consequence of the conditional linearity of \eqref{eq:cgns_lat}.
	
	In many geophysical and physical applications (see \cite{chen2018conditional} for a comprehensive list of physical models), $\bo X$ may represent a densely sampled observable (e.g., sea surface temperature, radar reflectivity, or high-frequency sensor measurements), while $\bo Y$ encodes the unresolved or slowly varying latent modes.

	\begin{remark}[Exact nonlinear filtering]
		The CGNS equations \eqref{eq:cgns_mean}--\eqref{eq:cgns_cov} constitute an exact solution to the nonlinear filtering problem for this system class, with no Monte Carlo approximation required for the latent component. The nonlinearity of the full system enters through the coefficient functions $\mathbf{a}_0(\bo X,t)$, $\mathbf{A}_1(\bo X,t)$, and $\boldsymbol{\Sigma}_Y(\bo X,t)$, which are evaluated pointwise along the observed trajectory $\bo X(t)$.
	\end{remark}
	
	\subsection{Applications and Significance of the CGNS Framework}

	The broad applicability of the CGNS framework and the analytical tractability of its associated filtering equations provide a strong motivation for its use as the baseline model in $\Omega$. In particular, the closed-form conditional distributions furnish a computationally efficient approximation of the posterior while retaining essential nonlinear and non-Gaussian features of the underlying dynamics.
	
	Many nonlinear stochastic dynamical systems fit naturally into the CGNS framework~\cite{chen2018conditional}. Representative examples include stochastic versions of the Lorenz models, low-order Charney-DeVore atmospheric models, topographic mean-flow interaction models, coupled reaction-diffusion systems in neuroscience and ecology, and multiscale models for geophysical fluid dynamics. These examples span a wide range of applications and demonstrate that the CGNS framework is not a specialized model class, but rather a broad modeling paradigm for nonlinear stochastic systems. The framework has also been used to develop realistic reduced-order models for phenomena such as the Madden-Julian Oscillation and Arctic sea ice variability~\cite{chen2014predicting, chen2022efficient}.
	
	Beyond its modeling capability, the CGNS framework has enabled a wide range of theoretical and practical developments. The closed-form data assimilation formulae have been exploited in nonlinear Lagrangian data assimilation, uncertainty quantification, state estimation, prediction of intermittent atmosphere and ocean phenomena, and efficient numerical solutions of high-dimensional Fokker-Planck equations~\cite{chen2014information, majda2018model, chen2017beating}. These successes demonstrate that the analytical conditional structure of CGNS provides a powerful and versatile foundation for scalable inference in complex nonlinear systems. The analytical conditional structure underlying CGNS has also been exploited in stochastic superparameterization~\cite{grooms2013efficient} and dynamic stochastic superresolution~\cite{branicki2013dynamic}.
	
	\section{Ensemble Kalman Filter with CGNS Forecast Model for a General Nonlinear System}
	\label{sec:cgns_enkf}
	
	The closed-form filtering equations in Section~\ref{sec:cgns} require continuous-time observations of the observed variables. In practice, however, observations are available only at discrete times and the underlying dynamics may not exactly satisfy the CGNS assumptions. To bridge this gap, we embed a CGNS surrogate within an EnKF framework. The CGNS provides a computationally efficient forecast model together with analytical conditional distributions for the unresolved variables, while the EnKF handles discrete-time observations and serves as the baseline posterior approximation for the subsequent score-based correction.
	
	We consider a general dynamical system with a complex nonlinear model. The observations are available at discrete times $t_1 < t_2 < \cdots < t_K$ for only $\bo X$. That is, at each observation time $t_k$, we receive a noisy partial measurement, such that:
	\begin{equation}
		\mathbf{Z}_k = \mathbf{H}\mathbf{U}_k + \boldsymbol{\eta}_k,
		\qquad \boldsymbol{\eta}_k \sim \mathcal{N}(0, \mathbf{R}_{\mathrm{obs}}),
		\label{eq:observation}
	\end{equation}
	where $\bo U_k = \bo U(t_k)$ and $\mathbf{H} = [\bo H_X\,, \,\bo 0]  \in \mathbb{R}^{d_z \times d}$ is an observation operator where $d = d_X + d_Y$, $\mathbf{H}_X \in \mathbb{R}^{d_z \times d_X}$ is the restriction of $\mathbf{H}$ to the observed component, and $\mathbf{0} \in \mathbb{R}^{d_z \times d_Y}$. $\mathbf{R}_{\mathrm{obs}} \in \mathbb{R}^{d_z \times d_z}$ is the observation noise covariance. We denote the history of observations up to time $t_k$ as $\mathbf{Z}_{0:k} = \{\mathbf{Z}_1,\ldots,  \mathbf{Z}_k\}$.
	
	To handle discrete observations while retaining the analytical structure of the CGNS, we embed the CGNS as the forecast model within an EnKF. Rather than tracking raw $\bo Y$ samples in the ensemble, we track the augmented state
	\begin{equation}
		\mathbf{V}_k^{(i)} = \left(\bo X_k^{(i)},\,
		\boldsymbol{\mu}_k^{(i)}\right)
		\in \mathbb{R}^{d_X} \times \mathbb{R}^{d_Y},
		\label{eq:augmented_def}
	\end{equation}
	where $\boldsymbol{\mu}_k^{(i)}$ is the CGNS conditional mean of $\bo Y_k^{(i)}$ given the $i$-th ensemble trajectory $\bo X_{0:k}^{(i)}$. The CGNS conditional covariance $\mathbf{R}_k^{(i)} \in \mathbb{R}^{d_Y \times d_Y}$ is tracked separately for each ensemble. The EnKF then operates in two steps: a forecast step, in which the augmented ensemble is propagated forward through the CGNS dynamics, and an analysis step, in which it is updated using the incoming partial observation $\mathbf{Z}_k$.
	
	The augmented-state formulation plays a central role in $\Omega$. Rather than directly learning or sampling in the high-dimensional unresolved state space, we represent the unresolved variables through their CGNS conditional statistics. The conditional mean $\mu_k^{(i)}$ provides a compact representation of the unresolved-state posterior and serves as the target of the subsequent non-Gaussian correction. This formulation allows the score-learning and Langevin-sampling procedures to operate on the augmented state rather than directly on the full unresolved-state distribution.
	
	\subsection{Forecast Step: CGNS Propagation}
	\label{sec:cgns_forecast}
	
	Between observation times $t_{k-1}$ and $t_k$, the analysis
	augmented ensemble
	$\{\mathbf{V}_{a,k-1}^{(i)}\}_{i=1}^N =
	\{(\bo X_{a,k-1}^{(i)},\, \boldsymbol{\mu}_{a,k-1}^{(i)})\}_{i=1}^N$,
	together with the analysis covariances
	$\{\mathbf{R}_{a,k-1}^{(i)}\}_{i=1}^N$, is propagated forward
	through the CGNS dynamics in three stages.
	
	\paragraph{Stage 1: Propagation of the observed component.}
	For each ensemble member $i = 1, \ldots, N$, the observed component is propagated by integrating the SDE \eqref{eq:cgns_obs} from $t_{k-1}$ to $t_k$:
	\begin{equation}
		d\bo X^{(i)} = \left[\mathbf{A}_0(\bo X^{(i)}, t) + \bo {A}_1(\bo X^{(i)}, t)\, \bo Y^{(i)}\right] dt
		+ \boldsymbol{\Sigma}_X(\bo X^{(i)}, t)\, d\bo{W}^{(i)}_X(t),
		\quad t \in [t_{k-1}, t_k],
		\label{eq:forecast_X}
	\end{equation}
	with initial condition $\bo X^{(i)}(t_{k-1}) = \bo X_{a,k-1}^{(i)}$. Each realization $\mathbf{W}_X^{(i)}$ is an independent Wiener process, so the $N$ trajectories $\{\bo X_{t_{k-1}:t_k}^{(i)}\}_{i=1}^N$ are statistically independent given the analysis ensemble at $t_{k-1}$. The forecast ensembles $\bo X_{f,k}^{(i)} = \bo X^{(i)}(t_k)$ approximate the marginal predictive distribution $p(\bo X_k \mid \mathbf{Z}_{0:k-1})$, which is generally non-Gaussian due to the nonlinear drift $\mathbf{A}_0$.
	
	\paragraph{Stage 2: CGNS conditional forecast for the latent
		component}
	Given the trajectory $\bo X_{t_{k-1}:t_k}^{(i)}$, following a similar approach, we solve \eqref{eq:cgns_mean}--\eqref{eq:cgns_cov} by starting from the analysis conditional at $t_{k-1}$, considering $\bo Y_{k-1} \mid \bo X_{0:k-1}^{(i)} \sim
	\mathcal{N}\!\left(\boldsymbol{\mu}_{a,k-1}^{(i)},\,
	\mathbf{R}_{a,k-1}^{(i)}\right)$,
	the conditional mean $\boldsymbol{\mu}^{(i)}(t)$ and covariance $\mathbf{R}^{(i)}(t)$ are evolved forward by integrating the CGNS {forecast ODEs} along $\bo X_{t_{k-1}:t_k}^{(i)}$. The initial conditions are $\boldsymbol{\mu}^{(i)}(t_{k-1}) =
	\boldsymbol{\mu}_{a,k-1}^{(i)}$ and $\mathbf{R}^{(i)}(t_{k-1}) = \mathbf{R}_{a,k-1}^{(i)}$. At $t = t_k$ we obtain the forecast conditional posterior
	\begin{equation}
		\bo Y_k \mid \bo X_{t_{k-1}:t_k}^{(i)} \sim
		\mathcal{N}\!\left(\boldsymbol{\mu}_{f,k}^{(i)},\,
		\mathbf{R}_{f,k}^{(i)}\right),
		\label{eq:forecast_conditional}
	\end{equation}
	where $\boldsymbol{\mu}_{f,k}^{(i)} = \boldsymbol{\mu}^{(i)}(t_k)$
	and $\mathbf{R}_{f,k}^{(i)} = \mathbf{R}^{(i)}(t_k)$.
	To generate latent trajectories consistent with the conditional distribution \eqref{eq:forecast_conditional}, we employ the martingale-based sampling procedure of~\cite{andreou2024martingale}. The resulting stochastic differential equation preserves both the conditional Gaussian marginals and the temporal correlations implied by the CGNS:
	\begin{equation}
		\label{eq:Y_froward_samp}
		\begin{aligned}
			d\bo Y^{(i)} = \, & d \boldsymbol{\mu}^{(i)} + \left(\bo a_1 - \bo R^{(i)} \bo A_1^* \left(\boldsymbol{\Sigma}_X \boldsymbol{\Sigma}_X^*\right)^{-1} \bo A_1\right)\left(\bo Y^{(i)} - \boldsymbol{\mu}^{(i)}\right) dt \\
			& + \left(\boldsymbol{\Sigma}_Y \boldsymbol{\Sigma}_Y^* + \bo R^{(i)} \bo A_1^* \left(\boldsymbol{\Sigma}_X \boldsymbol{\Sigma}_X^*\right)^{-1} \bo A_1 \bo R^{(i)}\right)^{1/2} d \bo W_Y^{(i)}.
		\end{aligned}
	\end{equation}
	Equation~\eqref{eq:Y_froward_samp} admits a natural interpretation. The first term tracks the evolution of the conditional mean, the second term provides a mean-reverting correction toward the conditional mean, and the third term injects stochastic fluctuations consistent with the conditional covariance. The second term is a mean-reverting drift that pulls $\mathbf{Y}^{(i)} $ back toward $\boldsymbol{\mu}^{(i)} $ whenever it deviates: the matrix $\bo a_1 - \bo R^{(i)} \bo A_1^* \left(\boldsymbol{\Sigma}_X \boldsymbol{\Sigma}_X^*\right)^{-1} \bo A_1$. The diffusion term $\left(\boldsymbol{\Sigma}_Y \boldsymbol{\Sigma}_Y^* + \bo R^{(i)} \bo A_1^* \left(\boldsymbol{\Sigma}_X \boldsymbol{\Sigma}_X^*\right)^{-1} \bo A_1 \bo R^{(i)}\right)^{1/2} d \bo W_Y^{(i)}$  injects the remaining stochastic fluctuation that is not explained by the mean.Together, these three terms ensure that integrating \eqref{eq:Y_froward_samp} forward  from any initial draw $\mathbf{Y}(t_0)^{(i)}  \sim \mathcal{N}(\boldsymbol{\mu}(t_0)^{(i)} , \mathbf{R}_f^{(i)} (t_0))$ produces a trajectory whose marginal distribution at every later time $t$ matches $\mathcal{N}(\boldsymbol{\mu}^{(i)} (t), \mathbf{R}^{(i)} _f(t))$, while also preserving the correct temporal correlations along the path. The forward sampling procedure provides two practical benefits. First, the conditional Gaussian structure supplies the full distribution of the unresolved variables analytically, so generating trajectories
	incurs little additional computational cost beyond evolving the conditional statistics. Second, the unresolved trajectories inherit their distribution from the conditional posterior, avoiding direct Monte Carlo exploration of the full unresolved-state space while preserving the correct temporal correlations.

	\begin{remark}[Discrete-time propagation in CGNS]
		
		A key advantage of the CGNS framework is that it lets us update the conditional posterior statistics only at the observation times, without resolving the continuous dynamics on much smaller internal time steps. That is, instead of integrating these equations with a small internal step required for integrating~\eqref{eq:forecast_X} $\delta t \ll \Delta t_n$, we propagate the conditional statistics directly from $t_n$ to $t_{n+1}$ using the state transition map. Thus, in CGNS, the natural propagation step is simply the observation interval $\Delta t_n = t_{n+1} - t_n$, avoiding the need for finer time discretization~\cite{chen2025modeling}.
	\end{remark}

	\begin{remark}[Forecast covariance evolution is conditionally deterministic]
		The forecast covariance ODE using \eqref{eq:cgns_cov} depends on $\bo X^{(i)}(t)$ only through the model coefficients $\mathbf{A}_1(\bo X^{(i)}, t)$ and $\boldsymbol{\Sigma}_Y(\bo X^{(i)}, t)$, and not through the innovation process appearing in \eqref{eq:cgns_mean}. It can therefore be integrated in parallel with \eqref{eq:forecast_X} indepndent of the assimilation process. By contrast, the mean equation \eqref{eq:cgns_mean} depends explicitly on the observation increment $d\bo X^{(i)}$ and must be integrated separately for each ensemble trajectory.
	\end{remark}
	
	\paragraph{Stage 3: Augmented forecast ensemble}
	The forecast step produces the augmented forecast ensemble
	\begin{equation}
		\left\{\mathbf{V}_{f,k}^{(i)}\right\}_{i=1}^N
		= \left\{\left(\bo X_{f,k}^{(i)},\,
		\boldsymbol{\mu}_{f,k}^{(i)}\right)\right\}_{i=1}^N,
		\label{eq:augmented_forecast}
	\end{equation}
	with forecast covariances $\{\mathbf{R}_{f,k}^{(i)}\}_{i=1}^N$ retained separately for the latent sampling stage (Section~\ref{sec:sampling}). The marginal predictive distribution of $\bo Y_k$ implied by the forecast ensemble is the Gaussian mixture
	\begin{equation}
		p\!\left(\bo Y_k \mid \mathbf{Z}_{0:k-1}\right) \approx
		\frac{1}{N}\sum_{i=1}^{N}
		\mathcal{N}\!\left(\boldsymbol{\mu}_{f,k}^{(i)},\,
		\mathbf{R}_{f,k}^{(i)}\right),
		\label{eq:forecast_mixture}
	\end{equation}
	which deploys the diversity of the forecast means $\{\boldsymbol{\mu}_{f,k}^{(i)}\}$ and the nonlinear dependence of \eqref{eq:cgns_mean} on $\bo X^{(i)}(t)$.
	
	\subsection{Analysis Step: EnKF Update on the Augmented State}
	\label{sec:enkf_analysis}

	EnKF analysis step updates the augmented forecast ensemble $\{\mathbf{V}_{f,k}^{(i)}\}_{i=1}^N$ toward the posterior as:
	\begin{align}
		\mathbf{V}_{a,k}^{(i)} &= \mathbf{V}_{f,k}^{(i)}
		+ \mathbf{K}_k\!\left(\mathbf{Z}_k
		- \mathbf{H}\mathbf{V}_{f,k}^{(i)}
		+ \boldsymbol{\epsilon}_k^{(i)}\right),
		\label{eq:enkf_update} \\
		\mathbf{K}_k &= \mathbf{P}_{f,k}{\mathbf{H}}^\top
		\!\left({\mathbf{H}}\mathbf{P}_{f,k}
		\mathbf{H}^\top +
		\mathbf{R}_{\mathrm{obs}}\right)^{-1},
		\label{eq:enkf_gain}
	\end{align}
	where $\boldsymbol{\epsilon}_k^{(i)} \sim \mathcal{N}(0,\mathbf{R}_{\mathrm{obs}})$ are independent observation perturbations introduced to maintain ensemble
	spread~\cite{burgers1998analysis}, and $\mathbf{P}_{f,k}$ is the empirical forecast covariance of the augmented ensemble:
	\begin{equation}
		\mathbf{P}_{f,k} = \frac{1}{N-1}\sum_{i=1}^{N}
		\left(\mathbf{V}_{f,k}^{(i)} - \bar{\mathbf{V}}_{f,k}\right)
		\left(\mathbf{V}_{f,k}^{(i)} - \bar{\mathbf{V}}_{f,k}\right)^\top,
		\quad \bar{\mathbf{V}}_{f,k} = \frac{1}{N}\sum_{i=1}^{N}
		\mathbf{V}_{f,k}^{(i)}.
		\label{eq:enkf_cov}
	\end{equation}
	The observation operator $\bo H$ remains unchanged since it is applied only to $\bo X^{(i)}$. The EnKF update \eqref{eq:enkf_update} yields the analysis augmented ensemble
	\begin{equation}
		\left\{\mathbf{V}_{a,k}^{(i)}\right\}_{i=1}^N =
		\left\{\left(\bo X_{a,k}^{(i)},\,
		\boldsymbol{\mu}_{a,k}^{(i)}\right)\right\}_{i=1}^N,
		\label{eq:enkf_analysis_ensemble}
	\end{equation}
	which delivers the joint posterior of the augmented state. Although the augmented-state EnKF provides a computationally efficient posterior approximation, the resulting distribution remains largely Gaussian and therefore cannot accurately capture strongly non-Gaussian statistics. The next section introduces a residual score operator that corrects this baseline posterior and drives the ensemble toward the target non-Gaussian distribution.
	
	\subsection{CGNS Surrogates for General Nonlinear Systems}
	
	The $\Omega$ framework does not require the true dynamics to exactly satisfy the CGNS structure. The role of the CGNS is to provide a tractable baseline approximation with closed-form conditional distributions for the unresolved variables. The subsequent score correction compensates for the discrepancy between this surrogate and the true posterior.
	
	When a natural CGNS representation is unavailable, several surrogate constructions are possible. A simple approach is to linearize the latent dynamics and absorb unresolved nonlinear effects into an inflation noise term. More accurate approximations can be obtained through local polynomial expansions or sensitivity-based linearizations~\cite{majda2018model}. For highly nonlinear systems, Koopman-based lifting methods such as the Conditional Gaussian Koopman Network (CGKN)~\cite{chen2025cgkn, chen2025modeling} provide a systematic framework for constructing CGNS surrogates in a higher-dimensional feature space.
	
	Theorem~\ref{thm:n2n_consistency} quantifies the effect of the surrogate approximation. A more accurate surrogate reduces the score discrepancy between the baseline and target posterior, thereby reducing the correction required from the residual operator and accelerating the convergence of the Langevin refinement.


		
		
	
	\section{$\Omega$: Residual Operator Learning for Non-Gaussian Data Assimilation}
	\label{sec:joint_score}
	
	This section introduces the $\Omega$ framework for non-Gaussian data assimilation. Figure~\ref{fig:overview} contains a summary of the methodology. The main idea is to reinterpret the EnKF-CGNS analysis step as a Gaussian score approximation and then learn a nonlinear residual score that corrects the discrepancy between this Gaussian baseline and the true posterior. The learned score is used in an annealed Langevin analysis step on the augmented state $\mathbf{V}_k=(\bo X_k,\boldsymbol{\mu}_k)$, where $\boldsymbol{\mu}_k$ is the CGNS conditional mean of the latent variable $\bo Y_k$. After the augmented state is corrected, the latent posterior is recovered analytically as a Gaussian mixture.
	
	\begin{figure}[H]
		\centering
		\includegraphics[width=\linewidth]{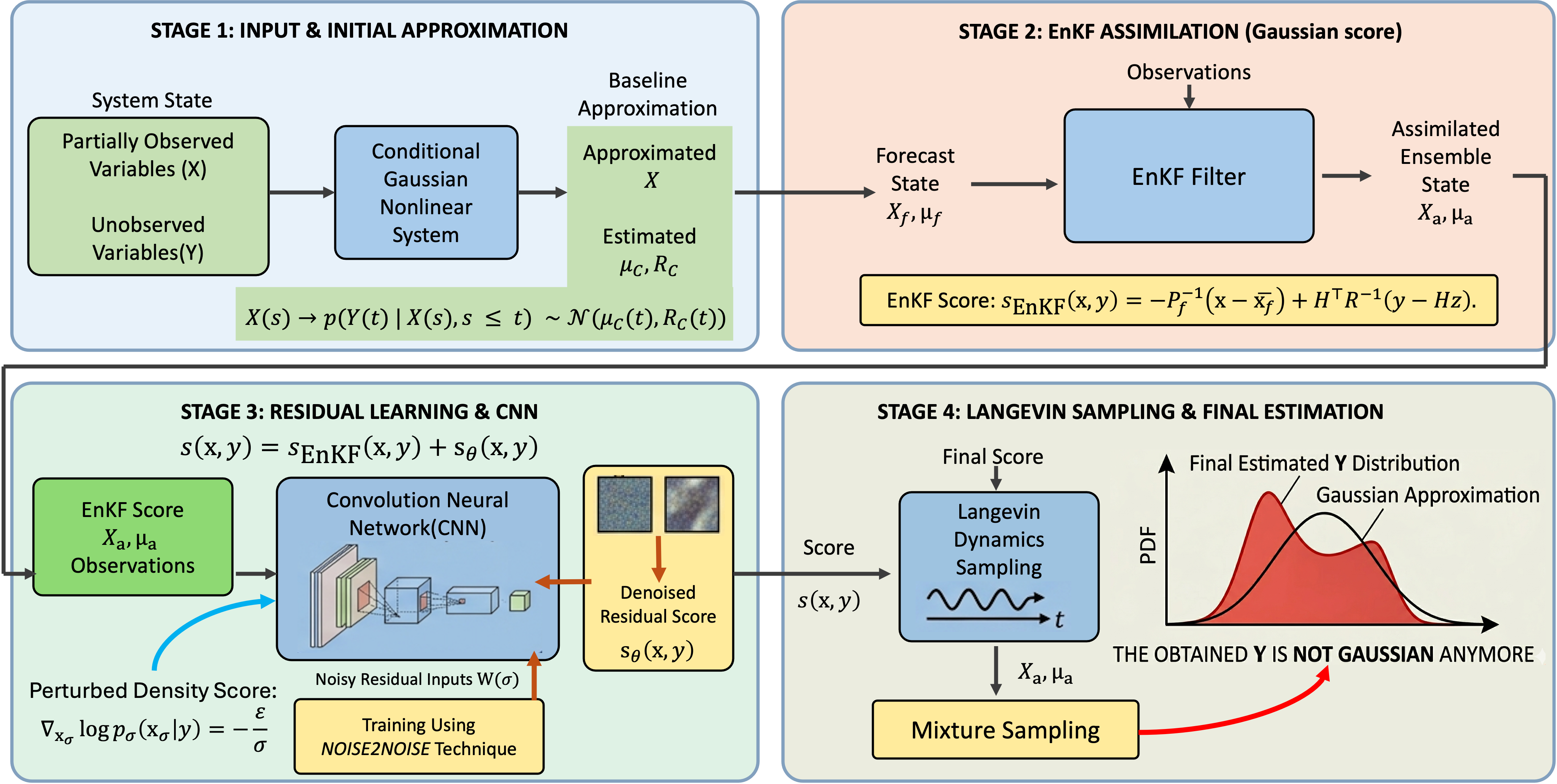}
		\caption{Overview of the $\Omega$ framework. The EnKF-CGNS forecast
			produces an augmented ensemble
			$\{\mathbf{V}_{f,k}^{(i)}\}
			=\{(\bo X_{f,k}^{(i)},\boldsymbol{\mu}_{f,k}^{(i)})\}$.
			A residual score operator $\mathbf{s}_\theta$ trained through
			Noise2Noise denoising score matching corrects the augmented state
			by annealed Langevin MCMC. The corrected conditional means
			$\{\boldsymbol{\mu}_{a,k}^{(i)}\}$ then parametrize Gaussian
			conditionals from which latent samples are drawn analytically,
			producing a non-Gaussian Gaussian-mixture posterior for $\bo Y_k$.}
		\label{fig:overview}
	\end{figure}

	\subsection{Gaussian Baseline Score from the EnKF-CGNS}
	\label{sec:gaussian_baseline}
	
	The goal of the analysis step is to correct the augmented forecast ensemble $\{\mathbf{V}_{f,k}^{(i)}\}_{i=1}^N$ toward the posterior distribution
	\begin{equation}
		p(\mathbf{V}_k \mid \mathbf{Z}_{0:k})
		\propto
		p(\mathbf{Z}_k \mid \mathbf{V}_k)\,
		p(\mathbf{V}_k \mid \mathbf{Z}_{0:k-1}).
		\label{eq:augmented_bayes}
	\end{equation}
	Rather than viewing the EnKF update only as a covariance-based correction, we reinterpret it as an approximation of the posterior score on the augmented state. Taking the gradient of the logarithm of \eqref{eq:augmented_bayes} gives
	\begin{equation}
		\underbrace{\nabla_{\mathbf{V}_k} \log p\!\left(\mathbf{V}_k
			\mid \mathbf{Z}_{0:k}\right)}_{\mathbf{s}_{\mathrm{EnKF},k}
			(\mathbf{V}_k)}
		= \underbrace{\nabla_{\mathbf{V}_k} \log p\!\left(\mathbf{V}_k
			\mid \mathbf{Z}_{0:k-1}\right)}_{\mathbf{s}_{\mathrm{prior},k}
			(\mathbf{V}_k)}
		+ \underbrace{\nabla_{\mathbf{V}_k} \log p\!\left(\mathbf{Z}_k
			\mid \mathbf{V}_k\right)}_{\mathbf{s}_{\mathrm{lik},k}
			(\mathbf{V}_k)}.
		\label{eq:score_decomp}
	\end{equation}
	
	Since observations act only on $\bo X_k$, the likelihood score on the augmented state $\mathbf{V}_k=(\bo X_k,\boldsymbol{\mu}_k)$ is
	\begin{equation}
		\mathbf{s}_{\mathrm{lik},k}(\mathbf{V}_k)
		=
		\begin{pmatrix}
			\mathbf{H}_X^\top \mathbf{R}_{\mathrm{obs}}^{-1}
			(\mathbf{Z}_k-\mathbf{H}_X\bo X_k)
			\\
			\mathbf{0}
		\end{pmatrix}.
		\label{eq:lik_score}
	\end{equation}
	The EnKF-CGNS forecast approximates the prior distribution of the augmented state by
	\begin{equation}
		p(\mathbf{V}_k \mid \mathbf{Z}_{0:k-1})
		\approx
		\mathcal{N}(\bar{\mathbf{V}}_{f,k},\mathbf{P}_{f,k}),
		\label{eq:forecast_gaussian_aug}
	\end{equation}
	where $\bar{\mathbf{V}}_{f,k}$ and $\mathbf{P}_{f,k}$ are the empirical forecast mean and covariance. Therefore the Gaussian baseline posterior score is
	\begin{equation}
		\mathbf{s}_{\mathrm{EnKF},k}(\mathbf{V}_k)
		=
		-\mathbf{P}_{f,k}^{-1}
		(\mathbf{V}_k-\bar{\mathbf{V}}_{f,k})
		+
		\mathbf{H}^\top \mathbf{R}_{\mathrm{obs}}^{-1}
		(\mathbf{Z}_k-\mathbf{H}\mathbf{V}_k),
		\label{eq:enkf_score}
	\end{equation}
	where $\mathbf{H}
	=
	\begin{pmatrix}
		\mathbf{H}_X & \mathbf{0}
	\end{pmatrix}$.
	Writing the forecast covariance blockwise as
	\begin{equation}
		\mathbf{P}_{f,k}
		=
		\begin{pmatrix}
			\mathbf{P}_{f,k}^{XX}
			&
			\mathbf{P}_{f,k}^{X\mu}
			\\
			\mathbf{P}_{f,k}^{\mu X}
			&
			\mathbf{P}_{f,k}^{\mu\mu}
		\end{pmatrix},
		\label{eq:P_blockwise}
	\end{equation}
	shows that the Gaussian baseline propagates observational information into $\boldsymbol{\mu}_k$ only through the linear cross-covariance blocks $\mathbf{P}_{f,k}^{X\mu}$ and $\mathbf{P}_{f,k}^{\mu X}$. This is the key limitation that motivates the nonlinear residual operator below.

	\subsection{Residual Operator Model}
	\label{sec:residual_model}
	
	When the true posterior is non-Gaussian, the affine EnKF score \eqref{eq:enkf_score} cannot represent higher-order posterior corrections. The role of the residual operator is therefore not to replace the EnKF-CGNS baseline, but to learn the missing nonlinear score component. We decompose the posterior score as
	\begin{equation}
		\mathbf{s}_{\mathrm{post},k}(\mathbf{V}_k)
		\approx
		\mathbf{s}_{\mathrm{EnKF},k}(\mathbf{V}_k)
		+
		\mathbf{s}_\theta(\mathbf{V}_k,\sigma,\mathcal{C}_k),
		\label{eq:score_total}
	\end{equation}
	where $\sigma>0$ is the score-matching noise level and
	$\mathbf{s}_\theta$ is a neural residual score operator.
	
	The residual operator is conditioned on the local filtering context
	\begin{equation}
		\mathcal{C}_k
		=
		\bigl(
		\bar{\mathbf{V}}_{f,k},
		\mathrm{diag}(\mathbf{P}_{f,k}),
		\mathbf{Z}_k
		\bigr).
		\label{eq:context}
	\end{equation}
	More generally, $\mathcal{C}_k$ may include additional summary statistics of the forecast ensemble, such as selected covariance entries or low-dimensional features of $\mathbf{P}_{f,k}$. The residual score acts on the full augmented state and is decomposed as
	\begin{equation}
		\mathbf{s}_\theta(\mathbf{V}_k,\sigma,\mathcal{C}_k)
		=
		\begin{pmatrix}
			\mathbf{s}_\theta^X(\mathbf{V}_k,\sigma,\mathcal{C}_k)
			\\
			\mathbf{s}_\theta^\mu(\mathbf{V}_k,\sigma,\mathcal{C}_k)
		\end{pmatrix}.
		\label{eq:score_components}
	\end{equation}
	The component $\mathbf{s}_\theta^X$ learns non-Gaussian corrections in the observed subspace. The component $\mathbf{s}_\theta^\mu$ is more important: it corrects the CGNS conditional mean beyond what can be captured by the linear cross-covariance in the EnKF baseline. Thus, $\Omega$ propagates non-Gaussian observational information into the latent conditional distribution through the corrected means $\boldsymbol{\mu}_{a,k}^{(i)}$.

	\subsection{Unsupervised Training via Noise2Noise Denoising Score Matching}
	\label{sec:training}
	
	The residual operator is trained without access to samples from the true posterior. Instead, we adopt denoising score matching (DSM)~\cite{vincent2011connection} on the augmented state $\mathbf{V}_k=(\bo X_k,\boldsymbol{\mu}_k)$. For a noise level $\sigma>0$, define the perturbed augmented state
	\begin{equation}
		\tilde{\mathbf{V}}_k
		=
		\mathbf{V}_k
		+
		\sigma\boldsymbol{\epsilon},
		\qquad
		\boldsymbol{\epsilon}
		\sim
		\mathcal{N}
		\left(
		\mathbf{0},
		\mathbf{I}_{d_X+d_Y}
		\right),
		\label{eq:perturbed_state}
	\end{equation}
	where $\mathbf{V}_k$ is drawn from the forecast augmented ensemble \eqref{eq:augmented_forecast}. The perturbation kernel is $p_\sigma(\tilde{\mathbf{V}}_k\mid\mathbf{V}_k)
	=
	\mathcal{N}
	\left(
	\mathbf{V}_k,
	\sigma^2\mathbf{I}
	\right)$,
	whose score is
	\begin{equation}
		\nabla_{\tilde{\mathbf{V}}_k}
		\log
		p_\sigma
		(\tilde{\mathbf{V}}_k\mid\mathbf{V}_k)
		=
		-\frac{\boldsymbol{\epsilon}}{\sigma}.
		\label{eq:kernel_score}
	\end{equation}
	Following the standard DSM formulation, the residual operator is
	trained by minimizing
	\begin{equation}
		\mathcal{L}(\theta)
		=
		\mathbb{E}_{\mathbf{V}_k,\boldsymbol{\epsilon},\sigma}
		\left[
		\sigma^2
		\left\|
		\mathbf{s}_{\mathrm{EnKF},k}
		(\tilde{\mathbf{V}}_k)
		+
		\mathbf{s}_\theta
		(\tilde{\mathbf{V}}_k,\sigma,\mathcal{C}_k)
		+
		\frac{\boldsymbol{\epsilon}}{\sigma}
		\right\|^2
		\right].
		\label{eq:dsm_loss}
	\end{equation}
	The expectation is taken over ensemble samples $\mathbf{V}_k$, Gaussian perturbations $\boldsymbol{\epsilon}$, and noise levels $\sigma\sim p(\sigma)$. The factor $\sigma^2$ follows the noise-conditioned score network (NCSN) construction~\cite{song2020improved} and balances contributions from different noise scales. In practice, we add a small regularization term $\lambda_r\|\mathbf{s}_\theta\|^2$ to stabilize training and prevent excessive score magnitudes.
	
	Theorem~\ref{thm:dsm_consistency} establishes the consistency of \eqref{eq:dsm_loss} in an idealized setting where samples from the true posterior are available. The theorem provides the foundation for the subsequent Noise2Noise analysis, where training is performed using the EnKF-CGNS surrogate rather than the true posterior.
	
	\begin{theorem}[Ideal DSM Consistency on the Augmented State]
		\label{thm:dsm_consistency}
		
		Let
		$p_{\mathrm{aug}}
		(\mathbf{V}_k\mid\mathbf{Z}_{0:k})$
		be a continuously differentiable density on
		$\mathbb{R}^{d_X+d_Y}$ with finite Fisher information
		\begin{equation}
			\mathcal{I}(p_{\mathrm{aug}})
			:=
			\mathbb{E}_{p_{\mathrm{aug}}}
			\left[
			\left\|
			\nabla_{\mathbf{V}_k}
			\log
			p_{\mathrm{aug}}
			(\mathbf{V}_k\mid\mathbf{Z}_{0:k})
			\right\|^2
			\right]
			<
			\infty.
			\label{eq:fisher_info}
		\end{equation}
		Let
		\[
		p_\sigma(\tilde{\mathbf{V}}_k\mid\mathbf{V}_k)
		=
		\mathcal{N}
		(\mathbf{V}_k,\sigma^2\mathbf{I})
		\]
		and define the smoothed marginal
		\begin{equation}
			p_\sigma(\tilde{\mathbf{V}}_k)
			=
			\int
			p_\sigma
			(\tilde{\mathbf{V}}_k\mid\mathbf{V}_k)
			\,
			p_{\mathrm{aug}}
			(\mathbf{V}_k\mid\mathbf{Z}_{0:k})
			\,
			d\mathbf{V}_k.
			\label{eq:smoothed_marginal}
		\end{equation}
		Suppose
		\[
		\mathbf{s}_{\mathrm{EnKF},k}
		+
		\mathbf{s}_\theta
		\in
		L^2(p_\sigma).
		\]
		Then:
		
		\begin{enumerate}[(i)]
			
			\item The DSM objective satisfies
			\begin{equation}
				\mathcal{L}(\theta)
				=
				\mathbb{E}_{p_\sigma}
				\left[
				\sigma^2
				\left\|
				\mathbf{s}_{\mathrm{EnKF},k}
				+
				\mathbf{s}_\theta
				-
				\nabla\log p_\sigma
				\right\|^2
				\right]
				+
				C,
				\label{eq:dsm_identity}
			\end{equation}
			where
			\begin{equation}
				C
				=
				\mathbb{E}_{p_\sigma}
				\left[
				\sigma^2
				\|\nabla\log p_\sigma\|^2
				\right]
				\label{eq:dsm_constant}
			\end{equation}
			is finite and independent of $\theta$.
			
			\item The unique minimizer satisfies
			\begin{equation}
				\mathbf{s}_{\mathrm{EnKF},k}
				+
				\mathbf{s}_{\theta^*}
				=
				\nabla\log p_\sigma
				\qquad
				p_\sigma\text{-a.e.}
				\label{eq:dsm_minimizer}
			\end{equation}
			
			\item As $\sigma\to0$,
			\begin{equation}
				\nabla\log p_\sigma
				\to
				\nabla\log p_{\mathrm{aug}}
				(\cdot\mid\mathbf{Z}_{0:k})
				\quad
				\text{in }L^2(p_{\mathrm{aug}}),
				\label{eq:score_convergence}
			\end{equation}
			and consequently
			\begin{equation}
				\mathbf{s}_{\theta^*}
				\to
				\nabla\log p_{\mathrm{aug}}
				(\cdot\mid\mathbf{Z}_{0:k})
				-
				\mathbf{s}_{\mathrm{EnKF},k}
				\quad
				\text{in }L^2(p_{\mathrm{aug}}).
				\label{eq:residual_convergence}
			\end{equation}
			
		\end{enumerate}
	\end{theorem}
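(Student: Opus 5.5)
The plan is the classical Vincent-type argument, applied at a fixed noise level $\sigma$ and with the total score $\mathbf{s}:=\mathbf{s}_{\mathrm{EnKF},k}+\mathbf{s}_\theta$ as the trainable field. Here $\mathbf{V}_k\sim p_{\mathrm{aug}}$ and $\tilde{\mathbf{V}}_k=\mathbf{V}_k+\sigma\boldsymbol{\epsilon}$.

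\textbf{Step 1: expand the square in \eqref{eq:dsm_loss}.} Writing $-\boldsymbol{\epsilon}/\sigma=\nabla_{\tilde{\mathbf{V}}}\log p_\sigma(\tilde{\mathbf{V}}_k\mid\mathbf{V}_k)$ by \eqref{eq:kernel_score}, the loss splits into three pieces:
\begin{equation*}
\sigma^2\,\mathbb{E}\|\mathbf{s}(\tilde{\mathbf{V}}_k)\|^2
-2\sigma^2\,\mathbb{E}\bigl\langle \mathbf{s}(\tilde{\mathbf{V}}_k),\nabla_{\tilde{\mathbf{V}}}\log p_\sigma(\tilde{\mathbf{V}}_k\mid\mathbf{V}_k)\bigr\rangle
+\mathbb{E}\|\boldsymbol{\epsilon}\|^2 .
\end{equation*}
The first term is an expectation under $p_\sigma$ and is finite by the hypothesis $\mathbf{s}\in L^2(p_\sigma)$.

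\textbf{Step 2: the cross term.} I would condition on $\tilde{\mathbf{V}}_k$ and use the identity
\begin{equation*}
\mathbb{E}\bigl[\nabla_{\tilde{\mathbf{V}}}\log p_\sigma(\tilde{\mathbf{V}}\mid\mathbf{V})\,\big|\,\tilde{\mathbf{V}}\bigr]=\nabla\log p_\sigma(\tilde{\mathbf{V}}).
\end{equation*}
This follows by differentiating \eqref{eq:smoothed_marginal} under the integral. The exchange is justified because the Gaussian kernel and all its derivatives are bounded and integrable uniformly on compacts, and $p_\sigma>0$ is smooth. Consequently the cross term equals $-2\sigma^2\,\mathbb{E}_{p_\sigma}\langle\mathbf{s},\nabla\log p_\sigma\rangle$.

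Completing the square then gives \eqref{eq:dsm_identity} with
\begin{equation*}
C=\mathbb{E}\|\boldsymbol{\epsilon}\|^2-\sigma^2\,\mathbb{E}_{p_\sigma}\|\nabla\log p_\sigma\|^2 .
\end{equation*}
This constant is $\theta$-independent. It differs from the stated \eqref{eq:dsm_constant} by an additive term, and I would note that the essential claim is only finiteness and $\theta$-independence. Finiteness follows from Jensen applied to the conditional-expectation identity:
\begin{equation*}
\sigma^2\,\mathbb{E}_{p_\sigma}\|\nabla\log p_\sigma\|^2\le \mathbb{E}\|\boldsymbol{\epsilon}\|^2=d_X+d_Y .
\end{equation*}
Alternatively, finiteness follows from the score representation $\nabla\log p_\sigma=\mathbb{E}[\nabla\log p_{\mathrm{aug}}(\mathbf{V})\mid\tilde{\mathbf{V}}]$ together with the Fisher bound \eqref{eq:fisher_info}.

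\textbf{Step 3: part (ii).} This is immediate from (i). The first term of \eqref{eq:dsm_identity} is nonnegative and vanishes iff $\mathbf{s}=\nabla\log p_\sigma$ $p_\sigma$-a.e. Uniqueness holds as an element of $L^2(p_\sigma)$, assuming the model class is rich enough to contain the minimizer, as in the idealized setting of the theorem.

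\textbf{Step 4: part (iii).} This is the main obstacle. I would use the translation form
\begin{equation*}
p_\sigma=p_{\mathrm{aug}}*\varphi_\sigma ,
\end{equation*}
where $\varphi_\sigma$ is the Gaussian kernel. Integrating by parts, which uses that $p_{\mathrm{aug}}\in C^1$ with finite Fisher information, gives
\begin{equation*}
\nabla\log p_\sigma(\tilde{\mathbf{V}})=\mathbb{E}\bigl[\nabla\log p_{\mathrm{aug}}(\mathbf{V})\mid\tilde{\mathbf{V}}\bigr].
\end{equation*}
This yields uniform bounds in $L^2(p_\sigma)$ and the monotonicity $\mathcal{I}(p_\sigma)\le\mathcal{I}(p_{\mathrm{aug}})$.

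Convergence then proceeds in three moves. First, $\nabla p_\sigma=(\nabla p_{\mathrm{aug}})*\varphi_\sigma\to\nabla p_{\mathrm{aug}}$ in $L^1$, and $p_\sigma\to p_{\mathrm{aug}}$ locally uniformly by continuity, so the scores converge pointwise where $p_{\mathrm{aug}}>0$. Second, I would upgrade to $L^2(p_{\mathrm{aug}})$ via a Vitali/Scheffé-type argument: combine $\mathcal{I}(p_\sigma)\to\mathcal{I}(p_{\mathrm{aug}})$ (lower semicontinuity plus the monotone bound) with weak convergence of $\sqrt{p_\sigma}\,\nabla\log p_\sigma$ in $L^2(d\mathbf{V})$. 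Third, I would transfer the result from $L^2(p_\sigma)$-type norms to $L^2(p_{\mathrm{aug}})$. This transfer is the delicate point, and I would handle it by working with $\sqrt{p}\,\nabla\log p=2\nabla\sqrt{p}$, where convergence in $L^2(d\mathbf{V})$ is natural.

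Finally, \eqref{eq:residual_convergence} follows by subtracting the fixed affine field $\mathbf{s}_{\mathrm{EnKF},k}$, which lies in $L^2(p_{\mathrm{aug}})$ provided $p_{\mathrm{aug}}$ has second moments. I would add that assumption implicitly, or deduce it from the hypothesis $\mathbf{s}_{\mathrm{EnKF},k}+\mathbf{s}_\theta\in L^2(p_\sigma)$.
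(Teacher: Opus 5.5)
Your parts (i)--(ii) use the same Vincent-type argument as the paper: the identity $\mathbb{E}[-\boldsymbol{\epsilon}/\sigma\mid\tilde{\mathbf V}_k]=\nabla\log p_\sigma(\tilde{\mathbf V}_k)$, a vanishing cross term, and completing the square. Your constant is the correct one. In the paper's expansion \eqref{eq:expand}, the last summand should be $\sigma^2\mathbb{E}\|\nabla\log p_\sigma(\tilde{\mathbf V}_k)+\boldsymbol{\epsilon}/\sigma\|^2=(d_X+d_Y)-\sigma^2\mathcal I(p_\sigma)$, not $\sigma^2\mathbb{E}_{p_\sigma}\|\nabla\log p_\sigma\|^2$, so \eqref{eq:dsm_constant} is misstated. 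What matters is only $\theta$-independence and finiteness, and your Jensen bound gives finiteness without \eqref{eq:fisher_info}.

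The gap is in Step 4, at exactly the transfer you call delicate. The paper does not prove (iii) either; it cites the literature. But your proposed mechanism does not close it. Everything you assemble controls quantities weighted by $p_\sigma$: pointwise score convergence on $\{p_{\mathrm{aug}}>0\}$, $\mathcal I(p_\sigma)\to\mathcal I(p_{\mathrm{aug}})$, and hence $2\nabla\sqrt{p_\sigma}\to 2\nabla\sqrt{p_{\mathrm{aug}}}$ in $L^2(d\mathbf V)$. The target, however, is
\[
\int p_{\mathrm{aug}}\,\bigl\|\nabla\log p_\sigma-\nabla\log p_{\mathrm{aug}}\bigr\|^2\,d\mathbf V
=\int\Bigl\|\sqrt{p_{\mathrm{aug}}/p_\sigma}\;2\nabla\sqrt{p_\sigma}-2\nabla\sqrt{p_{\mathrm{aug}}}\Bigr\|^2\,d\mathbf V .
\]
The factor $\sqrt{p_{\mathrm{aug}}/p_\sigma}$ need not be bounded; it is large on features of $p_{\mathrm{aug}}$ narrower than $\sigma$. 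So rewriting in terms of $\nabla\sqrt p$ does not remove the change of measure. A Vitali argument would need uniform integrability of $p_{\mathrm{aug}}\|\nabla\log p_\sigma\|^2$, and nothing in your plan supplies it, since $\mathcal I(p_\sigma)$ only controls $\int p_\sigma\|\nabla\log p_\sigma\|^2$.

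One way to close it starts from your own representation $\nabla\log p_\sigma=K_\sigma(\nabla\log p_{\mathrm{aug}})$, with $K_\sigma f(\mathbf v):=\mathbb{E}[f(\mathbf V)\mid\tilde{\mathbf V}=\mathbf v]$. Jensen gives
\[
\int p_{\mathrm{aug}}\,\|K_\sigma f\|^2\,d\mathbf V\le\int\|f\|^2\,p_{\mathrm{aug}}\,h_\sigma\,d\mathbf V,\qquad h_\sigma:=\frac{p_{\mathrm{aug}}}{p_\sigma}*\varphi_\sigma ,
\]
where $\varphi_\sigma$ is the Gaussian kernel. Moreover $h_\sigma\le C_d$ uniformly in $\sigma$, with $d=d_X+d_Y$:
\begin{enumerate}
\item Bound $p_\sigma(\mathbf x)\ge(2\pi\sigma^2)^{-d/2}e^{-1/2}\int_{B(\mathbf x,\sigma)}p_{\mathrm{aug}}$.
\item Split the integral defining $h_\sigma(\mathbf y)$ into shells $k\sigma\le\|\mathbf x-\mathbf y\|<(k+1)\sigma$.
\item Cover each shell by $O((k+1)^d)$ balls $B'$ of radius $\sigma/2$. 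Since $B(\mathbf x,\sigma)\supset B'$ for $\mathbf x\in B'$, each piece contributes at most $e^{1/2}e^{-k^2/2}$.
\end{enumerate}
Thus $K_\sigma$ is uniformly bounded on $L^2(p_{\mathrm{aug}})$. For continuous compactly supported $g$, $K_\sigma g\to g$ boundedly and pointwise on $\{p_{\mathrm{aug}}>0\}$, by continuity of $p_{\mathrm{aug}}$, hence in $L^2(p_{\mathrm{aug}})$. Density then gives $K_\sigma f\to f$ for every $f\in L^2(p_{\mathrm{aug}})$, in particular for $f=\nabla\log p_{\mathrm{aug}}$.

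Finally, \eqref{eq:residual_convergence} needs no moment assumption. Since $p_\sigma>0$, the identity in (ii) holds Lebesgue-a.e., hence $p_{\mathrm{aug}}$-a.e., so the residual error is exactly $\nabla\log p_\sigma-\nabla\log p_{\mathrm{aug}}$.
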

	
	\begin{proof}[Proof sketch]
		The key step is Stein's identity applied to the perturbed state
		\[
		\tilde{\mathbf{V}}_k
		=
		\mathbf{V}_k
		+
		\sigma\boldsymbol{\epsilon}.
		\]
		Conditioning on $\tilde{\mathbf{V}}_k$ gives
		\[
		\mathbb{E}
		\left[
		-\frac{\boldsymbol{\epsilon}}{\sigma}
		\,\middle|\,
		\tilde{\mathbf{V}}_k
		\right]
		=
		\nabla\log p_\sigma(\tilde{\mathbf{V}}_k),
		\]
		which converts the DSM objective into an
		$L^2(p_\sigma)$ distance between the total learned score and the
		score of the smoothed posterior. The minimizer therefore coincides
		with the posterior score of the smoothed distribution. The small-noise
		limit follows from the continuity of Gaussian smoothing under finite
		Fisher information. The full proof is given in~\ref{app:proof_dsm}.
	\end{proof}
	
	Theorem~\ref{thm:dsm_consistency} shows that DSM provides a rigorous surrogate for learning the posterior score on the augmented state. The training objective is equivalent to matching the score of the Gaussian-smoothed posterior, the minimizer is unique, and the learned residual converges to the true non-Gaussian correction as the noise level tends to zero. This ideal result serves as the foundation for the Noise2Noise analysis in Theorem~\ref{thm:n2n_consistency}, where the true posterior samples are replaced by samples from the EnKF-CGNS surrogate.

	\begin{theorem}[Noise2Noise Surrogate Consistency]
		\label{thm:n2n_consistency}
		
		Let $p_{\mathrm{aug}}(\mathbf{V}_k\mid\mathbf{Z}_{0:k})$ denote the true augmented posterior, and let $q_{\mathrm{aug}}^{(N)} (\mathbf{V}_k\mid\mathbf{Z}_{0:k})$ denote the surrogate distribution induced by the EnKF-CGNS forecast ensemble \eqref{eq:augmented_forecast}. We model this surrogate as the empirical Gaussian mixture
		\begin{equation}
			q_{\mathrm{aug}}^{(N)}
			(\mathbf{V}_k\mid\mathbf{Z}_{0:k})
			=
			\frac{1}{N}
			\sum_{i=1}^{N}
			\mathcal{N}
			\left(
			\mathbf{V}_{f,k}^{(i)},
			\mathbf{P}_{f,k}
			\right),
			\label{eq:surrogate_dist}
		\end{equation}
		where $\mathbf{P}_{f,k}$ is the shared empirical forecast covariance \eqref{eq:enkf_cov}. Define the Gaussian-smoothed distributions
		\begin{align}
			p_\sigma(\tilde{\mathbf{V}}_k)
			&=
			\int
			\mathcal{N}
			\left(
			\tilde{\mathbf{V}}_k;
			\mathbf{V}_k,
			\sigma^2\mathbf{I}
			\right)
			p_{\mathrm{aug}}
			(\mathbf{V}_k\mid\mathbf{Z}_{0:k})
			d\mathbf{V}_k,
			\label{eq:true_smoothed}
			\\
			q_\sigma^{(N)}(\tilde{\mathbf{V}}_k)
			&=
			\int
			\mathcal{N}
			\left(
			\tilde{\mathbf{V}}_k;
			\mathbf{V}_k,
			\sigma^2\mathbf{I}
			\right)
			q_{\mathrm{aug}}^{(N)}
			(\mathbf{V}_k\mid\mathbf{Z}_{0:k})
			d\mathbf{V}_k.
			\label{eq:surrogate_smoothed}
		\end{align}
		Assume that $p_{\mathrm{aug}}$ and $q_{\mathrm{aug}}^{(N)}$ have finite Fisher information and that
		\[
		\mathbf{s}_{\mathrm{EnKF},k}
		+
		\mathbf{s}_\theta
		\in
		L^2(q_\sigma^{(N)}).
		\]
		For $\sigma>0$, both $p_\sigma$ and $q_\sigma^{(N)}$ are strictly positive on $\mathbb{R}^{d_X+d_Y}$ and are mutually absolutely continuous with respect to Lebesgue measure.
		Let
		\begin{equation}
			p_{\mathrm{CGNS}}
			(\mathbf{V}_k\mid\mathbf{Z}_{0:k})
			:=
			\lim_{N\to\infty}
			q_{\mathrm{aug}}^{(N)}
			(\mathbf{V}_k\mid\mathbf{Z}_{0:k})
			\label{eq:cgns_limit}
		\end{equation}
		denote the infinite-ensemble CGNS surrogate posterior, and let
		$p_{\mathrm{CGNS},\sigma}$ denote its Gaussian smoothing. When the true
		dynamics do not exactly satisfy the CGNS structure
		\eqref{eq:cgns_obs}--\eqref{eq:cgns_lat}, define the CGNS
		misspecification error
		\begin{equation}
			\varepsilon_{\mathrm{CGNS}}
			:=
			\left\|
			\nabla\log p_{\mathrm{CGNS},\sigma}
			-
			\nabla\log p_\sigma
			\right\|_{L^2(p_{\mathrm{aug}})}.
			\label{eq:cgns_misspec}
		\end{equation}
		Then the following statements hold.
		
		\begin{enumerate}[(i)]
			
			\item \textbf{Surrogate DSM identity.}
			The DSM objective trained on surrogate samples
			$\mathbf{V}_k\sim q_{\mathrm{aug}}^{(N)}$ satisfies
			\begin{equation}
				\mathcal{L}_q(\theta)
				=
				\mathbb{E}_{q_\sigma^{(N)}}
				\left[
				\sigma^2
				\left\|
				\mathbf{s}_{\mathrm{EnKF},k}(\tilde{\mathbf{V}}_k)
				+
				\mathbf{s}_\theta(\tilde{\mathbf{V}}_k)
				-
				\nabla_{\tilde{\mathbf{V}}_k}
				\log q_\sigma^{(N)}(\tilde{\mathbf{V}}_k)
				\right\|^2
				\right]
				+
				C_q,
				\label{eq:n2n_identity}
			\end{equation}
			where
			\begin{equation}
				C_q
				=
				\mathbb{E}_{q_\sigma^{(N)}}
				\left[
				\sigma^2
				\left\|
				\nabla\log q_\sigma^{(N)}(\tilde{\mathbf{V}}_k)
				\right\|^2
				\right]
				\label{eq:n2n_constant}
			\end{equation}
			is independent of $\theta$.
			
			\item \textbf{Minimizer characterization.}
			The unique minimizer $\theta^*$ of
			$\mathcal{L}_q(\theta)$ over $L^2(q_\sigma^{(N)})$ satisfies
			\begin{equation}
				\mathbf{s}_{\mathrm{EnKF},k}
				(\tilde{\mathbf{V}}_k)
				+
				\mathbf{s}_{\theta^*}
				(\tilde{\mathbf{V}}_k)
				=
				\nabla_{\tilde{\mathbf{V}}_k}
				\log q_\sigma^{(N)}(\tilde{\mathbf{V}}_k)
				\qquad
				q_\sigma^{(N)}\text{-a.e.}
				\label{eq:n2n_minimizer}
			\end{equation}
			
			\item \textbf{Two-source error decomposition.}
			Define the finite-ensemble sampling error
			\begin{equation}
				\varepsilon_{\mathrm{samp}}(N)
				:=
				\left\|
				\nabla\log q_\sigma^{(N)}
				-
				\nabla\log p_{\mathrm{CGNS},\sigma}
				\right\|_{L^2(p_{\mathrm{aug}})}.
				\label{eq:samp_error}
			\end{equation}
			Then the approximation error of the learned residual satisfies
			\begin{equation}
				\left\|
				\mathbf{s}_{\theta^*}
				-
				\left(
				\nabla\log p_\sigma
				-
				\mathbf{s}_{\mathrm{EnKF},k}
				\right)
				\right\|_{L^2(p_{\mathrm{aug}})}
				\leq
				\underbrace{
					\varepsilon_{\mathrm{samp}}(N)
				}_{\text{finite-ensemble sampling}}
				+
				\underbrace{
					\varepsilon_{\mathrm{CGNS}}
				}_{\text{structural misspecification}} .
				\label{eq:surrogate_error}
			\end{equation}
			
			\item \textbf{Elimination of sampling error as $N\to\infty$.}
			Under standard EnKF consistency conditions,
			$q_{\mathrm{aug}}^{(N)}$ converges to
			$p_{\mathrm{CGNS}}$ as $N\to\infty$. Under the regularity assumptions
			used in~\ref{app:n2n_proof}, this implies
			\begin{equation}
				\varepsilon_{\mathrm{samp}}(N)
				\to
				0
				\qquad
				\text{as }N\to\infty.
				\label{eq:sampling_error_vanish}
			\end{equation}
			Consequently,
			\begin{equation}
				\lim_{N\to\infty}
				\left\|
				\mathbf{s}_{\theta^*}
				-
				\left(
				\nabla\log p_\sigma
				-
				\mathbf{s}_{\mathrm{EnKF},k}
				\right)
				\right\|_{L^2(p_{\mathrm{aug}})}
				\leq
				\varepsilon_{\mathrm{CGNS}}.
				\label{eq:n2n_convergence}
			\end{equation}
			Thus the infinite-ensemble residual error is determined entirely by
			the structural mismatch between the CGNS surrogate and the true
			posterior.
			
			\item \textbf{Full convergence when CGNS is exact.}
			If the true system dynamics exactly satisfy the CGNS structure
			\eqref{eq:cgns_obs}--\eqref{eq:cgns_lat}, then
			$p_{\mathrm{CGNS}}=p_{\mathrm{aug}}$ and
			$\varepsilon_{\mathrm{CGNS}}=0$. Therefore
			\begin{equation}
				\mathbf{s}_{\theta^*}
				\to
				\nabla
				\log p_{\mathrm{aug}}
				(\cdot\mid\mathbf{Z}_{0:k})
				-
				\mathbf{s}_{\mathrm{EnKF},k}
				\qquad
				\text{in }L^2(p_{\mathrm{aug}})
				\label{eq:full_conv}
			\end{equation}
			as $N\to\infty$ and $\sigma\to0$.
			
		\end{enumerate}
	\end{theorem}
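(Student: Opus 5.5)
The plan is to reduce parts (i)--(ii) to Theorem~\ref{thm:dsm_consistency}, applied with $p_{\mathrm{aug}}$ replaced by the surrogate $q_{\mathrm{aug}}^{(N)}$. Then (iii)--(v) follow from a triangle inequality together with limit arguments.

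\textbf{Parts (i) and (ii).} The proof of Theorem~\ref{thm:dsm_consistency} only uses that the sampling density is $C^1$ with finite Fisher information and that the total score lies in $L^2$ of the smoothed density. The mixture \eqref{eq:surrogate_dist} is smooth. Its Fisher information is finite provided $\mathbf{P}_{f,k}$ is nonsingular; if the empirical covariance is rank-deficient (e.g.\ $N\le d_X+d_Y$), I will assume a small regularization. The smoothed density $q_\sigma^{(N)}$ is again a Gaussian mixture, now with covariance $\mathbf{P}_{f,k}+\sigma^2\mathbf{I}$. So it is strictly positive and its score is explicit.

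Stein's identity gives $\mathbb{E}[-\boldsymbol{\epsilon}/\sigma\mid\tilde{\mathbf{V}}_k]=\nabla\log q_\sigma^{(N)}(\tilde{\mathbf{V}}_k)$. Expanding the square in \eqref{eq:dsm_loss} then yields \eqref{eq:n2n_identity}, up to a $\theta$-independent remainder. That remainder is $\mathbb{E}\|\boldsymbol{\epsilon}\|^2-\sigma^2\mathbb{E}_{q_\sigma^{(N)}}\|\nabla\log q_\sigma^{(N)}\|^2$. I will identify it with $C_q$ exactly as the constant $C$ is identified in Theorem~\ref{thm:dsm_consistency}, and either way it does not affect minimization. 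Uniqueness in (ii) follows because the first term in \eqref{eq:n2n_identity} is a strictly convex squared $L^2(q_\sigma^{(N)})$ distance, minimized at zero.

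\textbf{Part (iii).} By (ii),
\[
\mathbf{s}_{\theta^*}-(\nabla\log p_\sigma-\mathbf{s}_{\mathrm{EnKF},k})=\nabla\log q_\sigma^{(N)}-\nabla\log p_\sigma
\]
holds $q_\sigma^{(N)}$-a.e. Since $q_\sigma^{(N)}$ is strictly positive, this means Lebesgue-a.e., hence also $p_{\mathrm{aug}}$-a.e. This transfer between measures is where mutual absolute continuity is used. Inserting $\pm\nabla\log p_{\mathrm{CGNS},\sigma}$ and applying Minkowski's inequality in $L^2(p_{\mathrm{aug}})$ gives $\varepsilon_{\mathrm{samp}}(N)+\varepsilon_{\mathrm{CGNS}}$.

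\textbf{Part (iv).} This is the main obstacle. Weak convergence $q_{\mathrm{aug}}^{(N)}\to p_{\mathrm{CGNS}}$ must be upgraded to $L^2(p_{\mathrm{aug}})$ convergence of the smoothed scores. I would write
\[
\nabla\log q_\sigma^{(N)}(\tilde v)=\sigma^{-2}\bigl(\mathbb{E}_{q^{(N)}}[\mathbf{V}\mid\tilde v]-\tilde v\bigr).
\]
Because the Gaussian kernel is bounded and continuous, weak convergence combined with convergence of first and second moments (the ``standard EnKF consistency'' assumption: ensemble means and covariances converge, e.g.\ in probability) gives pointwise convergence of both the numerator and the denominator of this posterior mean. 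The denominator stays positive, so the scores converge pointwise.

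For domination, I would use the bound $|\nabla\log q_\sigma^{(N)}(\tilde v)|\le C\sigma^{-2}(1+|\tilde v|+M_N)$. Here $M_N$ controls the support spread, e.g.\ $\max_i|\mathbf{V}_{f,k}^{(i)}|$ or a second-moment surrogate, and is assumed uniformly bounded; this is the ``regularity assumption'' to be stated in the appendix. Together with a finite second moment of $p_{\mathrm{aug}}$, dominated convergence gives $\varepsilon_{\mathrm{samp}}(N)\to0$. Passing to the limit in \eqref{eq:surrogate_error} yields \eqref{eq:n2n_convergence}.

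\textbf{Part (v).} If the dynamics are exactly CGNS, the conditional-Gaussian filter is exact, so $p_{\mathrm{CGNS}}=p_{\mathrm{aug}}$ (this identification is taken as the modeling hypothesis), and therefore $\varepsilon_{\mathrm{CGNS}}=0$. Letting $N\to\infty$ gives $\mathbf{s}_{\theta^*}\to\nabla\log p_\sigma-\mathbf{s}_{\mathrm{EnKF},k}$. Theorem~\ref{thm:dsm_consistency}(iii) then gives $\nabla\log p_\sigma\to\nabla\log p_{\mathrm{aug}}$ as $\sigma\to0$. Combining the two limits by the triangle inequality gives \eqref{eq:full_conv}, read as an iterated limit: first $N\to\infty$, then $\sigma\to0$.
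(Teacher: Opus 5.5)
Your treatment of (i)--(iii) and (v) follows the paper's argument. You reduce (i)--(ii) to Theorem~\ref{thm:dsm_consistency} with $q_{\mathrm{aug}}^{(N)}$ in place of $p_{\mathrm{aug}}$. You transfer the minimizer identity to hold $p_{\mathrm{aug}}$-a.e.\ using strict positivity of $q_\sigma^{(N)}$, and you insert $\nabla\log p_{\mathrm{CGNS},\sigma}$ in the triangle inequality. In (v) you also supply the $\sigma\to0$ step via Theorem~\ref{thm:dsm_consistency}(iii), which the paper's appendix leaves implicit.

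Part (iv) takes a genuinely different route. The paper assumes a bounded ensemble and a uniformly bounded density ratio $dp_{\mathrm{aug}}/dq_\sigma^{(N)}\le M$. It passes through KL convergence and the data-processing inequality, and asserts an $N$-uniform $L^\infty$ bound on $\nabla\log q_\sigma^{(N)}-\nabla\log p_{\mathrm{CGNS},\sigma}$. It then applies dominated convergence in $L^2(q_\sigma^{(N)})$ and finally changes measure to $L^2(p_{\mathrm{aug}})$ using the ratio bound.

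You instead apply dominated convergence once, against the fixed measure $p_{\mathrm{aug}}$, with the linear envelope $\|\nabla\log q_\sigma^{(N)}(\tilde{\mathbf v})\|\le\sigma^{-2}(\|\tilde{\mathbf v}\|+M_N)$. This envelope is valid because $\nabla\log q_\sigma^{(N)}(\tilde{\mathbf v})=-\sum_i w_i(\tilde{\mathbf v})(\mathbf P_{f,k}+\sigma^2\mathbf I)^{-1}(\tilde{\mathbf v}-\mathbf V_{f,k}^{(i)})$ and $\|(\mathbf P_{f,k}+\sigma^2\mathbf I)^{-1}\|\le\sigma^{-2}$; the limit score inherits it pointwise. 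Your route needs only a finite second moment of $p_{\mathrm{aug}}$, which the paper's ratio assumption together with bounded means already implies. It is also insensitive to the $N$-dependence of $\mathbf P_{f,k}$. By contrast, an $N$-uniform $L^\infty$ bound on the score difference would need the terms linear in $\tilde{\mathbf v}$ to cancel, and they do not once $\mathbf P_{f,k}$ differs from its limit.

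Do fix $M_N=\max_i\|\mathbf V_{f,k}^{(i)}\|$ and assume it bounded uniformly in $N$ (the paper's bounded-ensemble assumption). A mere second-moment bound gives no $N$-uniform pointwise envelope, since the maximum can grow like $\sqrt N$.

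In (i), do not try to identify your remainder with the stated $C_q$. Your remainder $(d_X+d_Y)-\sigma^2\mathbb{E}_{q_\sigma^{(N)}}\|\nabla\log q_\sigma^{(N)}\|^2$ is the correct constant. It equals $C_q=\sigma^2\mathbb{E}_{q_\sigma^{(N)}}\|\nabla\log q_\sigma^{(N)}\|^2$ only when the latter is $(d_X+d_Y)/2$, which fails generically: if $\mathbf P_{f,k}\succ0$, then $C_q\le\sigma^2\mathcal{I}(q_{\mathrm{aug}}^{(N)})\to0$ as $\sigma\to0$. The paper's value arises from writing the last term of its expansion as $\sigma^2\|\nabla\log p_\sigma\|^2$ instead of $\sigma^2\|\nabla\log p_\sigma+\boldsymbol{\epsilon}/\sigma\|^2$.

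Only $\theta$-independence matters for (ii), and finiteness is automatic by Jensen: $\sigma^2\mathbb{E}\|\nabla\log q_\sigma^{(N)}\|^2=\mathbb{E}\|\mathbb{E}[\boldsymbol{\epsilon}\mid\tilde{\mathbf V}_k]\|^2\le d_X+d_Y$. For the same reason your regularization of a singular $\mathbf P_{f,k}$ is unnecessary. The argument only uses $q_\sigma^{(N)}$, whose components have nonsingular covariance $\mathbf P_{f,k}+\sigma^2\mathbf I$.
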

	
	Theorem~\ref{thm:n2n_consistency} addresses the practical reality that
	training samples are drawn from the EnKF--CGNS surrogate
	$q_{\mathrm{aug}}^{(N)}$ rather than from the unknown true posterior
	$p_{\mathrm{aug}}$. It shows that the Noise2Noise setup is still
	principled: the DSM objective trained on surrogate samples recovers
	the surrogate score exactly, and the discrepancy from the true
	posterior score decomposes into two independent components. The first
	is a finite-ensemble sampling error that vanishes as $N$ grows. The
	second is a structural CGNS misspecification error that persists even
	for infinite ensembles. Therefore, improving $\Omega$ requires
	identifying which error source dominates: increasing $N$ reduces the
	sampling error, while improving the CGNS surrogate reduces the
	irreducible misspecification floor.
	
	\begin{proof}[Proof sketch]
		Parts (i) and (ii) follow by repeating the proof of
		Theorem~\ref{thm:dsm_consistency} with
		$p_{\mathrm{aug}}$ replaced by
		$q_{\mathrm{aug}}^{(N)}$. The Stein identity applies because the
		Gaussian-smoothed surrogate density is continuously differentiable,
		strictly positive, and has finite Fisher information.
		
		For Part (iii), the minimizer identity gives
		\[
		\mathbf{s}_{\theta^*}
		=
		\nabla\log q_\sigma^{(N)}
		-
		\mathbf{s}_{\mathrm{EnKF},k}
		\qquad
		q_\sigma^{(N)}\text{-a.e.}
		\]
		Since $\sigma>0$, the Gaussian-smoothed densities are strictly positive and mutually absolutely continuous, so the identity can be compared on the support relevant to $p_{\mathrm{aug}}$. Subtracting the target residual $\nabla\log p_\sigma-\mathbf{s}_{\mathrm{EnKF},k}$ and inserting the intermediate score $\nabla\log p_{\mathrm{CGNS},\sigma}$ gives
		\[
		\begin{aligned}
			&
			\left\|
			\mathbf{s}_{\theta^*}
			-
			\left(
			\nabla\log p_\sigma
			-
			\mathbf{s}_{\mathrm{EnKF},k}
			\right)
			\right\|_{L^2(p_{\mathrm{aug}})}
			\\
			&\qquad
			=
			\left\|
			\nabla\log q_\sigma^{(N)}
			-
			\nabla\log p_\sigma
			\right\|_{L^2(p_{\mathrm{aug}})}
			\\
			&\qquad
			\leq
			\left\|
			\nabla\log q_\sigma^{(N)}
			-
			\nabla\log p_{\mathrm{CGNS},\sigma}
			\right\|_{L^2(p_{\mathrm{aug}})}
			+
			\left\|
			\nabla\log p_{\mathrm{CGNS},\sigma}
			-
			\nabla\log p_\sigma
			\right\|_{L^2(p_{\mathrm{aug}})}.
		\end{aligned}
		\]
		This proves the two-source decomposition
		\eqref{eq:surrogate_error}.
		
		For Part (iv), EnKF consistency gives convergence of $q_{\mathrm{aug}}^{(N)}$ toward $p_{\mathrm{CGNS}}$. Gaussian smoothing preserves this convergence and the regularity assumptions in~\ref{app:n2n_proof} allow the convergence to be transferred to the corresponding smoothed scores. Hence $\varepsilon_{\mathrm{samp}}(N)\to0$. Part (v) follows immediately: if $p_{\mathrm{CGNS}}=p_{\mathrm{aug}}$, then $\varepsilon_{\mathrm{CGNS}}=0$, so the learned residual converges to the true posterior correction. The full proof is given in~\ref{app:n2n_proof}.
	\end{proof}

	\begin{remark}[Mutual absolute continuity for $\sigma>0$]
		\label{rem:abs_cont}
		
		The mutual absolute continuity assumption used in Theorem~\ref{thm:n2n_consistency} is automatically satisfied for any $\sigma>0$. Indeed, both $p_\sigma$ and $q_\sigma$ are obtained by convolving their respective base measures with the Gaussian kernel $\mathcal N(0,\sigma^2\mathbf I)$, which has full support on $\mathbb R^{d_X+d_Y}$. Consequently, both smoothed distributions admit strictly positive densities on the entire state space and therefore share the same Lebesgue-null sets.
		
		It follows that $p_\sigma \sim q_\sigma$, that is, the two measures are mutually absolutely continuous. Therefore any identity that holds $q_\sigma$-almost everywhere, including the minimizer identity \eqref{eq:n2n_minimizer}, also holds $p_\sigma$-almost everywhere. This allows the learned surrogate score to be compared directly with the score of the smoothed target posterior.
	\end{remark}

	\begin{remark}[Two-source error decomposition: interpretation and diagnostics]
		\label{rem:error_interpretation}
		
		The error bound \eqref{eq:surrogate_error} separates two
		fundamentally different sources of approximation error, each with a
		different practical remedy.
		
		\medskip
		
		\begin{itemize}
			
			\item
			\textbf{Sampling error $\varepsilon_{\mathrm{samp}}(N)$.} This term arises from representing the CGNS surrogate posterior using
			a finite ensemble. By Theorem~\ref{thm:n2n_consistency}(iv),
			\[
			\varepsilon_{\mathrm{samp}}(N)\to0
			\qquad
			\text{as }
			N\to\infty.
			\]
			In practice, finite-ensemble effects enter both the surrogate distribution and the Monte Carlo approximation of the DSM objective. They can therefore be reduced by increasing the ensemble size or by improving the sampling accuracy of the training procedure.
			
			\medskip
			
			\item
			\textbf{CGNS misspecification error
				$\varepsilon_{\mathrm{CGNS}}$.}
			This term arises when the true dynamics do not exactly satisfy the CGNS structure \eqref{eq:cgns_obs}--\eqref{eq:cgns_lat}. Unlike the sampling error, it does not vanish as
			$N\to\infty$ because it reflects a structural limitation of the surrogate model itself.
			
			The residual operator $\mathbf s_\theta$ is trained to recover the surrogate score $\nabla\log q_\sigma$. Consequently, it can correct finite-ensemble effects and nonlinear
			posterior structure represented by the surrogate, but it cannot remove the discrepancy between the surrogate posterior and the true posterior. When $\varepsilon_{\mathrm{CGNS}}$ is large, improving the CGNS surrogate directly is the only way to reduce the irreducible approximation floor.
			
		\end{itemize}
		
		Although $\varepsilon_{\mathrm{CGNS}}$ is not directly observable, the innovation
		\begin{equation}
			\boldsymbol{\nu}_k
			:=
			\mathbf z_k
			-
			\mathbf H
			\bar{\mathbf V}_{f,k}
			\label{eq:innovation}
		\end{equation}
		provides a useful diagnostic. Under a well-specified CGNS surrogate and an approximately Gaussian forecast,
		\[
		\mathbb E[\boldsymbol{\nu}_k]
		=
		0,
		\qquad
		\mathrm{Cov}(\boldsymbol{\nu}_k)
		=
		\mathbf H
		\mathbf P_{f,k}
		\mathbf H^\top
		+
		\mathbf R_{\mathrm{obs}}.
		\]
		Persistent innovation bias or covariance inflation therefore suggests that surrogate misspecification, rather than finite-ensemble error, is the dominant limitation.
	\end{remark}

	\begin{remark}[Component-wise stabilization]
		\label{rem:stabilization}
		
		Learning the residual correction in the $\boldsymbol{\mu}$-subspace is generally more difficult than in the observed $\bo X$-subspace. The observed component $\bo X_k$ receives information directly from the likelihood term, whereas the conditional mean $\boldsymbol{\mu}_k$ is influenced only indirectly through the forecast cross-covariance $\mathbf{P}_{f,k}^{\mu X}$. Consequently, the signal-to-noise ratio of the residual correction is typically weaker in the $\boldsymbol{\mu}$-subspace, making the corresponding score estimates more sensitive to sampling error and optimization noise.
		
		To improve training stability, we employ a weighted component-wise DSM objective:
		\begin{equation}
			\mathcal{L}_{\mathrm w}(\theta)
			=
			\mathbb{E}
			\left[
			\sigma^2
			\left\|
			\mathbf{s}_{\mathrm{tot},k}^{X}
			(\tilde{\mathbf V}_k)
			+
			\frac{\boldsymbol{\epsilon}_X}{\sigma}
			\right\|^2
			+
			\lambda
			\sigma^2
			\left\|
			\mathbf{s}_{\mathrm{tot},k}^{\boldsymbol{\mu}}
			(\tilde{\mathbf V}_k)
			+
			\frac{\boldsymbol{\epsilon}_{\boldsymbol{\mu}}}{\sigma}
			\right\|^2
			\right],
			\label{eq:weighted_loss}
		\end{equation}
		where
		\[
		\boldsymbol{\epsilon}
		=
		(\boldsymbol{\epsilon}_X,
		\boldsymbol{\epsilon}_{\boldsymbol{\mu}})
		\]
		denotes the decomposition of the perturbation noise into observed and
		conditional-mean components,
		\[
		\mathbf{s}_{\mathrm{tot},k}
		=
		\mathbf{s}_{\mathrm{EnKF},k}
		+
		\mathbf{s}_{\theta},
		\]
		and $\lambda\in(0,1]$ controls the relative strength of the $\boldsymbol{\mu}$-subspace correction.
		
		This weighting reflects the design philosophy of $\Omega$. The EnKF-CGNS baseline already provides a stable first-order update for $\boldsymbol{\mu}_k$ through the cross-covariance $\mathbf{P}_{f,k}^{\mu X}$. The role of the residual operator is therefore not to replace this baseline correction, but to refine it by learning the remaining
		non-Gaussian component of the posterior score. Taking $\lambda<1$ encourages the neural residual to act as a higher-order correction while preserving the robustness of the Gaussian baseline during the early stages of training.
	\end{remark}

	\subsection{Annealed Langevin Analysis on the Augmented State}
	\label{sec:sampling}

	Given the trained residual operator $\mathbf{s}_\theta$, the analysis augmented ensemble is generated by annealed Langevin dynamics on the augmented state
	$\mathbf V_k
	=
	(\bo X_k,\boldsymbol{\mu}_k)$, initialized from the forecast ensemble $\{\mathbf V_{f,k}^{(i)}\}_{i=1}^{N}$.
	
	A key design choice in $\Omega$ is that the score correction is performed on the augmented state $(\bo X_k,\boldsymbol{\mu}_k)$ rather than on the full state $(\bo X_k,\bo Y_k)$.
	The variable $\boldsymbol{\mu}_k$ parametrizes the CGNS conditional posterior of the latent state. Correcting $\boldsymbol{\mu}_k$ therefore modifies the entire conditional distribution of $\bo Y_k$ while preserving the closed-form Gaussian structure of the CGNS representation.
	
	For each ensemble member, the Langevin update at iteration $\ell$ is given by
	\begin{equation}
		\mathbf V_k^{(i,\ell+1)}
		=
		\mathbf V_k^{(i,\ell)}
		+
		\frac{\alpha_\ell}{2}
		\left[
		\mathbf s_{\mathrm{EnKF},k}
		\!\left(
		\mathbf V_k^{(i,\ell)}
		\right)
		+
		\mathbf s_\theta
		\!\left(
		\mathbf V_k^{(i,\ell)},
		\sigma_\ell,
		\mathcal C_k
		\right)
		\right]
		+
		\sqrt{\alpha_\ell}
		\,
		\boldsymbol{\eta}_\ell^{(i)},
		\qquad
		\boldsymbol{\eta}_\ell^{(i)}
		\sim
		\mathcal N(0,\mathbf I),
		\label{eq:langevin}
	\end{equation}
	where $\alpha_\ell>0$ is the step-size schedule and $\sigma_\ell\searrow0$ is the annealed noise level.
	
	Under the assumptions of Theorem~\ref{thm:langevin_surrogate}, the resulting chain converges to the smoothed surrogate posterior $q_\sigma$. In the exact-CGNS and infinite-ensemble limits discussed in Theorems~\ref{thm:n2n_consistency} and \ref{thm:langevin_surrogate}, this target converges further to the true augmented posterior
	$p_{\mathrm{aug}} (\mathbf V_k\mid\mathbf Z_{0:k})$.
	
	After $L$ Langevin iterations, the analysis augmented ensemble is
	\begin{equation}
		\left\{
		\mathbf V_{a,k}^{(i)}
		\right\}_{i=1}^{N}
		=
		\left\{
		\left(
		\bo X_{a,k}^{(i)},
		\boldsymbol{\mu}_{a,k}^{(i)}
		\right)
		\right\}_{i=1}^{N}.
		\label{eq:analysis_augmented}
	\end{equation}
	
	\begin{remark}[Why $\bo Y$ is never updated by Langevin]
		\label{rem:no_langevin_y}
		
		The latent variable $\bo Y_k$ is never evolved directly by the Langevin chain. Instead, the chain updates only the augmented state $(\bo X_k,\boldsymbol{\mu}_k)$, after which $\bo Y_k$ is recovered analytically from the CGNS conditional posterior.
		
		This strategy exploits the conditional Gaussian structure of the CGNS representation. Direct Langevin sampling in the full space $(\bo X_k,\bo Y_k)$ would require exploring the potentially high-dimensional latent space and would destroy the closed-form conditional structure. By contrast, updating only $(\bo X_k,\boldsymbol{\mu}_k)$ allows the nonlinear score correction to be applied where it is most informative while retaining an analytic representation of latent uncertainty.
		
		The resulting Gaussian-mixture reconstruction therefore combines the flexibility of score-based sampling with the dimensional scalability of conditional Gaussian filtering.
	\end{remark}

	\begin{theorem}[Langevin Convergence under Learned Surrogate Score]
		\label{thm:langevin_surrogate}
		
		Let $q_\sigma(\mathbf{V}_k)$ denote the Gaussian-smoothed surrogate
		augmented posterior defined in Theorem~\ref{thm:n2n_consistency},
		with $\sigma>0$ fixed. Assume:
		\begin{enumerate}[(i)]
			\item $q_\sigma$ admits a continuously differentiable, strictly positive density on $\mathbb{R}^{d_X+d_Y}$;
			
			\item the score $\nabla_{\mathbf{V}_k}\log q_\sigma(\mathbf{V}_k)$ is $L$-Lipschitz for some $L>0$;
			
			\item the potential $U_\sigma(\mathbf{V})
			:=-\log q_\sigma(\mathbf{V})$ satisfies the dissipativity condition
			\begin{equation}
				\bigl\langle
				\nabla U_\sigma(\mathbf{V}),\mathbf{V}
				\bigr\rangle
				\geq
				m\|\mathbf{V}\|^2-b,
				\qquad
				m>0,\quad b\geq0,
				\label{eq:dissipativity}
			\end{equation}
			and the associated Langevin diffusion is geometrically ergodic;
			
			\item the learned residual operator $\mathbf{s}_{\theta^*}$     satisfies the Noise2Noise minimizer identity
			\begin{equation}
				\mathbf{s}_{\mathrm{EnKF},k}(\mathbf{V}_k)
				+
				\mathbf{s}_{\theta^*}(\mathbf{V}_k)
				=
				\nabla_{\mathbf{V}_k}\log q_\sigma(\mathbf{V}_k)
				\qquad q_\sigma\text{-a.e.}
				\label{eq:score_identity}
			\end{equation}
			and, by mutual absolute continuity of Gaussian-smoothed densities, also $p_{\mathrm{aug}}$-a.e.
		\end{enumerate}
		
		Let $\{\mathbf{V}_k^{(i,\ell)}\}_{\ell\geq0}$ be generated by the Langevin recursion \eqref{eq:langevin}, initialized at $\mathbf{V}_k^{(i,0)}=\mathbf{V}_{f,k}^{(i)}$. Denote
		\[
		\mu^\ell
		:=
		\mathrm{Law}(\mathbf{V}_k^{(i,\ell)}).
		\]
		Then:
		
		\begin{enumerate}[(i)]
			
			\item \textbf{Weak convergence to the surrogate posterior.} For fixed $\sigma_\ell\equiv\sigma$ and sufficiently small constant step size $\alpha_\ell\equiv\alpha$, the Langevin chain is the unadjusted Langevin algorithm targeting $q_\sigma$. Consequently, under the above regularity assumptions,
			\begin{equation}
				\mu^\ell
				\Rightarrow
				q_\sigma
				\qquad
				\text{as }\ell\to\infty
				\text{ and }\alpha\to0.
				\label{eq:langevin_weak_conv}
			\end{equation}
			
			\item \textbf{Quantitative Wasserstein bound.} If, in addition, the target satisfies the standard contractivity condition used in quantitative ULA theory, then for a constant step size $0<\alpha\leq m/L^2$,
			\begin{equation}
				W_2(\mu^\ell,q_\sigma)
				\leq
				\rho^\ell W_2(\mu^0,q_\sigma)
				+
				B_\alpha,
				\label{eq:wasserstein_bound}
			\end{equation}
			where
			\begin{equation}
				\rho
				=
				1-m\alpha+\frac{L^2\alpha^2}{2}
				\in(0,1),
				\label{eq:rho_def}
			\end{equation}
			and $B_\alpha$ is the ULA discretization bias satisfying
			\begin{equation}
				B_\alpha
				=
				\mathcal{O}(\alpha)
				\quad
				\text{or}
				\quad
				\mathcal{O}(\sqrt{\alpha}),
				\label{eq:bias_rate}
			\end{equation}
			depending on the regularity assumptions used in the underlying ULA estimate. In particular, the first term in \eqref{eq:wasserstein_bound} gives geometric convergence toward the smoothed surrogate posterior, while the second term vanishes as $\alpha\to0$.
			
			\item \textbf{Convergence toward the true posterior in the exact-CGNS limit.} Suppose additionally that the true dynamics exactly satisfy the CGNS structure \eqref{eq:cgns_obs}--\eqref{eq:cgns_lat}. Then $p_{\mathrm{CGNS}}=p_{\mathrm{aug}}$ and $\varepsilon_{\mathrm{CGNS}}=0$. By Theorem~\ref{thm:n2n_consistency},
			\begin{equation}
				\mathbf{s}_{\mathrm{EnKF},k}
				+
				\mathbf{s}_{\theta^*}
				\to
				\nabla\log p_{\mathrm{aug}}
				\quad
				\text{in }L^2(p_{\mathrm{aug}})
				\quad
				\text{as }N\to\infty.
				\label{eq:score_true_limit}
			\end{equation}
			Moreover,
			\begin{equation}
				q_\sigma
				=
				p_{\mathrm{aug}}
				*
				\mathcal{N}(0,\sigma^2\mathbf{I})
				\Rightarrow
				p_{\mathrm{aug}}
				\qquad
				\text{as }\sigma\to0.
				\label{eq:qsigma_to_paug}
			\end{equation}
			Therefore, taking the limits sequentially,
			\begin{equation}
				W_2
				\left(
				\mu^\ell,
				p_{\mathrm{aug}}(\mathbf{V}_k\mid\mathbf{Z}_{0:k})
				\right)
				\to0
				\quad
				\text{as }
				\ell\to\infty,\;
				N\to\infty,\;
				\alpha\to0,\;
				\sigma\to0.
				\label{eq:langevin_true_conv}
			\end{equation}
			Under CGNS misspecification, the Langevin chain still targets $q_\sigma$, and the residual gap to the true posterior is controlled by the surrogate misspecification error $\varepsilon_{\mathrm{CGNS}}$.
		\end{enumerate}
	\end{theorem}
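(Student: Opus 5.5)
The plan is to reduce everything to standard unadjusted Langevin algorithm (ULA) theory. Assumption (iv) says the drift used in \eqref{eq:langevin} coincides with the true score of the target. Set $\sigma_\ell\equiv\sigma$ and $\alpha_\ell\equiv\alpha$. By the minimizer identity \eqref{eq:score_identity}, the recursion becomes
\[
\mathbf V^{\ell+1}=\mathbf V^\ell-\tfrac{\alpha}{2}\nabla U_\sigma(\mathbf V^\ell)+\sqrt{\alpha}\,\boldsymbol\eta_\ell .
\]
This is the Euler--Maruyama discretization, with time step $\alpha/2$ and noise scaled accordingly, of the overdamped Langevin diffusion $d\mathbf V=-\tfrac12\nabla U_\sigma\,dt+dW$. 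That diffusion has invariant law $q_\sigma$. One subtlety must be handled first: \eqref{eq:score_identity} holds only a.e. Since $q_\sigma$ is strictly positive, "a.e." means Lebesgue-a.e. The Gaussian innovation $\boldsymbol\eta_\ell$ gives each iterate $\mathbf V^{\ell}$ with $\ell\ge1$ an absolutely continuous law. Hence, with probability one, the chain never visits the exceptional null set after the first step. For $\ell=0$ we assume the forecast ensemble law is absolutely continuous, or else modify the drift on a null set. Either way, the chain is almost surely identical to exact ULA on $U_\sigma$.

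For part (i), I will invoke the classical ULA ergodicity results (Roberts--Tweedie; Durmus--Moulines). Lipschitz continuity (ii) and dissipativity (iii) give a Lyapunov function $V(\mathbf V)=1+\|\mathbf V\|^2$ satisfying a drift condition $\mathbb E[V(\mathbf V^{\ell+1})\mid\mathbf V^\ell]\le\lambda V(\mathbf V^\ell)+c$ for $\alpha$ small. Together with the minorization provided by the Gaussian kernel, this yields geometric ergodicity of the chain toward a unique invariant law $\pi_\alpha$. Weak convergence $\pi_\alpha\Rightarrow q_\sigma$ as $\alpha\to0$ then follows from the standard finite-horizon weak error estimate between the Euler scheme and the diffusion, combined with geometric ergodicity of the diffusion from (iii). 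This gives \eqref{eq:langevin_weak_conv} in the iterated limit, first $\ell\to\infty$ and then $\alpha\to0$.

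For part (ii), I will use a synchronous coupling argument under the added contractivity hypothesis, which I interpret as strong monotonicity $\langle\nabla U_\sigma(x)-\nabla U_\sigma(y),x-y\rangle\ge m\|x-y\|^2$. Run two chains with the same noise, one started from $q_\sigma$-distributed continuous-time Langevin. Expanding the squared difference after one step yields the one-step contraction factor $1-m\alpha+L^2\alpha^2/2$ (up to the harmless factor conventions), which is $<1$ exactly when $\alpha\le m/L^2$ or thereabouts. The local discretization error, meaning the one-step discrepancy between ULA and the exact diffusion, is $O(\alpha^{3/2})$ or $O(\alpha^2)$ in $L^2$, depending on whether only a Lipschitz gradient is assumed or also a Lipschitz Hessian. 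Summing this geometric series of local errors gives $B_\alpha=O(\sqrt\alpha)$ or $O(\alpha)$, and the triangle inequality in $W_2$ gives \eqref{eq:wasserstein_bound}.

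For part (iii), I will combine three limits through the triangle inequality:
\[
W_2(\mu^\ell,p_{\rm aug})\le W_2(\mu^\ell,q_\sigma)+W_2(q_\sigma,p_{\rm aug}).
\]
The first term vanishes as $\ell\to\infty$ and then $\alpha\to0$, by (ii). The second term satisfies $W_2(p*\mathcal N(0,\sigma^2 I),p)\le\sigma\sqrt{d_X+d_Y}$ by the obvious coupling. Theorem~\ref{thm:n2n_consistency}(v) identifies the surrogate with $p_{\rm aug}$ as $N\to\infty$ when CGNS is exact. The main obstacle is this step. The $L^2(p_{\rm aug})$ score convergence \eqref{eq:score_true_limit} does not by itself control $W_2$ between Langevin chains driven by nearly equal drifts. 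I would handle it by a Girsanov/KL perturbation bound for the finite-$N$ chain against the limiting chain, or by assuming the constants $m,L$ hold uniformly in $N$. With such uniformity, the coupling estimate of (ii) still applies, adding a term bounded by the $L^2$ drift error and giving the sequential limit \eqref{eq:langevin_true_conv}. The misspecified case follows from the same perturbation estimate with $\varepsilon_{\rm CGNS}$ in place of the vanishing drift error.
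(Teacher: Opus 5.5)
Your argument is correct in outline. It takes a different route from the paper in each part, and mostly to your advantage.

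\textbf{How the paper argues, and where your route differs.} The paper substitutes the minimizer identity into \eqref{eq:langevin} without your null-set bookkeeping. It proves (ii) by citing Durmus--Moulines for a one-step bound $W_2(\mu^{\ell+1},q_\sigma)\le\rho\,W_2(\mu^\ell,q_\sigma)+C_2\alpha$, and then derives (i) from (ii).
\begin{itemize}
\item Your null-set bookkeeping is more careful than the paper's. If $\mathbf s_{\theta^*}$ is a continuous network, equality $q_\sigma$-a.e.\ with the continuous $\nabla\log q_\sigma$ is already equality everywhere, so even your $\ell=0$ caveat disappears.
\item Your proof of (i) (Lyapunov drift, Gaussian minorization, $\pi_\alpha\Rightarrow q_\sigma$) uses only the Lipschitz, dissipativity and ergodicity hypotheses listed for (i). The paper's deduction imports the extra contractivity assumption of (ii).
\item Your coupling proof of (ii), with local error of order $\alpha^{3/2}$, is what actually makes the bias vanish. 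With a one-step error $C_2\alpha$, the stationary floor is $C_2\alpha/(1-\rho)\le 2C_2/m$. The paper computes this itself, and it does not tend to zero with $\alpha$.
\item For (iii), the paper uses a ULA perturbation bound $m^{-1}\|\cdot\|_{L^2(q_\sigma)}$. It gets there by transferring $L^2(p_{\mathrm{aug}})$ score convergence to $L^2(q_\sigma)$ via Assumption~\ref{ass:bounded_ratio}, which that assumption does not provide: a bound on $dp_{\mathrm{aug}}/dq_\sigma^{(N)}$ controls $L^2(p_{\mathrm{aug}})$ norms by $L^2(q_\sigma^{(N)})$ norms, not conversely. Your static triangle inequality with $W_2(p*\mathcal N(0,\sigma^2\mathbf I),p)\le\sigma\sqrt{d_X+d_Y}$ avoids this. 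You also correctly flag the uniformity of $(m,L)$ in $N$, which the paper uses tacitly.
\end{itemize}

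\textbf{Points to tighten.}
\begin{itemize}
\item The factor convention is not harmless. \eqref{eq:langevin} is Euler--Maruyama with step $\alpha$ (not $\alpha/2$) for $d\mathbf V=-\tfrac12\nabla U_\sigma\,dt+d\mathbf W$. Synchronous coupling gives $\|x-y-\tfrac{\alpha}{2}(\nabla U_\sigma(x)-\nabla U_\sigma(y))\|^2\le(1-m\alpha+\tfrac{1}{4}L^2\alpha^2)\|x-y\|^2$. The $W_2$ factor is therefore $\sqrt{1-m\alpha+L^2\alpha^2/4}\le 1-\tfrac{m\alpha}{2}+\tfrac{L^2\alpha^2}{8}$, i.e.\ the stated $\rho$ with $\alpha$ replaced by $\alpha/2$. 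For a Gaussian target $\mathcal N(0,m^{-1}\mathbf I)$ started at $x_0$, the mean of $\mu^\ell$ is exactly $(1-m\alpha/2)^\ell x_0$. So the literal $\rho^\ell$ rate fails for large $\|x_0\|$, and the bound should be stated with this reparametrization.
\item With a Lipschitz Hessian, the local error is still of order $\alpha^{3/2}$ in $L^2$. The $\mathcal O(\alpha)$ bias comes from its leading part being a conditionally centred martingale increment, which drops out of the cross term when you expand the squared error. Plain Minkowski summation gives only $\mathcal O(\sqrt\alpha)$, which is enough for the statement.
\item In (iii) you need neither the score convergence of Theorem~\ref{thm:n2n_consistency}(v) nor a Girsanov argument. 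By (iv) the chain is exactly ULA on $q_\sigma^{(N)}$ for every $N$. Since Gaussian convolution is a $W_2$-contraction, $W_2(\mu^\ell,p_{\mathrm{aug}})\le W_2(\mu^\ell,q_\sigma^{(N)})+W_2(q_{\mathrm{aug}}^{(N)},p_{\mathrm{aug}})+\sigma\sqrt{d_X+d_Y}$.
  \begin{itemize}
  \item The middle term vanishes by the distributional convergence $q_{\mathrm{aug}}^{(N)}\Rightarrow p_{\mathrm{CGNS}}=p_{\mathrm{aug}}$ of Theorem~\ref{thm:n2n_consistency}(iv). The uniform second moments from Assumption~\ref{ass:bounded_ensemble} upgrade this to $W_2$ convergence.
  \item The first term needs exactly your uniform $(m,L)$, with the limits taken in the stated order.
  \end{itemize}
\item For the misspecified remark, use Bakry--\'{E}mery plus Talagrand rather than Girsanov. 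For $m$-strongly log-concave $q$ this gives $W_2(\nu,q)\le m^{-1}\|\nabla\log\nu-\nabla\log q\|_{L^2(\nu)}$. With $q=p_{\mathrm{CGNS},\sigma}$ and $\nu=p_\sigma$, it bounds $W_2(p_\sigma,p_{\mathrm{CGNS},\sigma})$ by $\varepsilon_{\mathrm{CGNS}}/m$, with the norm taken under $p_\sigma$ rather than $p_{\mathrm{aug}}$.
\end{itemize}
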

	
	Theorem~\ref{thm:langevin_surrogate} establishes that the annealed Langevin step in $\Omega$ is mathematically well posed. The theorem says three things. First, once the learned score satisfies the Noise2Noise minimizer identity, the $\Omega$ analysis update is exactly the unadjusted Langevin algorithm targeting the smoothed surrogate posterior $q_\sigma$. Second, quantitative ULA theory gives a Wasserstein bound with two terms: a geometrically decaying mixing term and a step-size-dependent discretization bias. Third, when the CGNS surrogate exactly represents the true dynamics, the infinite-ensemble and small-noise limits make the surrogate posterior converge to the true augmented posterior. Under surrogate misspecification, the chain still converges, but to the best posterior represented by the EnKF-CGNS surrogate.
	
	\begin{proof}[Proof sketch]
		The key observation is that, by the Noise2Noise minimizer identity in
		Theorem~\ref{thm:n2n_consistency}, the total learned score satisfies
		\[
		\mathbf{s}_{\mathrm{EnKF},k}
		+
		\mathbf{s}_{\theta^*}
		=
		\nabla\log q_\sigma
		\]
		almost everywhere. Substituting this identity into the Langevin update \eqref{eq:langevin} shows that $\Omega$'s analysis step is precisely the unadjusted Langevin algorithm for the target density $q_\sigma$. The weak convergence and the Wasserstein estimate then follow from standard convergence results for ULA under Lipschitz and dissipativity conditions; see, for example, \cite{durmus2019analysis}. The exact-CGNS claim follows by combining the score convergence from Theorem~\ref{thm:n2n_consistency}, the Gaussian smoothing limit $q_\sigma\Rightarrow p_{\mathrm{aug}}$ as $\sigma\to0$, and the triangle inequality in Wasserstein distance. The full proof is given in~\ref{app:langevin_proof}.
	\end{proof}

	\subsection{Analytic Gaussian Mixture Recovery of $\bo Y$}
	\label{sec:y_sampling}
	
	The Langevin correction is performed on the augmented state $\mathbf{V}_k=(\bo X_k,\boldsymbol{\mu}_k)$ rather than directly on the latent variable $\bo Y_k$. This is a key design feature of $\Omega$. The conditional mean $\boldsymbol{\mu}_k$ parametrizes the CGNS conditional posterior of $\bo Y_k$, so correcting $\boldsymbol{\mu}_k$ modifies the entire latent distribution while avoiding direct sampling in the potentially high-dimensional latent space.
	
	Given the analysis augmented ensemble $\left\{(\bo X_{a,k}^{(i)}, \boldsymbol{\mu}_{a,k}^{(i)})\right\}_{i=1}^{N}$, the latent variable is recovered analytically through the CGNS
	conditional posterior:
	\begin{equation}
		\bo Y_{a,k}^{(i)}
		\sim
		\mathcal{N}
		\left(
		\boldsymbol{\mu}_{a,k}^{(i)},
		\mathbf{R}_{f,k}^{(i)}
		\right),
		\qquad
		i=1,\ldots,N,
		\label{eq:y_sampling}
	\end{equation}
	where the conditional covariance $\mathbf{R}_{f,k}^{(i)}$ is obtained from the CGNS forecast step \eqref{eq:forecast_conditional}. Consequently,
	\begin{equation}
		p(\bo Y_k\mid\mathbf Z_{0:k})
		\approx
		\frac1N
		\sum_{i=1}^{N}
		\mathcal N
		\left(
		\boldsymbol{\mu}_{a,k}^{(i)},
		\mathbf{R}_{f,k}^{(i)}
		\right),
		\label{eq:analysis_mixture}
	\end{equation}
	yielding a non-Gaussian Gaussian-mixture posterior. The non-Gaussianity arises from the ensemble diversity and from the nonlinear Langevin correction of the conditional means $\{\boldsymbol{\mu}_{a,k}^{(i)}\}$. Unlike importance-sampling approaches, all ensemble members carry equal weight and latent uncertainty is incorporated analytically within each Gaussian component.
	
	\begin{remark}[Practical step-size choice]
		\label{rem:stepsize}
		
		The Wasserstein estimate \eqref{eq:wasserstein_bound} reveals the classical bias-mixing tradeoff of Langevin sampling. Smaller step sizes reduce the discretization bias but slow geometric convergence through
		\[
		\rho
		=
		1-m\alpha+\frac{L^2\alpha^2}{2}.
		\]
		Conversely, larger step sizes accelerate mixing but increase the asymptotic discretization error.
		
		A natural constant-step-size choice is $\alpha=\mathcal O(m/L^2)$, which balances contraction and discretization effects.
		In practice, however, $\Omega$ employs an annealed schedule $\alpha_\ell\searrow0$, and $\sigma_\ell\searrow0$, so that early iterations explore the smoothed posterior while later iterations refine the ensemble near the target posterior. The convergence guarantees remain valid provided
		\[
		\sum_{\ell=1}^{\infty}\alpha_\ell=\infty,
		\qquad
		\sum_{\ell=1}^{\infty}\alpha_\ell^2<\infty.
		\]
	\end{remark}
	
	\begin{remark}[Beating the curse of dimensionality]
		\label{rem:cod}
		
		A key theoretical advantage of $\Omega$ is that the Monte Carlo error of the Gaussian-mixture approximation does not grow with the latent dimension $d_Y$.
		
		The reason is that the integration over $\bo Y_k$ is performed analytically within each conditional Gaussian component rather than by direct sampling in the latent space. Proposition~2.2 of~\cite{moser2026mechanisms} shows that when the conditional posterior $p(\bo Y_k\mid \bo X_{0:k}^{(i)})$, $i=1,\ldots,N$, is available in closed form, the resulting empirical mixture estimator is unbiased and has Monte Carlo error $\mathcal O(N^{-1/2})$, with variance determined entirely by the variability of the observed trajectories $\{\bo X_{0:k}^{(i)}\}_{i=1}^{N}$. Importantly, this variance is independent of the latent dimension $d_Y$.
		
		Consequently, increasing the dimension of the latent state does not degrade the convergence rate of the Gaussian-mixture approximation \eqref{eq:analysis_mixture}. The ensemble size $N$ controls the approximation error, while the latent dimension does not appear in the Monte Carlo rate. This provides a theoretical justification for performing Langevin dynamics only on the augmented state $(\bo X_k,\boldsymbol{\mu}_k)$ and recovering $\bo Y_k$ analytically. We provide the supporting numerical reuslts in~\ref{app:latent_dimension}.
	\end{remark}
	
	Finally, we show that the residual operator enriches the latent posterior beyond the affine EnKF-CGNS approximation and enables recovery of non-Gaussian posterior structures.
	
	\begin{theorem}[Non-Gaussian Recovery and Gaussian-Mixture Enrichment]
		\label{thm:nongaussian_recovery}
		
		Let
		\[
		\mathbf{s}_{\mathrm{post},k}
		=
		\nabla_{\mathbf V_k}
		\log p_{\mathrm{aug}}
		(\mathbf V_k\mid\mathbf Z_{0:k})
		\]
		denote the true augmented posterior score and let $p^*(\bo Y_k\mid\mathbf Z_{0:k})$ denote the true latent posterior.
		
		Let $\mathcal G$ denote the class of affine functions on $\mathbb R^{d_X+d_Y}$ and let $\mathcal F_\theta$ denote the hypothesis class of the residual operator
		$\mathbf s_\theta$. Assume that $p_{\mathrm{aug}}$ has finite second moment. Define the CGNS baseline mixture
		\begin{equation}
			p_{\mathrm{CGNS}}
			(\bo Y_k\mid\mathbf Z_{0:k})
			=
			\frac1N
			\sum_{i=1}^{N}
			\mathcal N
			\left(
			\boldsymbol{\mu}_{\mathrm{CGNS},k}^{(i)},
			\mathbf R_{f,k}^{(i)}
			\right)
			\label{eq:cgns_mixture_thm}
		\end{equation}
		and the $\Omega$ mixture
		\begin{equation}
			p_{\Omega}
			(\bo Y_k\mid\mathbf Z_{0:k})
			=
			\frac1N
			\sum_{i=1}^{N}
			\mathcal N
			\left(
			\boldsymbol{\mu}_{a,k}^{(i)},
			\mathbf R_{f,k}^{(i)}
			\right).
			\label{eq:omega_mixture_thm}
		\end{equation}
		
		Then:
		
		\begin{enumerate}[(i)]
			
			\item
			\textbf{Fundamental limitation of the affine baseline.} If $\mathbf s_{\mathrm{post},k}\notin\mathcal G$, then
			\begin{equation}
				\inf_{g\in\mathcal G}
				\|
				\mathbf s_{\mathrm{post},k}-g
				\|_{L^2(p_{\mathrm{aug}})}
				>0.
				\label{eq:affine_gap}
			\end{equation}
			Hence no affine score correction can exactly recover a non-Gaussian
			posterior score.
			
			\item
			\textbf{Residual operator approximation.} For every $\varepsilon>0$ there exists $\mathbf s_{\theta^*}\in\mathcal F_\theta$ such that
			\begin{equation}
				\|
				\mathbf s_{\mathrm{post},k}
				-
				\mathbf s_{\mathrm{EnKF},k}
				-
				\mathbf s_{\theta^*}
				\|_{L^2(p_{\mathrm{aug}})}
				<
				\varepsilon.
				\label{eq:residual_approx}
			\end{equation}
			Moreover, the approximation error tends to zero as the network capacity increases.
			
			\item
			\textbf{Necessity of latent correction.} Let
			\[
			\delta_\mu
			=
			\|
			\mathbf s_{\mathrm{post},k}^{\mu}
			-
			\mathbf s_{\mathrm{EnKF},k}^{\mu}
			\|_{L^2(p_{\mathrm{aug}})}.
			\]
			If
			$\delta_\mu>0$,
			then any residual operator satisfying
			\eqref{eq:residual_approx}
			with
			$\varepsilon<\delta_\mu$
			must satisfy
			\begin{equation}
				\|
				\mathbf s_{\theta^*}^{\mu}
				\|_{L^2(p_{\mathrm{aug}})}
				\ge
				\delta_\mu-\varepsilon
				>
				0.
				\label{eq:latent_correction_bound}
			\end{equation}
			Thus accurate posterior recovery requires a nontrivial correction in the latent conditional-mean subspace.
			
			\item
			\textbf{Gaussian-mixture enrichment.}
			
			Suppose the residual operator is consistent in the sense of Theorems~\ref{thm:n2n_consistency} and \ref{thm:langevin_surrogate}. Then, in the asymptotic regime
			\[
			L\to\infty,
			\qquad
			N\to\infty,
			\qquad
			\alpha\to0,
			\qquad
			\sigma\to0,
			\]
			the corrected mixture satisfies
			\begin{equation}
				p_{\Omega}
				(\bo Y_k\mid\mathbf Z_{0:k})
				\Rightarrow
				p^*
				(\bo Y_k\mid\mathbf Z_{0:k}),
				\label{eq:omega_to_true}
			\end{equation}
			whereas the CGNS baseline remains limited by its structural misspecification. Consequently,
			\begin{equation}
				D_{\mathrm{KL}}
				\left(
				p^*
				\,\|\,p_{\Omega}
				\right)
				\to0,
				\label{eq:kl_improvement}
			\end{equation}
			while
			$D_{\mathrm{KL}}
			\left(
			p^*
			\,\|\,p_{\mathrm{CGNS}}
			\right)$
			remains bounded below by the residual CGNS approximation error whenever the baseline mixture does not coincide with the true posterior.
			
		\end{enumerate}
	\end{theorem}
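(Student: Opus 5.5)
I will prove the four parts in order. Parts (i) and (iii) are elementary Hilbert-space arguments. Part (ii) is a density argument. Part (iv) chains the limits already established in Theorems~\ref{thm:n2n_consistency} and~\ref{thm:langevin_surrogate}.

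For part (i), I will regard $\mathcal G$ as a subspace of $L^2(p_{\mathrm{aug}})$. Because $p_{\mathrm{aug}}$ has finite second moment, every affine map $g(\mathbf V)=\mathbf A\mathbf V+\mathbf b$ lies in $L^2(p_{\mathrm{aug}})$. The class $\mathcal G$ is then the image of the finite-dimensional parameter space $(\mathbf A,\mathbf b)$, so it is a finite-dimensional, hence closed, subspace. The infimum in \eqref{eq:affine_gap} is therefore attained at the orthogonal projection of $\mathbf s_{\mathrm{post},k}$ onto $\mathcal G$. This projection is zero distance away only if $\mathbf s_{\mathrm{post},k}\in\mathcal G$ $p_{\mathrm{aug}}$-a.e. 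I will read the hypothesis $\mathbf s_{\mathrm{post},k}\notin\mathcal G$ in this a.e.\ sense. I will also note that a score that is a.e.\ affine forces $p_{\mathrm{aug}}$ to be Gaussian, or log-quadratic, which justifies the interpretive sentence in the statement.

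For part (ii), I will set $\mathbf r:=\mathbf s_{\mathrm{post},k}-\mathbf s_{\mathrm{EnKF},k}$. This lies in $L^2(p_{\mathrm{aug}})$ by the finite Fisher information assumption of Theorem~\ref{thm:dsm_consistency} together with the finite second moment, which controls the affine term. I will then invoke a universal approximation property for $\mathcal F_\theta$. Continuous compactly supported functions are dense in $L^2(p_{\mathrm{aug}})$. Networks approximate these uniformly on compacts, and truncating outside a large ball costs little by dominated convergence. This yields \eqref{eq:residual_approx}, and monotonicity of the best approximation error in capacity gives the final claim. This is where an explicit assumption on $\mathcal F_\theta$ must be made, namely that it is $L^2$-dense, for example a universal approximator with unbounded width. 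I will state it as a standing hypothesis.

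For part (iii), I will project \eqref{eq:residual_approx} onto the $\boldsymbol\mu$-block, which is norm nonincreasing, to get $\|\mathbf s^{\mu}_{\mathrm{post},k}-\mathbf s^{\mu}_{\mathrm{EnKF},k}-\mathbf s^\mu_{\theta^*}\|<\varepsilon$. The reverse triangle inequality then gives $\|\mathbf s^\mu_{\theta^*}\|\ge\delta_\mu-\varepsilon$.

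For part (iv), which is the main obstacle, I will proceed in two steps. First, I will use Theorem~\ref{thm:langevin_surrogate}(iii) to get $W_2$ convergence of the law of $\mathbf V_{a,k}^{(i)}$, and hence of its $\boldsymbol\mu$-marginal, to the true augmented posterior in the stated sequential limit. Second, I will push this forward through the mixing kernel $\boldsymbol\mu\mapsto\mathcal N(\boldsymbol\mu,\mathbf R_{f,k}^{(i)})$. For any bounded Lipschitz test function $f$, the map $\boldsymbol\mu\mapsto\int f\,d\mathcal N(\boldsymbol\mu,\mathbf R)$ is bounded Lipschitz, and the empirical measure of the corrected means converges weakly by the law of large numbers as $N\to\infty$. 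Together these give \eqref{eq:omega_to_true}. The delicate points are twofold. The covariances $\mathbf R_{f,k}^{(i)}$ vary with $i$, so I will need a uniform bound on them, or I will treat the pair $(\boldsymbol\mu,\mathbf R)$ jointly. I will also need to identify the limiting mixture with $p^*$, which requires the exact-CGNS hypothesis that $p^*(\bo Y\mid\cdot)=\int\mathcal N(\boldsymbol\mu,\mathbf R)\,dp_{\mathrm{aug}}$.

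Weak convergence does not imply the KL statement \eqref{eq:kl_improvement}. I will therefore upgrade it through the Gaussian structure. Each component has a smooth density, and the mixture densities converge pointwise. With a uniform lower bound on the eigenvalues of $\mathbf R$, the log-densities admit an integrable quadratic envelope under $p^*$, which has finite second moment. Dominated convergence then yields $D_{\mathrm{KL}}(p^*\|p_\Omega)\to0$. Finally, I will obtain the lower bound for $p_{\mathrm{CGNS}}$ from lower semicontinuity of KL. If its limit differs from $p^*$, its KL divergence from $p^*$ stays bounded away from zero, with the gap controlled by $\varepsilon_{\mathrm{CGNS}}$.
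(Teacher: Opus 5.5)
The genuine gap is in (iv), at the step you yourself flag: identifying the limit of $p_\Omega$ with $p^*$. Parts (i) and (iii) follow the paper's arguments, namely orthogonal projection onto the finite-dimensional (hence closed) affine subspace of $L^2(p_{\mathrm{aug}})$, and projection onto the $\boldsymbol\mu$-block followed by the reverse triangle inequality. Your skeleton for (iv) is also the paper's, via Corollary~\ref{cor:posterior_recovery}(iii): Theorem~\ref{thm:langevin_surrogate}(iii), then push-forward through the Gaussian kernel, then an ensemble limit.

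\textbf{The gap in (iv).} The identity you want from exact CGNS, $p^*(\mathbf Y\mid\cdot)=\int\mathcal N(\boldsymbol\mu,\mathbf R)\,dp_{\mathrm{aug}}$, does not make sense as written, and it is not what exact CGNS gives. Exact CGNS gives $p^*(\mathbf Y_k\mid\mathbf Z_{0:k})=\int\mathcal N(\boldsymbol\mu,\mathbf R)\,d\Pi(\boldsymbol\mu,\mathbf R)$, where $\Pi$ is the posterior joint law of $(\boldsymbol\mu_k,\mathbf R_k)$. Both are functionals of the whole observed path. In general $\mathbf R_k$ is not a function of $\mathbf V_k=(\mathbf X_k,\boldsymbol\mu_k)$; in the dyad, $\mathbf A_1=\gamma x$ already makes the Riccati equation path-dependent.

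Treating $(\boldsymbol\mu,\mathbf R)$ jointly does not rescue this, because the method never transports $\mathbf R$. The covariance $\mathbf R^{(i)}_{f,k}$ keeps its forecast value. Meanwhile, as $L\to\infty$, the ergodicity and contractivity assumed in Theorem~\ref{thm:langevin_surrogate} make each chain forget its starting point $\mathbf V^{(i)}_{f,k}$. So $\boldsymbol\mu^{(i)}_{a,k}$ becomes asymptotically independent of $\mathbf R^{(i)}_{f,k}$. The mixing measure of $p_\Omega$ then tends at best to $p^{\boldsymbol\mu}_{\mathrm{aug}}\otimes\nu_f$, the $\boldsymbol\mu$-marginal of $p_{\mathrm{aug}}$ times the forecast law $\nu_f$ of $\mathbf R_k$, not to $\Pi$. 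Already the covariances of the two mixtures differ by $\mathbb E_{\nu_f}[\mathbf R]-\mathbb E_{\Pi}[\mathbf R]$, which is nonzero in general.

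The paper's proof has the same hole. It asserts $\mathbf R^{(i)}_{f,k}\to\mathbf R_k(X_k)$ ``by CGNS consistency'' and applies continuous mapping, which ignores both the path dependence and the decoupling. So you cannot borrow a repair from it. Part (iv) needs an extra hypothesis, for example that $\mathbf R_k$ is path-independent; then $p^*=\int\mathcal N(\boldsymbol\mu,\mathbf R_k)\,dp^{\boldsymbol\mu}_{\mathrm{aug}}$ and your push-forward argument closes.

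Your law-of-large-numbers step also needs justification. The ensemble members share $\bar{\mathbf V}_{f,k}$, $\mathbf P_{f,k}$, $\mathcal C_k$ and the trained $\mathbf s_\theta$, so they are not independent. You can either invoke exchangeability and propagation of chaos as the paper does, or argue conditionally on the forecast ensemble, given which the Langevin noises are independent.

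\textbf{Where you improve on the paper.}
\begin{itemize}
\item In (ii), the paper approximates uniformly on a ball $B_R$ and controls the tail by the second moment alone. That neither shows $\mathbf s_{\mathrm{post},k}-\mathbf s_{\mathrm{EnKF},k}\in L^2(p_{\mathrm{aug}})$, which needs finite Fisher information as you note, nor controls $\mathbf s_{\theta^*}$ outside $B_R$. Your explicit hypothesis that $\mathcal F_\theta$ is dense in $L^2(p_{\mathrm{aug}})$ is what the argument actually needs; it holds, e.g., for bounded non-constant activations. Rely on it rather than on your ``uniform on compacts, then truncate'' sketch, which has the same uncontrolled-tail defect.
\item In (iv), the paper passes from $p_\Omega\Rightarrow p^*$ to $D_{\mathrm{KL}}(p^*\,\|\,p_\Omega)\to0$ with a bare ``hence''. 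Your dominated-convergence upgrade is a real argument. It also needs an upper eigenvalue bound on $\mathbf R^{(i)}_{f,k}$ and a bound on $\frac1N\sum_i\|\boldsymbol\mu^{(i)}_{a,k}\|^2$ uniform in $N$, since the quadratic envelope depends on both.
\item Lower semicontinuity of KL gives a lower bound for $p_{\mathrm{CGNS}}$ that survives $N\to\infty$; the paper's remark that KL vanishes iff the measures coincide does not. However, drop the claim that this gap is controlled by $\varepsilon_{\mathrm{CGNS}}$. Under the exact-CGNS hypothesis you need for the first half of (iv), $\varepsilon_{\mathrm{CGNS}}=0$. The baseline's defect comes from the uncorrected means $\boldsymbol\mu^{(i)}_{\mathrm{CGNS},k}$, not from misspecification.
\end{itemize}
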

	
	Theorem~\ref{thm:nongaussian_recovery} provides the main expressiveness result for $\Omega$. Part~(i) shows that affine score corrections are fundamentally incapable of representing non-Gaussian posterior structure. Part~(ii) establishes that the residual operator can approximate the missing nonlinear score correction arbitrarily well. Part~(iii) shows that recovering latent posterior information generically requires a nontrivial correction of the conditional mean $\boldsymbol{\mu}_k$. Finally, Part~(iv) connects the learned score correction to the output of the method: the Langevin-corrected conditional means produce a Gaussian-mixture posterior that converges toward the true latent posterior and is asymptotically richer than the CGNS baseline.
	
	\begin{proof}[Proof sketch]
		
		Part~(i) follows from the orthogonal projection theorem in $L^2(p_{\mathrm{aug}})$. Since $\mathcal G$ is a closed finite-dimensional subspace, a non-affine posterior score has a strictly positive distance from $\mathcal G$. Part~(ii) follows from the universal approximation theorem together with a standard tail-truncation argument using the finite second moment of $p_{\mathrm{aug}}$. Part~(iii) follows by projecting the approximation error onto the $\boldsymbol{\mu}$ component and applying the triangle inequality. Part~(iv) follows from the convergence of the learned score to the posterior score, the Langevin convergence result of Theorem~\ref{thm:langevin_surrogate}, and the analytic Gaussian-mixture representation \eqref{eq:analysis_mixture}. The full proof is given in~\ref{app:nong_proof}.
		
	\end{proof}

	\subsection{Summary of Information Flow}
	\label{sec:summary}
	
	Table~\ref{tab:info_flow} summarizes the main information pathways in
	one $\Omega$ assimilation cycle.
	
	\begin{table}[h]
		\centering
		\caption{Information propagation pathways in the $\Omega$ framework
			at assimilation time $t_k$.
			$\mathbf{V}_k = (\bo X_k,\boldsymbol{\mu}_k)$ is the augmented
			state;
			$\mathbf{R}_{f,k}^{(i)}$ is CGNS forecast covariance for latent variable;
			$\mathbf{P}_{f,k}$ is the empirical augmented forecast covariance \eqref{eq:enkf_cov};
			$\mathbf{R}_{\mathrm{obs}}$ is the observation noise covariance;
			$\mathcal{C}_k = (\bar{\mathbf{V}}_{f,k},
			\mathrm{diag}(\mathbf{P}_{f,k}),\mathbf{Z}_k)$ is the context vector \eqref{eq:context}.
			Notation from
			Sections~\ref{sec:cgns_enkf}--\ref{sec:joint_score}.}
		\label{tab:info_flow}
		\renewcommand{\arraystretch}{1.6}
		\resizebox{\textwidth}{!}{%
			\begin{tabular}{p{4.5cm} p{4.0cm} p{9.5cm} l}
				\hline
				\textbf{Pathway}
				& \textbf{Mechanism}
				& \textbf{Form}
				& \textbf{Behavior} \\
				\hline
				
				$\mathbf{Z}_k \to \bo X_k$
				& EnKF likelihood score
				\eqref{eq:lik_score}
				& $\mathbf{H}_X^*\mathbf{R}_{\mathrm{obs}}^{-1}
				(\mathbf{Z}_k - \mathbf{H}_X\bo X_k)$
				& Linear (Gaussian) \\
				
				$\mathbf{Z}_k \to \bo X^{(i)} \to \boldsymbol{\mu}_k^{(i)}$
				via CGNS mean ODE
				& CGNS filtering ODE \eqref{eq:cgns_mean}
				driven by $d\bo X^{(i)}$
				& $d\boldsymbol{\mu}^{(i)}
				=\bigl(\mathbf{a}_0(\bo X^{(i)},t)
				+\mathbf{a}_1(\bo X^{(i)},t)\,\boldsymbol{\mu}^{(i)}\bigr)\,dt$
				$+\,\mathbf{R}^{(i)}\mathbf{A}_1^*(\bo X^{(i)},t)
				\bigl(\boldsymbol{\Sigma}_X\boldsymbol{\Sigma}_X^*\bigr)^{-1}
				\!\bigl[d\bo X^{(i)}
				-\bigl(\mathbf{A}_0(\bo X^{(i)},t)
				+\mathbf{A}_1(\bo X^{(i)},t)\,\boldsymbol{\mu}^{(i)}\bigr)dt\bigr]$
				& Linear in $\boldsymbol{\mu}$ \\
				
				$\mathbf{R}_{f,k}^{(i)}$ evolution
				& CGNS Riccati ODE \eqref{eq:cgns_cov};
				conditionally deterministic
				(obs.-path-independent)
				& $d\mathbf{R}^{(i)}
				=\bigl[\mathbf{a}_1\mathbf{R}^{(i)}
				+\mathbf{R}^{(i)}\mathbf{a}_1^*
				+\boldsymbol{\Sigma}_Y\boldsymbol{\Sigma}_Y^*
				-\mathbf{R}^{(i)}\mathbf{A}_1^*
				(\boldsymbol{\Sigma}_X\boldsymbol{\Sigma}_X^*)^{-1}
				\mathbf{A}_1\mathbf{R}^{(i)}\bigr]\,dt$
				& Deterministic \\
				
				$\mathbf{Z}_k \to \bo X_k \to \boldsymbol{\mu}_k$
				via EnKF cross-covariance
				& Kalman gain on the
				$\boldsymbol{\mu}$-block of
				$\mathbf{P}_{f,k}$
				\eqref{eq:enkf_update}
				& $\boldsymbol{\mu}_{a,k}^{(i)}
				= \boldsymbol{\mu}_{f,k}^{(i)}
				+ \mathbf{K}_k^{\boldsymbol{\mu}}
				\!\bigl(\mathbf{Z}_k
				- \mathbf{H}_X\bo X_{f,k}^{(i)}
				+ \boldsymbol{\epsilon}_k^{(i)}\bigr)$,
				\quad
				$\mathbf{K}_k^{\boldsymbol{\mu}}
				= \mathbf{P}_{f,k}^{\boldsymbol{\mu}X}\mathbf{H}_X^*
				\!\bigl(\mathbf{H}_X\mathbf{P}_{f,k}^{XX}\mathbf{H}_X^*
				+\mathbf{R}_{\mathrm{obs}}\bigr)^{-1}$
				& Linear (Gaussian) \\
				
				$\mathbf{V}_k \to \bo X_k$
				via residual operator
				& Nonlinear score correction
				on the $X$-component
				\eqref{eq:score_components}
				& $\mathbf{s}_\theta^X
				\!\bigl(\mathbf{V}_k,\,\sigma,\,\mathcal{C}_k\bigr)$
				& Nonlinear \\
				
				$\mathbf{V}_k \to \boldsymbol{\mu}_k$
				via residual operator
				& Nonlinear score correction
				on the $\boldsymbol{\mu}$-component
				\eqref{eq:score_components}
				& $\mathbf{s}_\theta^{\boldsymbol{\mu}}
				\!\bigl(\mathbf{V}_k,\,\sigma,\,\mathcal{C}_k\bigr)$
				& Nonlinear \\
				
				Joint Langevin update of
				$(\bo X_k,\boldsymbol{\mu}_k)$
				& Annealed Langevin MCMC
				on the augmented state
				\eqref{eq:langevin}
				& $\mathbf{V}_k^{(i,\ell+1)}
				= \mathbf{V}_k^{(i,\ell)}
				+ \dfrac{\alpha_\ell}{2}
				\!\bigl[\mathbf{s}_{\mathrm{EnKF},k}
				\!\bigl(\mathbf{V}_k^{(i,\ell)}\bigr)
				+\mathbf{s}_\theta
				\!\bigl(\mathbf{V}_k^{(i,\ell)},\,
				\sigma_\ell,\,\mathcal{C}_k\bigr)\bigr]
				+\sqrt{\alpha_\ell}\,\boldsymbol{\eta}_\ell^{(i)}$,
				\quad $\boldsymbol{\eta}_\ell^{(i)}\sim\mathcal{N}(0,\mathbf{I})$
				& Nonlinear \\
				
				$\boldsymbol{\mu}_{a,k}^{(i)}
				\to \bo Y_{a,k}^{(i)}$
				& Analytic Gaussian draw
				from corrected mean
				\eqref{eq:y_sampling}
				& $\bo Y_{a,k}^{(i)}
				\sim\mathcal{N}\!\bigl(
				\boldsymbol{\mu}_{a,k}^{(i)},\,
				\mathbf{R}_{f,k}^{(i)}\bigr)$
				& Exact (closed-form) \\
				
				$p(\bo Y_k\mid\mathbf{Z}_{0:k})$
				& Gaussian mixture over
				the corrected ensemble
				\eqref{eq:analysis_mixture}
				& $\dfrac{1}{N}\displaystyle\sum_{i=1}^{N}
				\mathcal{N}\!\bigl(
				\boldsymbol{\mu}_{a,k}^{(i)},\,
				\mathbf{R}_{f,k}^{(i)}\bigr)$
				& Non-Gaussian (mixture) \\
				
				\hline
			\end{tabular}%
		}
	\end{table}
	
	The EnKF-CGNS baseline provides a stable Gaussian first-order approximation, the residual operator learns the missing nonlinear posterior score, the Langevin step transports the augmented ensemble toward the corrected posterior, and the latent variable is recovered analytically as a Gaussian mixture. This is the core mechanism by which $\Omega$ combines the stability of CGNS filtering with the flexibility of score-based non-Gaussian correction.

	\section{Numerical Validation in Nonlinear Systems with Intermittency and Extreme Events}
	\label{sec:experiments}
	
We validate $\Omega$ on three nonlinear dynamical systems that span a range of complexity, intermittency, and model structure. The dyad model serves as a canonical testbed for strongly non-Gaussian filtering and allows detailed examination of posterior recovery in a low-dimensional setting. We next consider a topographic mean-flow interaction model, a physically motivated geophysical system that exhibits intermittent bursts, extreme events, and highly non-Gaussian statistics. Both systems satisfy the CGNS structure \eqref{eq:cgns_obs}--\eqref{eq:cgns_lat} exactly, enabling us to assess the performance of $\Omega$ when the analytical conditional-Gaussian framework is fully available for unresolved modes. Finally, we examine the stochastic Lorenz 96 system, a moderately high-dimensional nonlinear model that does not admit a natural CGNS representation. In this case, a surrogate CGNS is constructed, providing a stringent test of the robustness of $\Omega$ under model approximation.

Across all experiments, we compare $\Omega$ against two baseline methods: the standard EnKF and the CGNS-EnKF. The latter applies the EnKF analysis step on the augmented CGNS state without any learned residual correction. These comparisons isolate the benefits of the analytical CGNS representation from those of the learned non-Gaussian correction. For the dyad model, we additionally include a particle filter (PF) benchmark, whose ability to accurately resolve the posterior distribution provides a reference for assessing how well $\Omega$ captures strongly non-Gaussian posterior structures.

\subsection{Recovering Non-Gaussian Latent Dynamics in an Intermittent Dyad System}
\label{sec:exp_dyad}

The dyad model is a canonical stochastic system for studying intermittency, nonlinear energy transfer, and non-Gaussian fluctuations in dynamical systems \cite{majda2018model}. Despite its low dimensionality, the model exhibits many phenomena commonly associated with significantly more complex turbulent systems, including intermittent bursts, heavy-tailed statistics, and strong nonlinear interactions between observed and hidden variables \cite{majda2013physics}. These features make the dyad model an ideal testbed for assessing whether a data assimilation method can accurately reconstruct latent dynamics and characterize non-Gaussian posterior distributions beyond Gaussian approximations.

The model describes the nonlinear interaction between an observed variable $x$ and an unobserved variable $y$:
\begin{align}
	dx &= \left(-d_x x + \gamma x y + f_x\right) dt
	+ \eta_x\, dW_x,
	\label{eq:dyad_x} \\
	dy &= \left(-d_y y - \gamma x^2 + f_y\right) dt
	+ \eta_y\, dW_y,
	\label{eq:dyad_y}
\end{align}
where $d_x,d_y>0$ are damping coefficients, $\gamma$ controls the nonlinear coupling strength, $f_x$ and $f_y$ are external forcings, and $W_x$ and $W_y$ are independent Wiener processes with amplitudes $\eta_x$ and $\eta_y$. The hidden variable $y$ feeds back into the observed dynamics through the bilinear term $\gamma xy$, while the observed variable drives the hidden dynamics through the quadratic term $-\gamma x^2$. This bidirectional nonlinear coupling generates intermittent excursions and strongly non-Gaussian joint statistics.

Observations are available only for the observed variable $x$:
\begin{equation}
	Z_k = x(t_k) + \epsilon_k,
	\qquad
	\epsilon_k \sim \mathcal{N}(0,R_{\mathrm{obs}}).
	\label{eq:dyad_obs}
\end{equation}
Importantly, the dyad model satisfies the CGNS structure \eqref{eq:cgns_obs}--\eqref{eq:cgns_lat} exactly, with $\mathbf{A}_1(x,t)=\gamma x$ and $\mathbf{a}_1(x,t)=-d_y$. This allows us to isolate the effect of the learned residual correction without introducing surrogate-model error. The physical parameters are chosen as
$d_x = 0.5$, $d_y = 0.5$, $\gamma = 2$, $f_x = 0.5$, $f_y = 1$, with noise amplitudes $\eta_x = 0.5$ and $\eta_y = 1$. The truth trajectory is generated using the Euler-Maruyama discretization. The initial condition is $x_0=2$ and $y_0=0$, while the integration time step is $\Delta t = 0.005$, and noisy observations of $x$ are provided every 100 model time steps with observational noise standard deviation $0.2$.

We evaluate two annealed Langevin noise schedules: a fine-grained schedule with
$\sigma \in [0.01,0.7]$, which emphasizes accurate posterior refinement at small noise levels, and a coarser schedule with
$\sigma \in [0.5,1.0]$, which prioritizes stronger regularization. Both schedules produce qualitatively similar posterior estimates. The fine-grained schedule yields slightly lower latent-state RMSE at the expense of additional sampling steps and is therefore adopted throughout this subsection unless stated otherwise.

Figure~\ref{fig:tracking_x} shows the assimilation results for the observed variable $x$. Since $x$ is directly observed, all methods are able to track the truth trajectory accurately. Differences become visible primarily during rapid transitions, where the EnKF exhibits a slightly more conservative response. 

The more stringent test concerns the hidden variable $y$, shown in Figure~\ref{fig:tracking_y}. Unlike $x$, the variable $y$ is never observed directly and must be inferred solely through its nonlinear dynamical coupling with the observed state. Recovering $y$ therefore requires the assimilation method to correctly propagate observational information across the latent-observed interface rather than simply fitting the available measurements. This setting provides a direct assessment of the ability of $\Omega$ to reconstruct hidden intermittent dynamics. The results in figure~\ref{fig:tracking_y} show that $\Omega$ closely follows the true latent trajectory and achieves an accuracy comparable to the particle filter, while the EnKF exhibits noticeably larger deviations, particularly during intermittent excursions where the posterior departs most strongly from Gaussianity.

\begin{figure}[htbp]
	\centering
	\includegraphics[width=.8\textwidth]{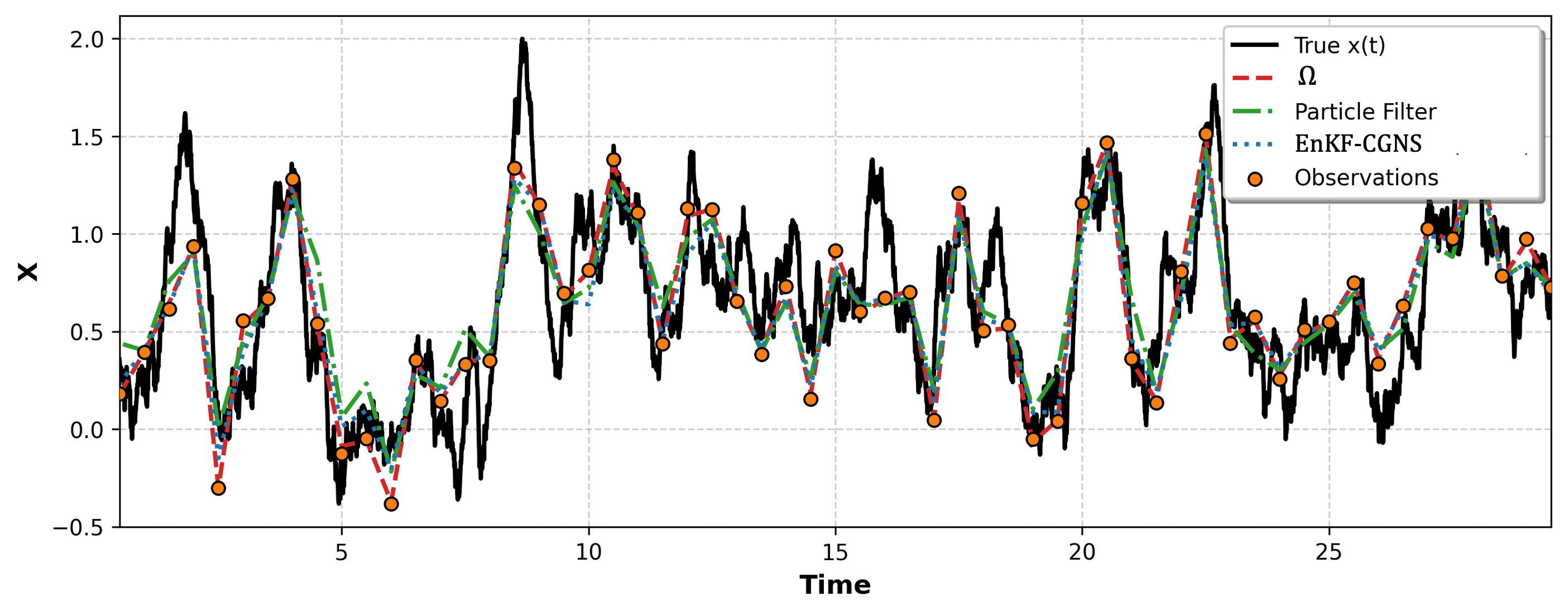}
	\caption{
		Closed-loop assimilation of the observed variable $x$ for the dyad model.
		Shown are the true trajectory (black), the $\Omega$ posterior mean (red), the EnKF posterior mean (blue), and the particle filter posterior mean (green).
	}
	\label{fig:tracking_x}
\end{figure}

\begin{figure}[h]
	\centering
	\includegraphics[width=.8\textwidth]{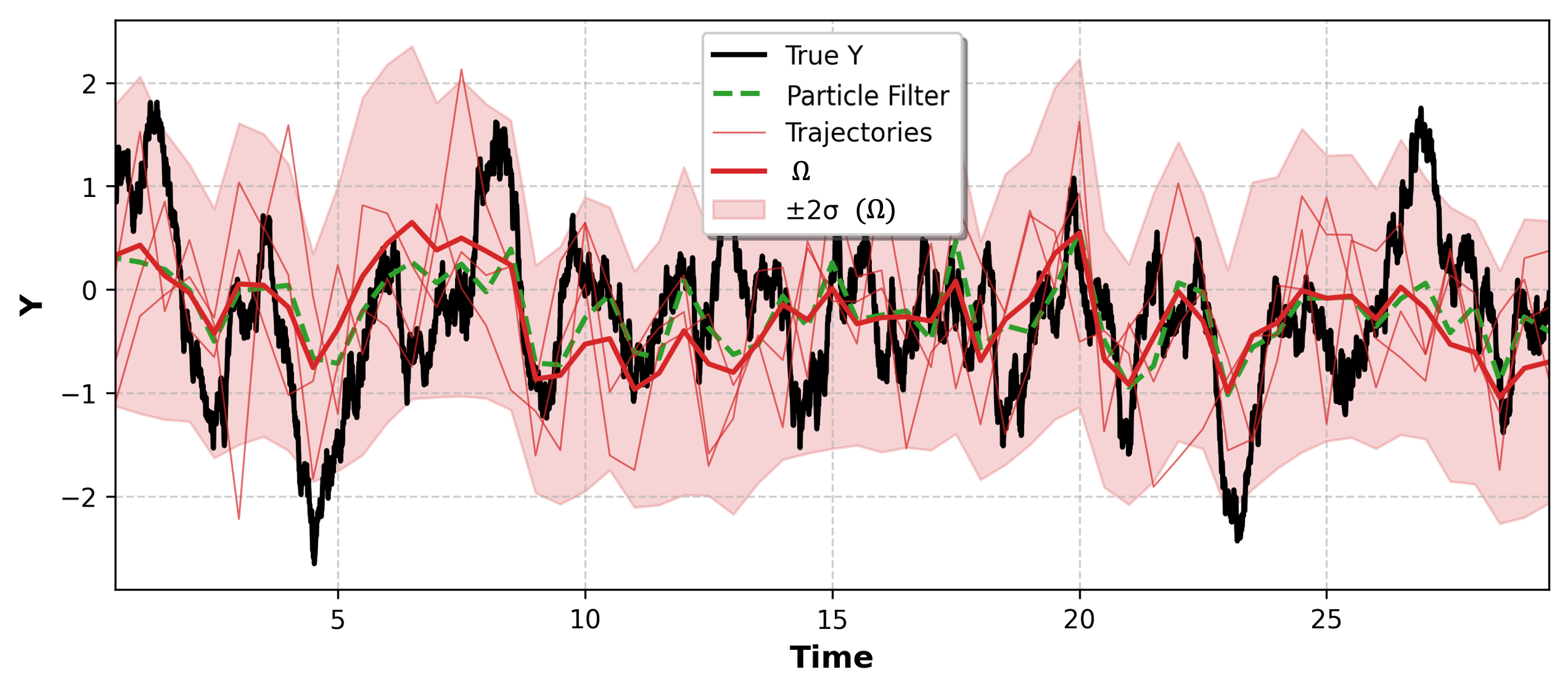}
\caption{Reconstruction of the hidden variable $y$ for the dyad model.  The $\Omega$ posterior mean (red) and particle filter mean (green) both closely track the latent trajectory.  Shading shows $\pm 2$ posterior standard
    deviations for $\Omega$.}
	\label{fig:tracking_y}
\end{figure}
The principal advantage of $\Omega$ is not reflected solely in posterior mean accuracy, but in its ability to recover the full posterior distribution. This distinction is particularly important in intermittent systems, where rare but dynamically significant events are encoded in the tails of the distribution rather than in its mean.

Figure~\ref{fig:pdf_y} compares the marginal probability density functions of $x$ and $y$ accumulated over the entire assimilation window. While the EnKF reproduces the central bulk of the distribution reasonably well, it substantially underestimates the probability of extreme excursions and fails to capture the asymmetric heavy-tailed structure generated by the nonlinear dynamics. This deficiency is particularly evident for the hidden variable $y$, whose statistics are strongly influenced by intermittent bursts and nonlinear energy exchange. In contrast, the Gaussian-mixture representation underlying $\Omega$ accurately reproduces both the central region and the tails of the particle-filter reference distribution, demonstrating its ability to preserve non-Gaussian information throughout the assimilation cycle.

\begin{figure}[h]
	\centering

    \includegraphics[width=0.49\textwidth]{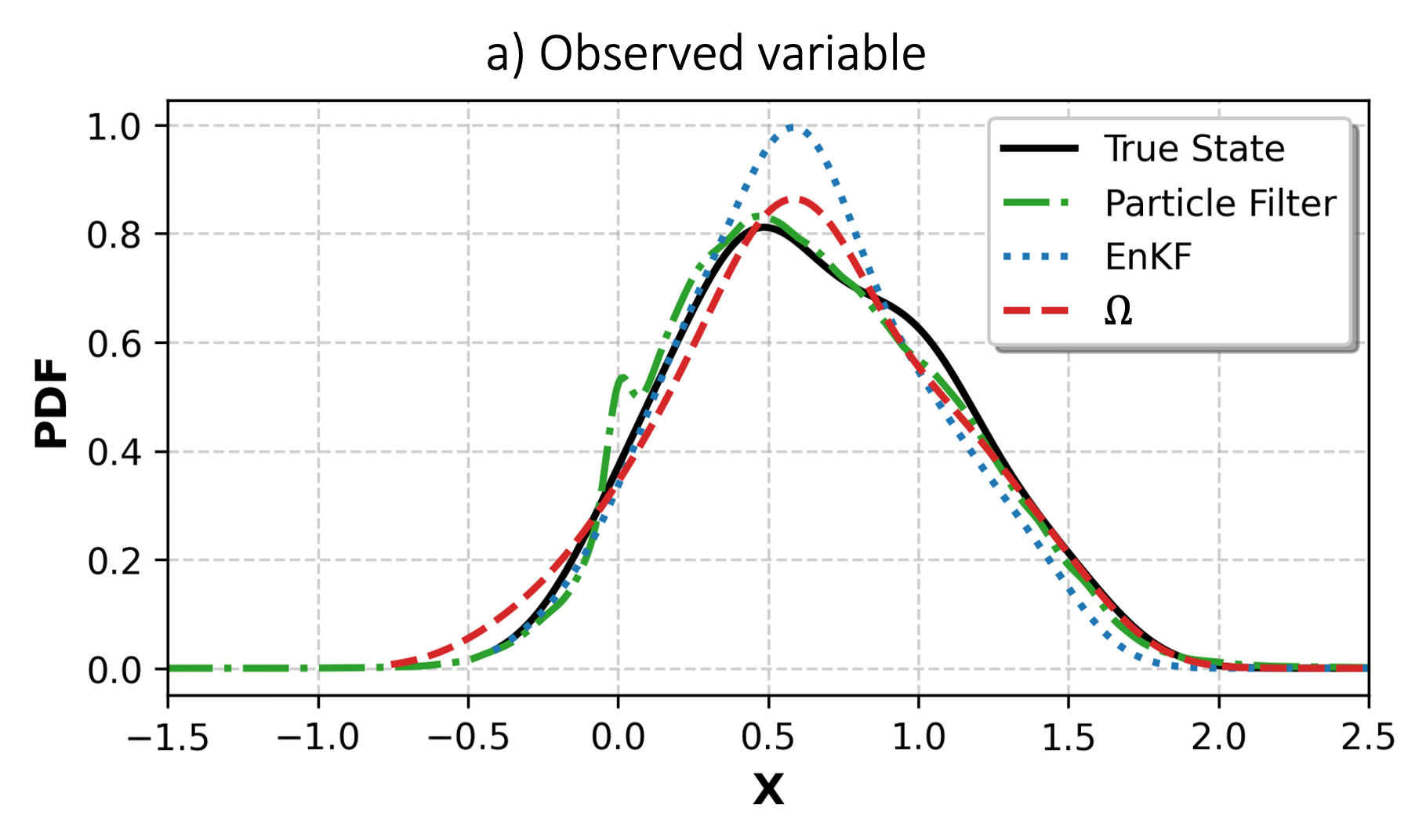}
	\includegraphics[width=0.49\textwidth]{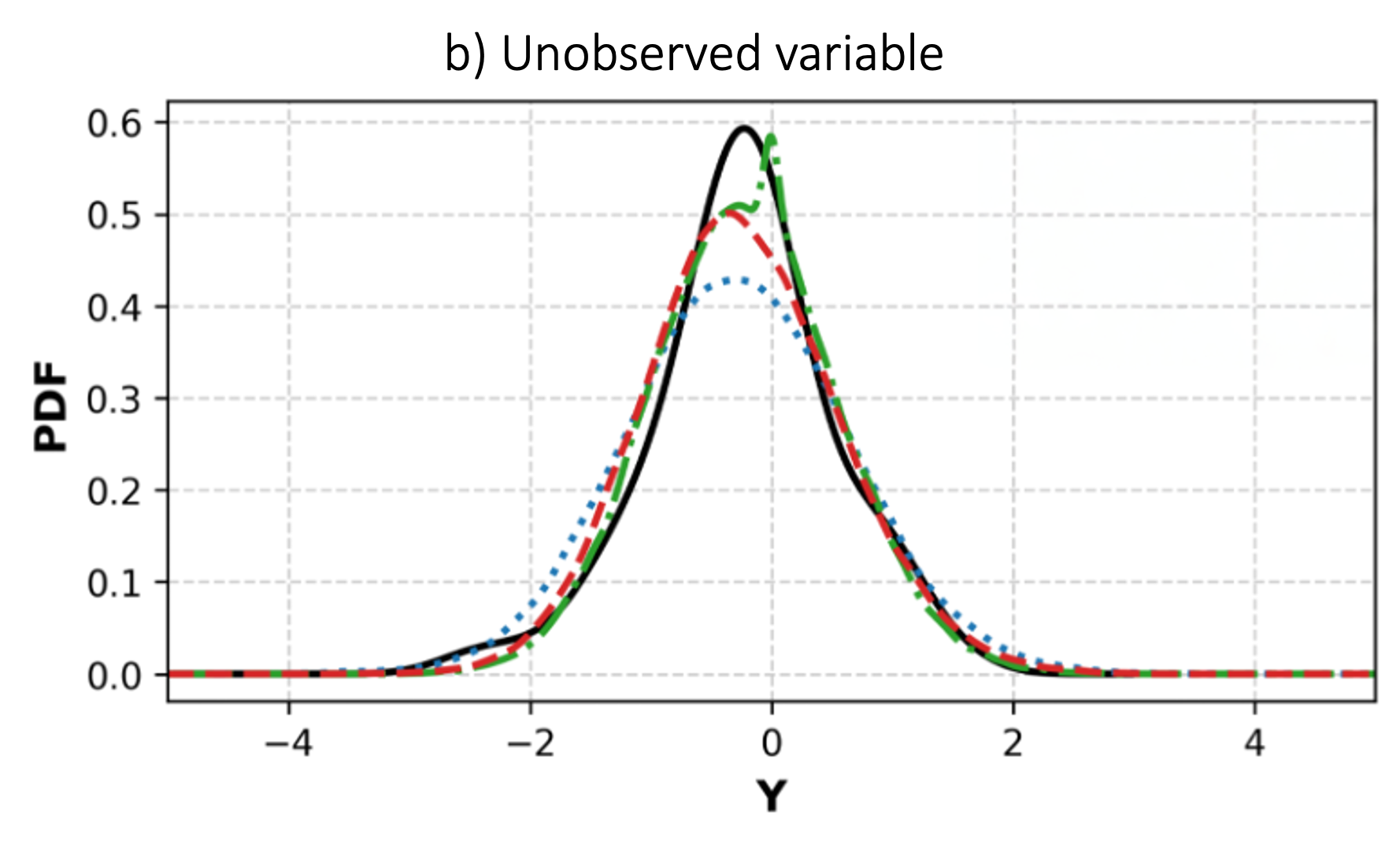}
	\caption{
		Marginal probability density functions of $x$ (a) and $y$ (b) accumulated over all assimilation times.
		The Gaussian-mixture posterior generated by $\Omega$ reproduces the asymmetric and heavy-tailed structure of the particle-filter reference distribution, whereas the EnKF shows larger errors in estimating the probability of extreme events.
	}
	\label{fig:pdf_y}
\end{figure}

Figure~\ref{fig:pdf_y_time} provides an even more stringent assessment by examining the instantaneous posterior distribution of the hidden variable at selected assimilation times. Visible errors appear in the EnKF solutions compared to the benchmark particle filter solution, not only in the variance but also in estimating the mean state (panel (b)). In contrast, $\Omega$ successfully recovers both probability peaks and closely matches the particle-filter reference solution. This result highlights the ability of the learned residual operator to restore posterior structures that cannot be represented through covariance corrections alone. More broadly, these results demonstrate that the benefit of $\Omega$ extends beyond improving the mean state estimates. The framework provides a substantially more faithful characterization of posterior uncertainty, including heavy tails and asymmetry. Such features play a central role in quantifying risk, diagnosing hidden dynamical regimes, and predicting intermittent extreme events, all of which are difficult to capture using Gaussian filtering approaches.

\begin{figure}[htbp]
	\centering
	\includegraphics[width=.6\textwidth]{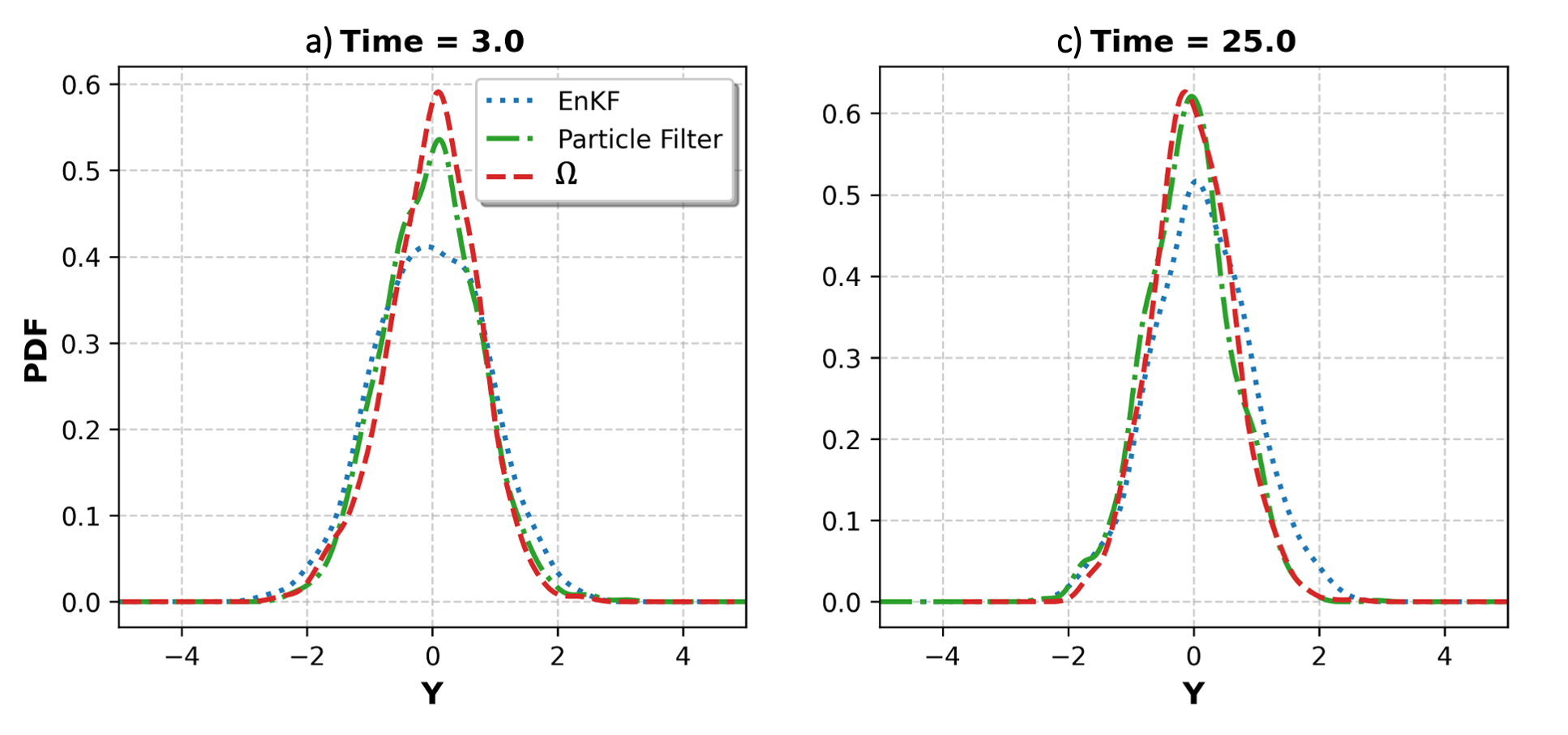}
	\caption{
		Conditional posterior distributions of the hidden variable $y$ at three selected assimilation times.
	}
	\label{fig:pdf_y_time}
\end{figure}

\subsection{Recovering Hidden Intermittent Dynamics in a Topographic Geophysical Flow}
\label{sec:exp_topo}

The topographic model describes barotropic quasi-geostrophic flow over a rough bottom boundary and serves as a physically motivated prototype for large-scale atmosphere-ocean dynamics \cite{majda2006nonlinear,chen2020predicting, vallis2017atmospheric}. Unlike the dyad model, which is primarily used as a canonical test problem, the topographic model captures key mechanisms underlying geophysical variability, including topographic form stress, nonlinear energy exchange across spatial scales, and intermittent extreme events. The model has been extensively used to study predictability, uncertainty quantification, and rare-event dynamics in geophysical fluid systems. These features make it an ideal testbed for assessing whether $\Omega$ can accurately recover hidden intermittent dynamics in a realistic Earth-system setting.

The governing topographic barotropic flow equation reads~\cite{majda2006nonlinear}
\begin{equation}
\begin{aligned}
d\omega&=
\left[
-\nabla^\perp \psi \cdot \nabla \omega 
+f+D(\Delta)\psi\right]dt, \\
dU&=\left[-\fint \frac{\partial h}{\partial x} \psi'
-\gamma_U U+f_U\right]dt,
\end{aligned}
\label{eq:vorticity_pde_app}
\end{equation}
which is defined in a two-dimensional domain $\mathcal{D} : \mathbf{x} = (x,y) \in [-\pi, \pi]^2$ with double periodic boundary conditions. In \eqref{eq:vorticity_pde_app}, $D(\Delta)$ is the dissipation operator, $h$ is the topographic effect, $\gamma_U$ is the dissipation coefficient, and $f$ and $f_U$ are external forcings. The state variable $U$ represents the large-scale zonal flow velocity, while $\omega$ and $\psi$ are the potential vorticity and the stream function, respectively. These two fields are related using $\omega = \omega' + f = \nabla^2 \psi' + h + \beta y$ and $ \psi = -U(t)y + \psi'$ with the prime terms denoting the fluctuation related to the large-scale mean flow. The averaged integration in the mean dynamics $U$ is defined as $\fint p\, d\mathbf{x} = \frac{1}{|\mathcal{D}|}\int_\mathcal{D}\, p\, d\mathbf{x}$, with $|\mathcal{D}|$ being the domain's total area. Therefore, the term $\fint \frac{\partial h}{\partial x} \psi'$ mediates energy exchange between the large-scale flow and unresolved eddies and is responsible for generating strongly non-Gaussian fluctuations and intermittent bursts.

Additionally, a passive tracer model that characterizes the advection-diffusion of the transport of a thermodynamic density field $T(\mathbf{x}, t)$ evolves according to
\begin{equation}
dT=\left[
 -\mathbf{u} \cdot \nabla T-\gamma_T T + \kappa\Delta T\right]dt,
\label{eq:T_pde_app}
\end{equation}
completing the coupled system. $\mathbf{u} = \nabla^\perp \psi$, $d_T$ is the drag term and $\kappa$ is the diffusion coefficient. A particularly interesting case of the barotropic model is the one with layered topography, where the topography and stream function have the following expansion form $h(x,y) = \sum_{k} \hat{h}_k e^{i k \mathbf{l} \cdot \mathbf{x}}, $ and $ \psi(x,y,t) = \sum_{k} \hat{\psi}_k(t) e^{i k \mathbf{l} \cdot \mathbf{x}}$.
The expansion is along one characteristic wavenumber direction $\mathbf{l} = (l_x, l_y)$ with $|\mathbf{l}| = 1$. The full spectral formulation is summarized in~\ref{app:topo_equations}. After projecting onto four Fourier modes for both the vorticity and thermodynamic fields, the reduced-order state becomes
\begin{equation}
\mathbf{U}
=
\bigl(
U,
\omega_1,\ldots,\omega_4,
T_1,\ldots,T_4
\bigr)^\top.
\label{eq:topo_state}
\end{equation}
The mean flow $U$ constitutes the observed component, while all spectral coefficients ${\omega_k,T_k}$ remain hidden. From a data-assimilation perspective, this setting is considerably more challenging than the dyad model because the observed variable represents only a coarse large-scale flow, whereas the hidden variables correspond to unresolved eddy and thermodynamic modes that directly govern the intermittent dynamics.

Importantly, the reduced system satisfies the CGNS structure exactly. The drift of $U$ is linear in the hidden modes through the topographic form stress, while the drifts of $\omega_k$ and $T_k$ remain conditionally linear given $U$; see ~\ref{app:topo_cgns}. This allows us to isolate the impact of non-Gaussian posterior correction without introducing surrogate-model error.

The model is integrated using RK4 with time step $\Delta t = 5\times10^{-4}$, and noisy observations of the mean flow are provided every 200 model time steps:
\begin{equation}
Z_k
=
U(t_k)
+
\epsilon_k,
\qquad
\epsilon_k
\sim
\mathcal{N}(0,R_{\mathrm{obs}}).
\label{eq:topo_obs}
\end{equation}
All spectral modes remain completely unobserved.
The parameters are chosen as
\begin{equation}
\beta=1.0,
\qquad
\gamma_{\omega,k}=0.0125,\quad k=1,\ldots,4,
\qquad
\gamma_U=0.0125,
\end{equation}
and
\begin{equation}
f_T=f_U=0,
\qquad
\kappa_T=0.001,
\qquad
\alpha=1.0,
\qquad
\sigma_U=\frac{1}{\sqrt{2}}.
\end{equation}
All Fourier modes are initialized at unity.

Figure~\ref{fig:topo_omega_traj} compares the reconstructed trajectories of the observed variable $U$ and several representative hidden modes. Since the hidden variables are never observed directly, successful recovery requires the assimilation algorithm to infer the slow-fast multiscale evolution of unresolved eddies and thermodynamic fluctuations solely from measurements of the large-scale mean flow. This problem closely mirrors practical geophysical applications, where sparse observations must constrain a much larger hidden state.

The results demonstrate that $\Omega$ substantially improves the reconstruction of hidden dynamics relative to the EnKF. The improvement is particularly pronounced during intermittent bursts, where the EnKF systematically underestimates fluctuation amplitudes due to its Gaussian analysis assumption. In contrast, the learned residual correction in $\Omega$ allows observational information to propagate nonlinearly into the hidden modes, yielding more accurate reconstruction of extreme excursions and energetic events.

\begin{figure}[htbp]
\centering
\includegraphics[width=0.9\textwidth]{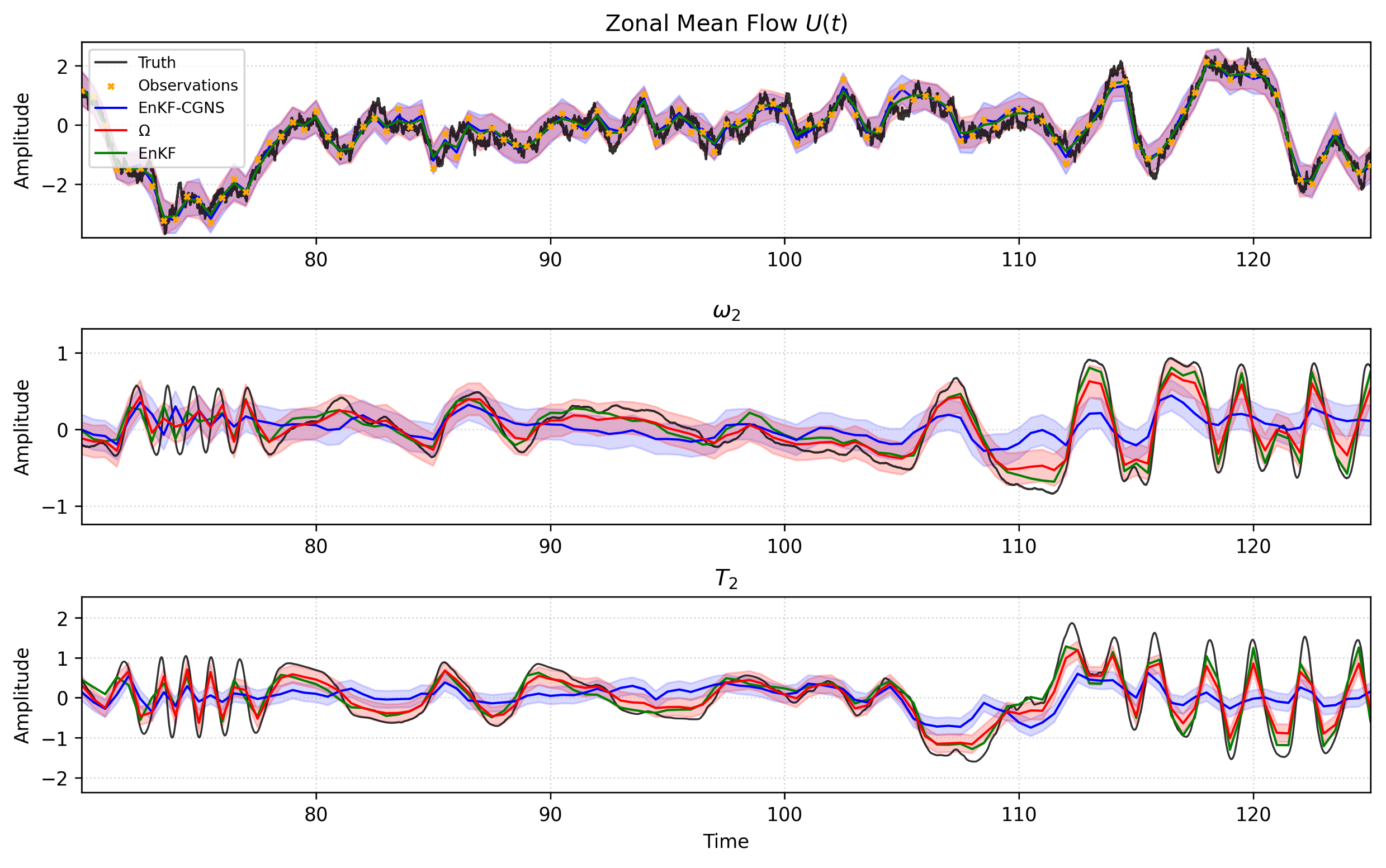}
\caption{
Posterior mean trajectories of representative hidden spectral modes.
$\Omega$ more accurately reconstructs intermittent fluctuations and extreme excursions than the EnKF, particularly during periods of enhanced nonlinear activity. The EnKF uncertainty band is omitted for visual clarity and overall easier interpretation. 
}
\label{fig:topo_omega_traj}
\end{figure}

A more revealing comparison is provided by the posterior distributions. Figure~\ref{fig:topo_pdf} shows the posterior distribution of a representative hidden vorticity mode. The reference distribution exhibits clear asymmetry and heavy-tailed behavior arising from intermittent energy exchange between the mean flow and unresolved eddies. Such structures are important because they govern the probability of rare but dynamically significant events.

The EnKF reproduces only the central bulk of the distribution and substantially underestimates the probability of extreme excursions. In contrast, the Gaussian-mixture representation underlying $\Omega$ more accurately captures both the asymmetric tail behavior and the overall shape of the reference distribution. This demonstrates that the benefits of $\Omega$ extend beyond improved state estimation and include substantially improved uncertainty quantification in intermittent geophysical systems.

The large-scale flow structures can be visualized through the streamfunction field. Given the zonal mean flow $U(t)$ and the Fourier mode amplitudes, the streamfunction is reconstructed as
\begin{equation}
\psi(x,y,t)=
-U(t),y+
\sum_{\mathbf{k}\in\mathcal{K}}
\phi_{\mathbf{k}}(t)
e^{i(k_x x+k_y y)},
\end{equation}
where $\mathbf{k}=(k_x,k_y)$ denotes the wavevector and $\phi_{\mathbf{k}}$ are the corresponding streamfunction Fourier coefficients. The resulting streamfunction field provides a spatial representation of the large-scale circulation and coherent structures generated by the model. Figure~\ref{fig:streamfunction} displays two representative snapshots of the reconstructed streamfunction field obtained from the simulated dynamics. The top row shows a blocking-like pattern, while the bottom row illustrates a strong zonal jet corresponding to a nearly unblocked field.

\begin{figure}[t]
\centering
\includegraphics[width=0.85\textwidth]{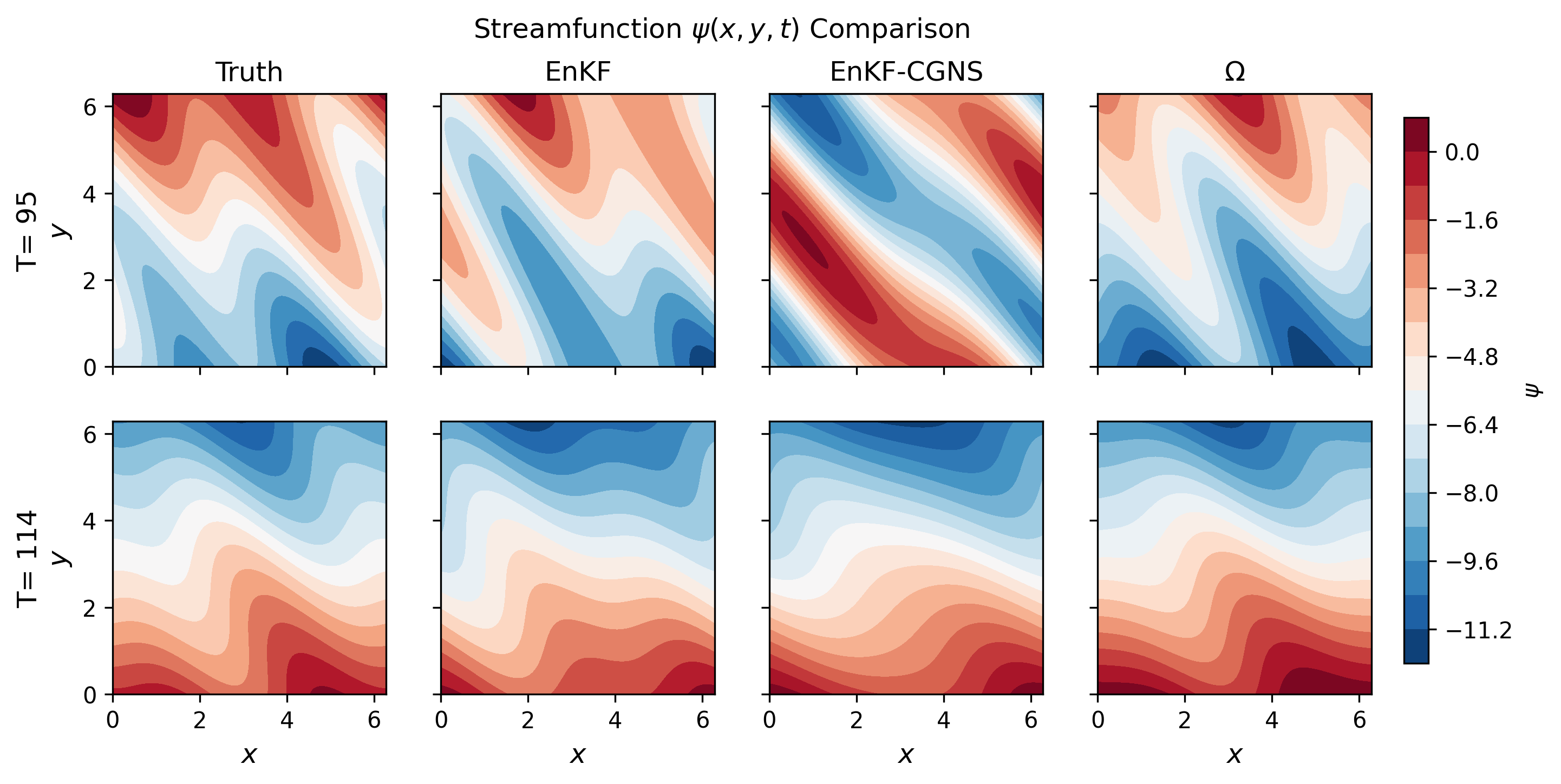}
\caption{Snapshot of the reconstructed streamfunction field $\psi(x,y,t)$ at a representative time instance. The field is obtained from the Fourier-mode representation of the model and illustrates the spatial organization of the large-scale circulation and wave structures.}
\label{fig:streamfunction}
\end{figure}

\begin{figure}[htbp]
\centering
\includegraphics[width=0.7\textwidth]{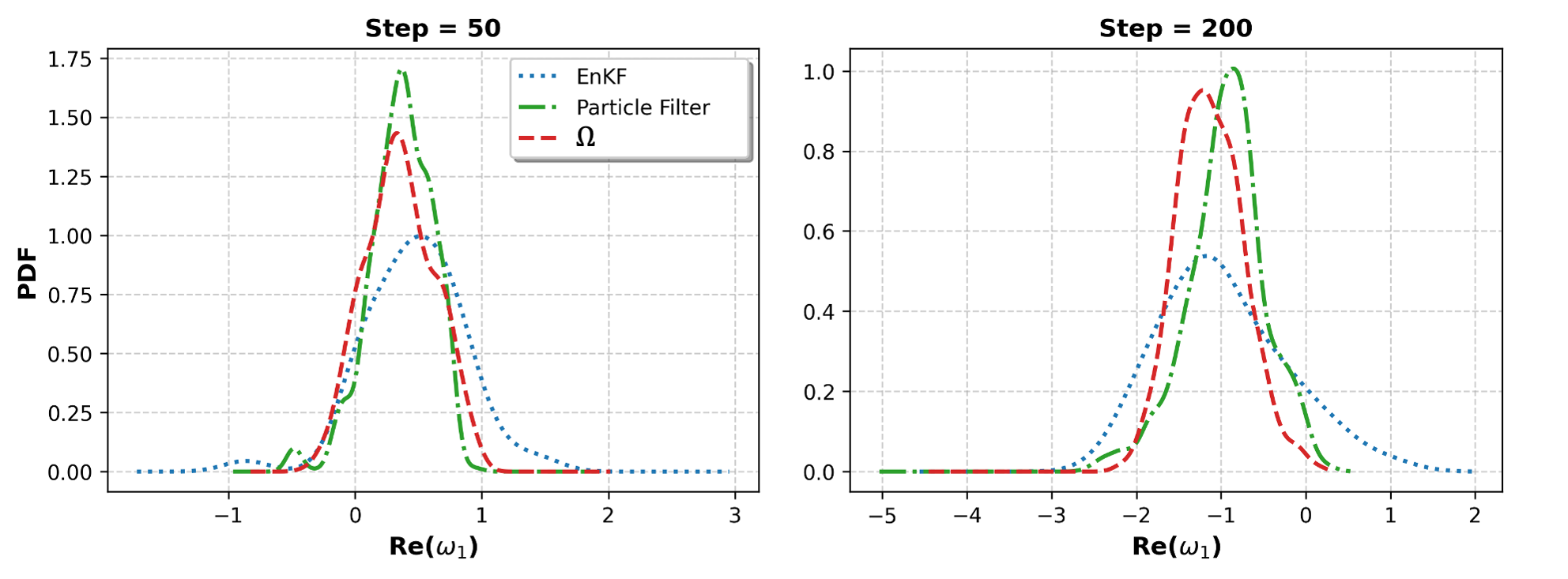}
\caption{
Conditional distribution of a representative hidden vorticity mode.
The asymmetric heavy-tailed structure associated with intermittent dynamics is accurately captured by $\Omega$ but is significantly underestimated by the EnKF.
}
\label{fig:topo_pdf}
\end{figure}

Table~\ref{tab:topo} summarizes the reconstruction errors for representative hidden variables. Improvements are observed consistently across both vorticity and thermodynamic modes. While the reduction in RMSE is moderate, the distributional improvements shown in Figure~\ref{fig:topo_pdf} indicate a substantially more accurate representation of posterior uncertainty. This distinction is particularly important in geophysical applications, where accurate characterization of rare events and hidden regime transitions is often more important than small improvements in posterior mean estimates.

\begin{table}[htbp]
\centering
\caption{RMSE for representative hidden variables of the topographic model.}
\label{tab:topo}
\renewcommand{\arraystretch}{1.2}
\begin{tabular}{lcccc}
\toprule
Method & $\omega_1$ & $T_1$ & $\omega_2$ & $T_2$ \\
\midrule
CGNS-EnKF      & 0.170 & 0.162 & 0.110 & 0.089 \\
Standard EnKF  & 0.121 & 0.123 & 0.073 & 0.061 \\
$\Omega$       & 0.110 & 0.102 & 0.067 & 0.053 \\
\bottomrule
\end{tabular}
\end{table}

These results demonstrate that the advantages of $\Omega$ persist beyond low-dimensional benchmark systems and extend to physically meaningful geophysical models with intermittent extreme events. The framework not only improves hidden-state reconstruction but also provides a substantially richer representation of posterior uncertainty, enabling more reliable quantification of rare events and unresolved dynamical variability.

\subsection{Robustness Under Model Misspecification: Stochastic Lorenz 96}
\label{sec:exp_l96}

The stochastic Lorenz 96 system \cite{lorenz1996predictability, arnold2013stochastic} is a widely used benchmark for studying chaos, predictability, and data assimilation in high-dimensional nonlinear systems. Unlike the dyad and topographic models considered previously, the Lorenz 96 system does not satisfy the conditional Gaussian structure required by the CGNS framework. It therefore provides a stringent test of the robustness of $\Omega$ when the forecast model is only an approximate surrogate of the true dynamics.

The stochastic Lorenz 96 model is given by
\begin{equation}
du_j
=
\left[
(u_{j+1}-u_{j-2})u_{j-1}
-u_j
+F
\right]dt
+
\sigma dW_j(t),
\qquad
j=1,\ldots,N,
\label{eq:L96}
\end{equation}
where indices are interpreted modulo $N$, $F=8$ is the forcing parameter that triggers strongly chaotic behavior of the system, and $\sigma=0.5$ is the stochastic forcing amplitude.

To mimic a partially observed setting, we partition the state into observed and hidden components:
\begin{equation}
\mathbf{x}
=
(u_1,u_3,u_5,\ldots),
\qquad
\mathbf{y}
=
(u_2,u_4,u_6,\ldots),
\end{equation}
where only the odd-indexed variables are observed at discrete times.

A major challenge arises because the hidden-state dynamics contain nonlinear interactions among multiple hidden variables. In particular, terms involving products such as $u_{2k+2}u_{2k-1}$ appear in the latent-state drift, violating the CGNS requirement that the hidden dynamics remain conditionally linear given the observed variables. Consequently, the Lorenz 96 system lies outside the exact CGNS class and cannot be treated analytically using the conditional Gaussian framework.

To construct a tractable baseline model, we replace the hidden-hidden quadratic interactions with local linear approximations and introduce an additional inflation noise term to account for unresolved nonlinear effects. This leads to the surrogate CGNS model
\begin{align}
dX_k
&=
\left[
-X_{k-1}Y_k
-(1+\gamma)X_k
+F
\right]dt
+
(\sigma+\sigma_s)dW_k^X,
\label{eq:surr_obs}
\\
dY_k
&=
\left[
-X_{k-1}Y_{k-1}
-Y_k
+X_kX_{k-1}
+F
\right]dt
+
\sigma dW_k^Y,
\label{eq:surr_hid}
\end{align}
where
\begin{equation}
\gamma = 0.1,
\qquad
F = 8,
\qquad
\sigma = 0.5,
\qquad
\sigma_s = 5.
\end{equation}

The surrogate model satisfies the CGNS structure because the observed-state drift is linear in the hidden variables and the hidden-state drift is linear in $(Y_{k-1},Y_k)$. Details of the derivation, the verification of the CGNS conditions, and the role of the inflation noise are provided in ~\ref{app:l96_surrogate}.

This example is particularly important because it evaluates the applicability of $\Omega$ beyond exact CGNS systems. In contrast to the previous examples, the goal is not to demonstrate performance under ideal conditions, but rather to assess whether the framework remains effective when the conditional Gaussian model is only an approximation of the true dynamics.

Figure~\ref{fig:l96_compare} compares posterior mean trajectories for representative observed and hidden variables. The standard EnKF is implemented in a perfect-model setting and therefore has access to the true Lorenz 96 dynamics. In contrast, both the CGNS-EnKF and $\Omega$ deliberately employ the approximate surrogate model \eqref{eq:surr_obs}--\eqref{eq:surr_hid}. The CGNS-EnKF uses the surrogate forecast model together with the standard linear Kalman analysis step, whereas $\Omega$ augments the same surrogate with the learned residual correction.
Visual inspection reveals that the CGNS-EnKF suffers from systematic biases caused by the surrogate approximation, particularly in the hidden variables, where unresolved nonlinear interactions play a dominant role. The residual correction in $\Omega$ significantly reduces these biases and produces trajectories that remain consistently closer to the truth than those obtained using the surrogate model alone. This result demonstrates that the learned operator is capable of correcting a substantial fraction of the structural model error introduced by the approximate CGNS representation. This behavior remains the same across all other variables. Finally, in Figure~\ref{fig:l96_Hov_compare}, we present the Hovmöller diagrams of $\Omega$, the EnKF using the perfect model, and the truth. $\Omega$ closely captures the system’s chaotic and wave propagating behavior and agrees well with the truth.

\begin{figure}[htbp]
\centering
\includegraphics[width=\linewidth]{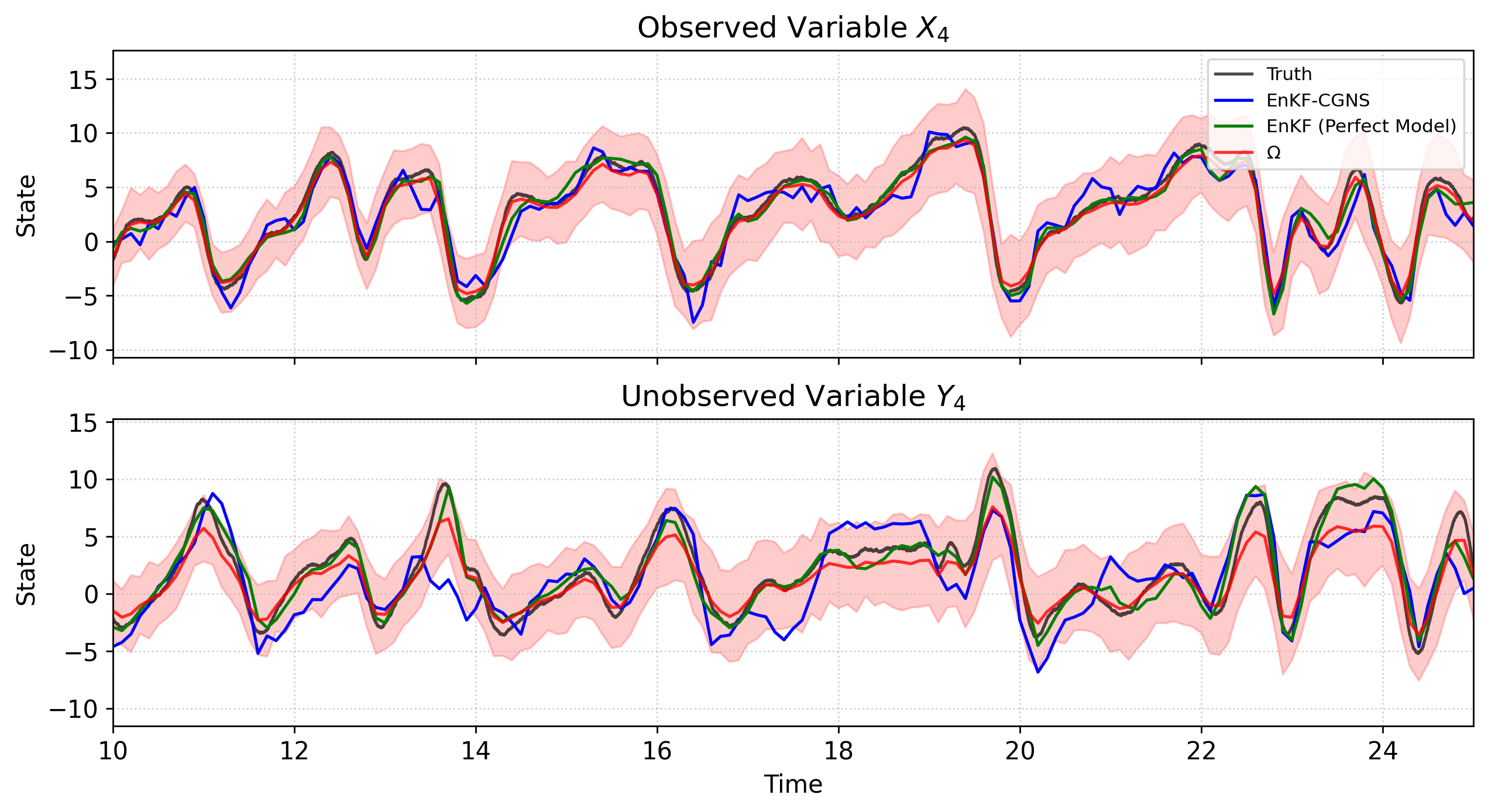}
\caption{
Posterior mean trajectories for representative observed and hidden variables of the stochastic Lorenz 96 system.
The residual correction in $\Omega$ substantially improves the reconstruction of hidden variables relative to the surrogate-only CGNS-EnKF and is slosely comparble with the EnKF applied on the on the perfect model. The behavior is the same across all other variables $X_i, \,Y_i, \,i=1,\cdots,10$. 
}
\label{fig:l96_compare}
\end{figure}
\begin{figure}[h]
\centering
\includegraphics[width=.5\linewidth]{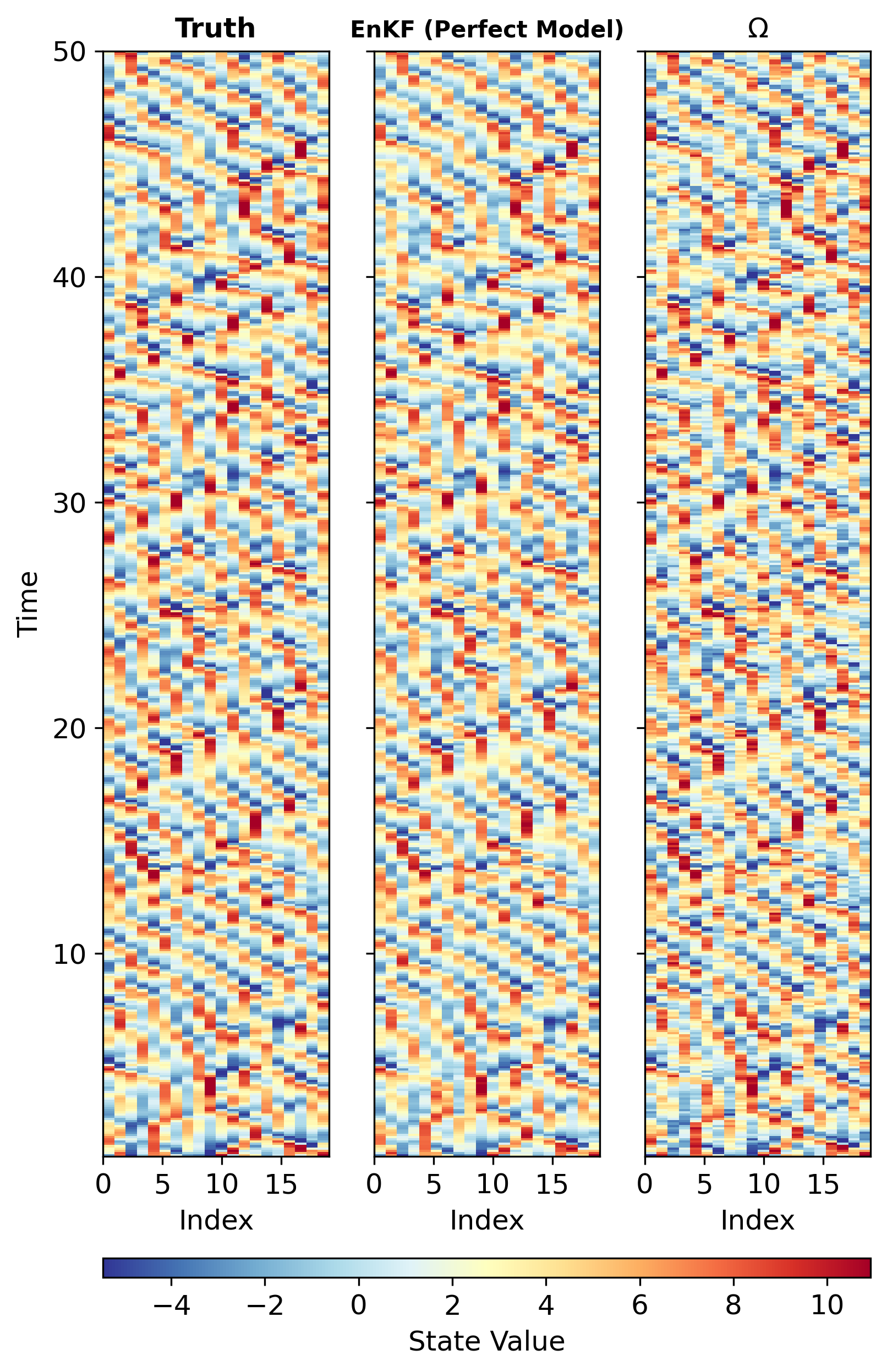}
\caption{
 Hovm\H{o}ller diagram of mean trajectories of the stochastic Lorenz 96 system. The $\Omega$ results are very close to the truth and the EnKF applied on the perfect model. 
}
\label{fig:l96_Hov_compare}
\end{figure}

Table~\ref{tab:nrmse} reports RMSE values for representative hidden variables. Relative to the CGNS-EnKF baseline, $\Omega$ consistently improves the reconstruction of all hidden variables, reducing the error by approximately $8$--$15\%$. Although the perfect-model EnKF achieves the lowest RMSE overall, this comparison should be interpreted in light of the fundamentally different forecast assumptions. The EnKF benefits from exact knowledge of the governing equations, whereas $\Omega$ relies only on a simplified surrogate model constructed to satisfy the CGNS structure. From this perspective, the relatively small performance gap between $\Omega$ and the perfect-model EnKF highlights the robustness of the framework under substantial model misspecification.

\begin{table}[htbp]
\centering
\caption{RMSE for representative hidden variables of the stochastic Lorenz--96 system.}
\label{tab:nrmse}
\renewcommand{\arraystretch}{1.2}
\begin{tabular}{lccc}
\toprule
Method & $Y_1$ & $Y_4$ & $Y_9$ \\
\midrule
CGNS-EnKF                  & 0.525 & 0.608 & 0.633 \\
Standard EnKF (Perfect Model) & 0.451 & 0.459 & 0.440 \\
$\Omega$                   & 0.474 & 0.548 & 0.540 \\
\bottomrule
\end{tabular}
\end{table}

More importantly, the Lorenz 96 experiment highlights several practical advantages of the $\Omega$ framework. First, the surrogate CGNS model provides an analytically tractable representation of the hidden variables and avoids propagating the full nonlinear latent dynamics within the assimilation procedure. This significantly reduces computational cost relative to methods that require repeated integration of the complete nonlinear system. Second, unlike many operational implementations of the EnKF, which often rely on empirically tuned covariance inflation and localization strategies to maintain filter performance, the $\Omega$ framework does not require such ad hoc corrections. Instead, the learned residual operator provides a systematic mechanism for correcting both non-Gaussian posterior structures and structural model deficiencies.

The Lorenz 96 results demonstrate that the applicability of $\Omega$ extends well beyond exact CGNS systems. Even when the conditional Gaussian representation is only a crude approximation of the true dynamics, the learned residual correction remains effective and systematically improves posterior recovery. Since the surrogate model employed here is intentionally simple, substantial additional gains are likely achievable through more sophisticated CGNS constructions, such as local higher-order approximations or Koopman-based lifting approaches. This suggests that $\Omega$ provides a flexible framework that combines the efficiency of analytical conditional-Gaussian surrogates with data-driven correction of model error, making it particularly attractive for high-dimensional nonlinear systems where perfect forecast models are unavailable and computational efficiency is a primary concern.

	\section{Conclusions}
	\label{sec:conclud}
	We introduced $\Omega$, an operator-based mixture ensemble framework for non-Gaussian data assimilation in partially observed nonlinear dynamical systems. The framework combines three ingredients: a CGNS surrogate forecast, a learned residual score operator, and annealed Langevin sampling in an augmented state space. Together, these components provide a practical mechanism for enriching the Gaussian EnKF analysis while avoiding the weight degeneracy of particle filters and the supervised training requirements commonly associated with score-based data assimilation methods.
	
	The central idea of $\Omega$ is to reformulate data assimilation in an augmented state space consisting of the observed variables and the conditional means of the latent variables under a CGNS surrogate. This lifting preserves the essential posterior information needed for non-Gaussian correction while avoiding direct sampling in the potentially high-dimensional latent space. As a result, nonlinear score corrections can be learned and applied in a lower-dimensional representation, while latent uncertainty is retained analytically through the conditional Gaussian structure. The resulting Gaussian-mixture analysis naturally captures non-Gaussian posterior features in both observed and latent variables.
	
	On the theoretical side, we established a sequence of consistency and convergence results linking denoising score matching, Noise2Noise training on surrogate ensembles, Langevin sampling, and Gaussian-mixture posterior recovery. These results clarify the role of the residual operator, quantify the effects of finite-ensemble sampling and surrogate misspecification, and provide conditions under which the recovered posterior converges to the true posterior distribution. The analysis also explains why the framework remains scalable with respect to the latent dimension: the nonlinear correction is performed in the augmented filtering variables, while integration over the latent state is carried out analytically within each conditional Gaussian component.
	
	Numerical experiments on three nonlinear systems demonstrate that $\Omega$ consistently improves posterior reconstruction and latent-state recovery relative to both the standard EnKF and the CGNS baseline. The framework remains effective not only when the underlying dynamics possess an exact CGNS structure, but also when the surrogate must be constructed through local linearization and model inflation, suggesting a broader range of applicability than exact conditional Gaussian systems alone.
	
	Several important challenges remain. The residual operator is currently trained offline, motivating future work on online and adaptive learning strategies capable of responding to evolving dynamics and observation networks. Surrogate misspecification introduces an irreducible approximation floor, and developing adaptive mechanisms for surrogate refinement is a natural next step. In addition, extending $\Omega$ to truly high-dimensional geophysical applications will require combining the augmented-state formulation with localization and scalable operator-learning architectures.
	
	More broadly, $\Omega$ illustrates how analytic structure and learned corrections can be integrated within a single data assimilation framework. Rather than replacing model-based inference with machine learning, the framework uses learning to correct precisely those aspects of the posterior that are inaccessible to the Gaussian approximation while retaining the computational efficiency and interpretability of the underlying dynamical model. This perspective offers a promising pathway toward scalable non-Gaussian data assimilation in complex multiscale systems.

    \section{Acknowledgment} N.C. is supported by the Office of Naval Research under Award No. N00014-24-1-2244 and by the Army Research Office under Award No. W911NF-23-1-0118. P.B. is partially supported by the aforementioned ONR award and partially by the University of Wisconsin-Madison Office of the Vice Chancellor for Research, with funding from the Wisconsin Alumni Research Foundation.

	\bibliography{ref}
	
	\appendix
	
	\section{Proof of Theorem~\ref{thm:dsm_consistency} }
	\label{app:proof_dsm}
	
	\begin{proof}
		\textbf{DSM identity.}
		Write $\tilde{\mathbf{V}}_k = \mathbf{V}_k + \sigma\boldsymbol{\epsilon}$
		with $\boldsymbol{\epsilon} \sim \mathcal{N}(0, \mathbf{I})$ and
		let $\mathbf{s}_{\mathrm{tot},k} = \mathbf{s}_{\mathrm{EnKF},k} +
		\mathbf{s}_\theta$. The DSM objective \eqref{eq:dsm_loss} is
		\begin{equation}
			\mathcal{L}(\theta) = \mathbb{E}_{\mathbf{V}_k,
				\boldsymbol{\epsilon}}\!\left[\sigma^2 \left\|
			\mathbf{s}_{\mathrm{tot},k}(\tilde{\mathbf{V}}_k) +
			\frac{\boldsymbol{\epsilon}}{\sigma}\right\|^2\right].
			\label{eq:proof_loss}
		\end{equation}
		Since $\nabla_{\tilde{\mathbf{V}}_k} \log
		p_\sigma(\tilde{\mathbf{V}}_k \mid \mathbf{V}_k) =
		-\boldsymbol{\epsilon}/\sigma$, taking the conditional expectation
		given $\tilde{\mathbf{V}}_k$ and applying Stein's
		identity~\cite{stein1981estimation} $\left(\mathbb{E}_{\tilde{\bo V} \mid \bo  V} \left[(\tilde{\bo V} - \bo V)\, h(\tilde{\bo V})\right] = \sigma^2 \mathbb{E}_{\tilde{\bo V} \mid \bo V} \left[\nabla_{\tilde{\bo V}} h(\tilde{\bo V})\right] \text{ with $h$ being weakly differentiable}\right)$ yields
		\begin{equation}
			\mathbb{E}\!\left[-\frac{\boldsymbol{\epsilon}}{\sigma}
			\;\middle|\; \tilde{\mathbf{V}}_k\right] =
			\mathbb{E}\!\left[\nabla_{\tilde{\mathbf{V}}_k} \log
			p_\sigma(\tilde{\mathbf{V}}_k \mid \mathbf{V}_k)
			\;\middle|\; \tilde{\mathbf{V}}_k\right] =
			\nabla_{\tilde{\mathbf{V}}_k} \log
			p_\sigma(\tilde{\mathbf{V}}_k).
			\label{eq:stein_identity}
		\end{equation}
		Expanding the squared norm in \eqref{eq:proof_loss} and using
		\eqref{eq:stein_identity} to evaluate the cross-term:
		\begin{align}
			\mathcal{L}(\theta)
			&= \mathbb{E}_{p_\sigma}\!\left[\sigma^2\left\| \mathbf{s}_{\mathrm{tot},k} -  \nabla\log p_\sigma\right\|^2\right] \notag \\
			&\quad + 2\,\mathbb{E}_{p_\sigma}\!\left[\sigma^2\left\langle \mathbf{s}_{\mathrm{tot},k}(\tilde{\mathbf{V}}_k) - \nabla\log p_\sigma(\tilde{\mathbf{V}}_k),\,
			\nabla\log p_\sigma(\tilde{\mathbf{V}}_k) +
			\frac{\boldsymbol{\epsilon}}{\sigma}\right\rangle\right]
			\notag \\
			&\quad + \mathbb{E}_{p_\sigma}\!\left[\sigma^2\left\| \nabla\log p_\sigma(\tilde{\mathbf{V}}_k)\right\|^2\right].
			\label{eq:expand}
		\end{align}
		Since $s_{\rm tot}(\tilde V) - \nabla\log p_\sigma(\tilde V)$ is measurable w.r.t. $\tilde V$, conditioning on $\tilde{\mathbf{V}}_k$ results in vanishing the cross-term as:
		\begin{equation}
			\mathbb{E}\!\left[\nabla\log p_\sigma(\tilde{\mathbf{V}}_k) +
			\frac{\boldsymbol{\epsilon}}{\sigma} \;\middle|\;
			\tilde{\mathbf{V}}_k\right] =
			\nabla\log p_\sigma(\tilde{\mathbf{V}}_k) -
			\nabla\log p_\sigma(\tilde{\mathbf{V}}_k) = \mathbf{0},
			\label{eq:cross_term}
		\end{equation}
		by \eqref{eq:stein_identity}. Setting $C = \mathbb{E}_{p_\sigma}[\sigma^2\|\nabla\log
		p_\sigma(\tilde{\mathbf{V}}_k)\|^2]$, which is finite by the
		finite Fisher information assumption \eqref{eq:fisher_info} and
		the smoothing inequality $\mathcal{I}(p_\sigma) \leq
		\mathcal{I}(p_{\mathrm{aug}})$, establishes
		\eqref{eq:dsm_identity}.
		
		\textbf{ Uniqueness and characterization of the minimizer.}
		The first term in \eqref{eq:expand} is the squared $L^2(p_\sigma)$
		norm of $\mathbf{s}_{\mathrm{tot},k} - \nabla\log p_\sigma$, which
		is a strictly convex functional of $\mathbf{s}_\theta$ (as an
		element of $L^2(p_\sigma)$). Its unique minimizer satisfies
		$\mathbf{s}_{\mathrm{tot},k} = \nabla\log p_\sigma$
		$p_\sigma$-a.e., establishing \eqref{eq:dsm_minimizer}.
		
		\textbf{Small-noise limit.}
		The limit follows from standard properties of Gaussian mollification. Under finite Fisher information \eqref{eq:fisher_info}, the Gaussian-smoothed densities
		\[
		p_\sigma =
		p_{\mathrm{aug}}* \mathcal N(0,\sigma^2I)
		\]
		satisfy the score continuity property
		\[
		\nabla\log p_\sigma \to \nabla\log p_{\mathrm{aug}} \quad
		\text{in }L^2(p_{\mathrm{aug}})
		\]
		as \(\sigma\to0\); see, e.g., \cite{chen2023score}. Combining this convergence with the minimizer identity \eqref{eq:dsm_minimizer},
		\[
		\mathbf s_{\mathrm{EnKF},k} + \mathbf s_{\theta^*}
		= \nabla\log p_\sigma,
		\]
		and applying the triangle inequality yields \eqref{eq:residual_convergence}.
	\end{proof}
	
	\section{Proof of Theorem~\ref{thm:n2n_consistency}}
    \label{app:n2n_proof}
	\begin{proof}
		
		\textbf{Parts (i) and (ii).}
		The argument is identical to the proof of
		Theorem~\ref{thm:dsm_consistency}, with $p_{\mathrm{aug}}$ and $p_\sigma$ replaced throughout by $q_{\mathrm{aug}}$ and $q_\sigma$. Applicability of the Stein identity requires $h = \mathbf{s}_{\mathrm{EnKF},k} + \mathbf{s}_\theta \in L^2(q_\sigma)$, which is given by hypothesis. Since $q_{\mathrm{aug}}$ is a finite Gaussian mixture with finite Fisher information, $q_\sigma$ satisfies all regularity conditions needed in the proof of Theorem~\ref{thm:dsm_consistency}.
		
		\textbf{Part (iii): two-source error decomposition.}
		By Remark~\ref{rem:abs_cont}, $p_{\mathrm{aug}} \ll q_\sigma$ for
		$\sigma > 0$, so the minimizer identity \eqref{eq:n2n_minimizer}
		holds $p_{\mathrm{aug}}$-almost everywhere.  Therefore, for $p_{\mathrm{aug}}$-almost every $\tilde{\mathbf{V}}_k$, \begin{equation}
			\mathbf{s}_{\theta^*}(\tilde{\mathbf{V}}_k)
			- \bigl(\nabla\log p_\sigma(\tilde{\mathbf{V}}_k) - \mathbf{s}_{\mathrm{EnKF},k}(\tilde{\mathbf{V}}_k)\bigr)\;=\;
			\nabla\log q_\sigma(\tilde{\mathbf{V}}_k) - \nabla\log p_\sigma(\tilde{\mathbf{V}}_k).
			\label{eq:pf_iii_eq}
		\end{equation}
		Taking the $L^2(p_{\mathrm{aug}})$ norm of \eqref{eq:pf_iii_eq} and applying the triangle inequality with the intermediate term $\nabla\log p_{\mathrm{CGNS},\sigma}$:
		\begin{align}
			&\bigl\|\mathbf{s}_{\theta^*}
			- \bigl(\nabla\log p_\sigma
			- \mathbf{s}_{\mathrm{EnKF},k}\bigr)
			\bigr\|_{L^2(p_{\mathrm{aug}})}
			\notag \\[4pt]
			&=\; \bigl\|\nabla\log q_\sigma
			- \nabla\log p_\sigma \bigr\|_{L^2(p_{\mathrm{aug}})}
			\label{eq:pf_iii_norm} \\[4pt]
			&\leq\; \underbrace{
				\bigl\|\nabla\log q_\sigma
				- \nabla\log p_{\mathrm{CGNS},\sigma}
				\bigr\|_{L^2(p_{\mathrm{aug}})}
			}_{\varepsilon_{\mathrm{samp}}(N)}
			\;+\; \underbrace{
				\bigl\|\nabla\log p_{\mathrm{CGNS},\sigma} - \nabla\log p_\sigma \bigr\|_{L^2(p_{\mathrm{aug}})}
			}_{\varepsilon_{\mathrm{CGNS}}},
			\label{eq:pf_iii_triangle}
		\end{align}
		which is \eqref{eq:surrogate_error}.  The equality
		\eqref{eq:pf_iii_norm} is an exact equality (not merely an
		inequality) because the absolute continuity established in
		Remark~\ref{rem:abs_cont} ensures that \eqref{eq:pf_iii_eq}
		holds $p_{\mathrm{aug}}$-a.e.
		\textbf{Part (iv): Elimination of sampling error as $N \to \infty$.}
		
		We introduce two standing assumptions for this part, which hold
		under standard EnKF regularity conditions and are natural given
		the finite-ensemble setting.
		
		\begin{assumption}[Bounded ensemble]\label{ass:bounded_ensemble}
			The ensemble members satisfy
			$\sup_{N,i}\|\mathbf{V}_{f,k}^{(i)}\| \leq R < \infty$
			uniformly in $N$.
		\end{assumption}
		
		\begin{assumption}[Bounded density ratio]\label{ass:bounded_ratio}
			The Radon--Nikodym derivative
			$w_N := dp_{\mathrm{aug}}/dq_{\sigma}^{(N)}$
			satisfies $\sup_N \|w_N\|_{L^\infty} \leq M < \infty$.
		\end{assumption}
		
		\textit{Step 1: KL convergence.}
		By standard EnKF consistency~\cite{mandel2011convergence}, the
		empirical mixture $q_{\mathrm{aug}}^{(N)}$ converges weakly to
		$p_{\mathrm{CGNS}}$ as $N\to\infty$. Under
		Assumption~\ref{ass:bounded_ensemble}, every ensemble member lies
		in the ball of radius $R$, so both $q_{\mathrm{aug}}^{(N)}$ and
		$p_{\mathrm{CGNS}}$ are Gaussian mixtures with components bounded
		in $\ell^2$-norm. Hence the log-likelihood ratio
		$\log(q_{\mathrm{aug}}^{(N)}/p_{\mathrm{CGNS}})$ is bounded
		uniformly in $N$, and weak convergence implies KL convergence:
		\begin{equation}
			D_{\mathrm{KL}}\!\left(
			q_{\mathrm{aug}}^{(N)} \,\|\, p_{\mathrm{CGNS}}
			\right) \to 0
			\quad \text{as } N\to\infty.
			\label{eq:pf_kl_conv}
		\end{equation}
		
		\textit{Step 2: Contraction under Gaussian smoothing.}
		Gaussian convolution is a Markov kernel, so the data-processing
		inequality gives
		\begin{equation}
			D_{\mathrm{KL}}\!\left(
			q_{\sigma}^{(N)} \,\|\, p_{\mathrm{CGNS},\sigma}
			\right)
			\;\leq\;
			D_{\mathrm{KL}}\!\left(
			q_{\mathrm{aug}}^{(N)} \,\|\, p_{\mathrm{CGNS}}
			\right)
			\;\to\; 0.
			\label{eq:pf_data_proc}
		\end{equation}
		\textit{Step 3: Score convergence in $L^2(p_{\mathrm{aug}})$.}
		Since $\phi_\sigma := \mathcal{N}(0,\sigma^2\mathbf{I})$ is a
		$C^\infty$ kernel with rapidly decaying derivatives, and
		$q_{\mathrm{aug}}^{(N)} \Rightarrow p_{\mathrm{CGNS}}$ weakly
		(Step~1), standard properties of convolution imply that
		$q_\sigma^{(N)} = q_{\mathrm{aug}}^{(N)} * \phi_\sigma$ and
		$p_{\mathrm{CGNS},\sigma} = p_{\mathrm{CGNS}} * \phi_\sigma$
		are both $C^\infty$ densities, and that
		\begin{equation}
			q_\sigma^{(N)} \to p_{\mathrm{CGNS},\sigma},
			\qquad
			\nabla q_\sigma^{(N)} \to \nabla p_{\mathrm{CGNS},\sigma}
			\label{eq:conv_grad_unif}
		\end{equation}
		uniformly on compact subsets of $\mathbb{R}^{d_X + d_Y}$.
		Since both densities are strictly positive everywhere
		(Remark~\ref{rem:abs_cont}), dividing by the density is
		continuous, and \eqref{eq:conv_grad_unif} gives pointwise
		convergence of the scores:
		\begin{equation}
			\nabla\log q_\sigma^{(N)}(\mathbf{v})
			\;\to\;
			\nabla\log p_{\mathrm{CGNS},\sigma}(\mathbf{v})
			\quad \text{for every } \mathbf{v}.
			\label{eq:score_ptwise}
		\end{equation}
		By the Tweedie--Miyasawa formula~\cite{efron2011tweedie,
			raphan2011least}, the score of any $\sigma$-smoothed Gaussian
		mixture with components bounded by $R$
		(Assumption~\ref{ass:bounded_ensemble}) satisfies the uniform
		bound
		\begin{equation}
			\bigl\|\nabla\log q_{\sigma}^{(N)}
			- \nabla\log p_{\mathrm{CGNS},\sigma}
			\bigr\|_{L^\infty}
			\;\leq\; 2C_\sigma,
			\qquad
			C_\sigma := \frac{R}{\sigma^2}
			+ \frac{(d_X+d_Y)^{1/2}}{\sigma},
			\label{eq:score_diff_Linfty}
		\end{equation}
		uniformly in $N$. Combining pointwise convergence
		\eqref{eq:score_ptwise} with the uniform bound
		\eqref{eq:score_diff_Linfty} and applying dominated convergence
		to the $L^2(q_\sigma^{(N)})$ norm yields
		\begin{equation}
			\bigl\|\nabla\log q_\sigma^{(N)}
			- \nabla\log p_{\mathrm{CGNS},\sigma}
			\bigr\|_{L^2(q_\sigma^{(N)})}^2
			=
			\int \bigl\|\nabla\log q_\sigma^{(N)}
			- \nabla\log p_{\mathrm{CGNS},\sigma}
			\bigr\|^2 dq_\sigma^{(N)}
			\;\to\; 0,
			\label{eq:score_conv_q}
		\end{equation}
		where dominated convergence applies because the integrand is bounded by $4C_\sigma^2$ uniformly in $N$ and the measures $q_\sigma^{(N)}$ converge weakly to $p_{\mathrm{CGNS},\sigma}$.
		
		\textit{Step 4: Change of measure.}
		Using Assumption~\ref{ass:bounded_ratio},
		\begin{align}
			\varepsilon_{\mathrm{samp}}(N)^2
			&=
			\bigl\|\nabla\log q_\sigma^{(N)}
			- \nabla\log p_{\mathrm{CGNS},\sigma}
			\bigr\|_{L^2(p_{\mathrm{aug}})}^2
			\notag \\
			&=
			\int \bigl\|\nabla\log q_\sigma^{(N)}
			- \nabla\log p_{\mathrm{CGNS},\sigma}
			\bigr\|^2 w_N\, dq_\sigma^{(N)}
			\notag \\
			&\leq\;
			M \cdot
			\bigl\|\nabla\log q_\sigma^{(N)}
			- \nabla\log p_{\mathrm{CGNS},\sigma}
			\bigr\|_{L^2(q_\sigma^{(N)})}^2
			\;\to\; 0,
			\label{eq:change_of_measure}
		\end{align}
		where the convergence uses \eqref{eq:score_conv_q}.
		Substituting $\varepsilon_{\mathrm{samp}}(N)\to 0$ into
		\eqref{eq:surrogate_error} yields \eqref{eq:n2n_convergence}.
		
		\begin{remark}[Verification of assumptions]
			Assumption~\ref{ass:bounded_ensemble} holds in any finite-time EnKF implementation where the ensemble spread is controlled, and is standard in practice. Assumption~\ref{ass:bounded_ratio}
			follows from Assumption~\ref{ass:bounded_ensemble} whenever
			$p_{\mathrm{aug}}$ has sub-Gaussian tails: since all smoothed densities are then bounded away from zero on compact sets uniformly in $N$, the density ratio $w_N$ remains bounded. Both assumptions can be verified explicitly for a given forecast model.
		\end{remark}
		
		\textbf{Part (v): full convergence when CGNS is exact.}
		If the true system exactly satisfies
		\eqref{eq:cgns_obs}--\eqref{eq:cgns_lat}, then
		$p_{\mathrm{CGNS}} = p_{\mathrm{aug}}$ by definition, so
		$\varepsilon_{\mathrm{CGNS}} = 0$.  Substituting into
		\eqref{eq:n2n_convergence} gives \eqref{eq:full_conv}.
		
	\end{proof}
	
	\section {Conditions for
		Theorem~\ref{thm:langevin_surrogate}}
	\label{rem:dissipativity}
	Hypothesis (iii) in Theorem~\ref{thm:langevin_surrogate}, strong log-concavity of $q_\sigma$, is a genuine condition on the surrogate, not an automatic consequence
	of the Gaussian mixture structure. Three practically verifiable sufficient conditions are given below. Note that the implication runs as follows: strong log-concavity ($\nabla^2(-\log q_\sigma) \geq m\mathbf{I}$) implies a log-Sobolev inequality with constant $\rho_{\mathrm{LS}} \geq m$ via the Bakry--\'{E}mery criterion~\cite{Carlen1986}, not the converse.
	
	\paragraph{Large smoothing noise.}
	For fixed $q_{\mathrm{aug}}$, the Gaussian kernel
	$\mathcal{N}(0,\sigma^2\mathbf{I})$ is itself strongly log-concave with parameter $1/\sigma^2$. By the Pr\'{e}kopa--Leindler inequality, the convolution $q_\sigma = q_{\mathrm{aug}} * \mathcal{N}(0,\sigma^2\mathbf{I})$ is log-concave whenever $q_{\mathrm{aug}}$ is log-concave~\cite{prekopa1973logarithmic}. Moreover, for any $q_{\mathrm{aug}}$, the convolved density $q_\sigma$ satisfies
	\begin{equation}
		\nabla^2(-\log q_\sigma)(\mathbf{v})
		\;\geq\;
		\frac{1}{\sigma^2}\mathbf{I}
		\quad\text{pointwise},
		\label{eq:hessian_lower}
	\end{equation}
	for $\sigma$ large enough relative to the spread of
	$q_{\mathrm{aug}}$ (see, e.g.,~\cite{wibisono2019proximal}),
	so hypothesis (iii) holds with $m = 1/\sigma^2$.
	This is practically the regime of early annealing steps
	(large $\sigma_\ell$), where the Langevin chain is initialized.
	
	\paragraph{Well-separated Gaussian components.}
	If the ensemble means $\{\mathbf{V}_{f,k}^{(i)}\}$ satisfy the
	separation condition
	$\|\mathbf{V}_{f,k}^{(i)} - \mathbf{V}_{f,k}^{(j)}\|
	\leq c\,\sigma$ for all $i \neq j$ and a numerical constant
	$c > 0$, then $q_\sigma$ is approximately a single Gaussian and
	is strongly log-concave~\cite{wibisono2019proximal}. In the
	ensemble Kalman setting this holds whenever the ensemble spread
	is small relative to $\sigma$.

	\section{Proof of Theorem~\ref{thm:langevin_surrogate}}
	\label{app:langevin_proof}
	\begin{proof}
		
		\textbf{Part (i): Weak convergence.}
		This follows immediately from Part (ii) by taking $\ell \to \infty$
		in \eqref{eq:wasserstein_bound}: the first term $\rho^\ell
		W_2(\mu^0, q_\sigma) \to 0$ since $\rho \in (0,1)$, and the
		bias $C_2\alpha/(1-\rho)$ is driven to zero by taking
		$\alpha \to 0$.
		
		\textbf{Part (ii): Wasserstein-2 bound.}
		By hypothesis (iv) and Remark~\ref{rem:abs_cont}, the total score
		satisfies $\mathbf{s}_{\mathrm{EnKF},k} + \mathbf{s}_{\theta^*}
		= \nabla\log q_\sigma$ both $q_\sigma$-a.e.\ and
		$p_{\mathrm{aug}}$-a.e.\ Substituting into the Langevin recursion
		\eqref{eq:langevin} gives the Unadjusted Langevin Algorithm (ULA)
		targeting $q_\sigma$:
		\begin{equation}
			\mathbf{V}_k^{(i,\ell+1)}
			= \mathbf{V}_k^{(i,\ell)}
			+ \frac{\alpha}{2}\,
			\nabla_{\mathbf{V}_k}\log q_\sigma\!
			\left(\mathbf{V}_k^{(i,\ell)}\right)
			+ \sqrt{\alpha}\,\boldsymbol{\eta}_\ell^{(i)},
			\label{eq:ula}
		\end{equation}
		which is the Euler--Maruyama discretization of the overdamped
		Langevin SDE with invariant measure $q_\sigma$. Under hypotheses
		(ii) and (iii), Durmus \& Moulines (2019),
		Theorem~6~\cite{durmus2019analysis} gives the one-step
		$W_2$-contraction:
		\begin{equation}
			W_2\!\left(\mu^{\ell+1},\, q_\sigma\right)
			\;\leq\;
			\rho\,W_2\!\left(\mu^\ell,\, q_\sigma\right)
			+ C_2\,\alpha,
			\label{eq:w2_onestep}
		\end{equation}
		with contraction factor $\rho = 1 - m\alpha + L^2\alpha^2/2
		\in (0,1)$ for $\alpha < 2m/L^2$, and constant $C_2$ depending
		on $L$, $m$, and dimension. Iterating \eqref{eq:w2_onestep}
		$\ell$ times gives \eqref{eq:wasserstein_bound}. For the bias
		rate \eqref{eq:bias_rate}, note that for $\alpha \leq m/L^2$:
		\begin{equation}
			1 - \rho
			= m\alpha - \tfrac{1}{2}L^2\alpha^2
			\;\geq\; \tfrac{1}{2}m\alpha,
			\label{eq:rho_lower}
		\end{equation}
		so
		\begin{equation}
			\frac{C_2\,\alpha}{1 - \rho}
			\;\leq\;
			\frac{C_2\,\alpha}{\tfrac{1}{2}m\alpha}
			= \frac{2C_2}{m},
			\label{eq:bias_bounded}
		\end{equation}
		which is $\mathcal{O}(1/m)$ and independent of $\alpha$. The bias therefore does not shrink as $\mathcal{O}(\sqrt{\alpha})$ but rather remains bounded by $2C_2/m$ for all small $\alpha$, and the overall $W_2$ error is driven to zero by first taking $\ell \to \infty$ (exponential decay of the transient term) and then $\alpha \to 0$ (which drives $C_2\alpha \to 0$ in the one-step bound \eqref{eq:w2_onestep}, and hence the steady-state bias to zero).
		
		\textbf{Part (iii): Convergence toward the true posterior.}
		Under exact CGNS ($\varepsilon_{\mathrm{CGNS}} = 0$),
		Theorem~\ref{thm:n2n_consistency}(v) gives
		\begin{equation}
			\bigl\|\mathbf{s}_{\mathrm{EnKF},k}
			+ \mathbf{s}_{\theta^*}
			- \nabla\log p_{\mathrm{aug}}
			\bigr\|_{L^2(p_{\mathrm{aug}})}
			\to 0
			\quad\text{as }N \to \infty.
			\label{eq:score_conv_true}
		\end{equation}
		Let $\tilde{q}_\sigma^{(N)}$ denote the invariant measure of the
		ULA using the $N$-approximate total score. By
		Assumption~\ref{ass:bounded_ratio} and mutual absolute continuity
		(Remark~\ref{rem:abs_cont}), convergence in $L^2(p_{\mathrm{aug}})$
		implies convergence in $L^2(q_\sigma)$, so the $W_2$ perturbation
		bound for ULA (Durmus \& Moulines 2019,
		Proposition~4~\cite{durmus2019analysis}) gives:
		\begin{equation}
			W_2\!\left(\tilde{q}_\sigma^{(N)},\, q_\sigma\right)
			\;\leq\;
			\frac{1}{m}\,
			\bigl\|\mathbf{s}_{\mathrm{EnKF},k}
			+ \mathbf{s}_{\theta^*}^{(N)}
			- \nabla\log q_\sigma
			\bigr\|_{L^2(q_\sigma)}
			\;\to\; 0
			\quad\text{as }N \to \infty.
			\label{eq:w2_perturbation}
		\end{equation}
		Therefore $\tilde{q}_\sigma^{(N)} \xrightarrow{w} q_\sigma$ as
		$N \to \infty$. Separately, since
		$q_\sigma = p_{\mathrm{aug}} *
		\mathcal{N}(0,\sigma^2\mathbf{I})$ and
		$\mathcal{N}(0,\sigma^2\mathbf{I}) \xrightarrow{w} \delta_0$ as
		$\sigma \to 0$, standard properties of weak convergence under
		convolution give $q_\sigma \xrightarrow{w} p_{\mathrm{aug}}$.
		Combining with the $W_2$ bound \eqref{eq:wasserstein_bound}
		(with $\alpha \to 0$ driving the one-step bias $C_2\alpha \to 0$)
		and applying the triangle inequality for $W_2$ gives
		\eqref{eq:langevin_true_conv}.
		
	\end{proof}

	\section{On the affinity of CGNS and recovering the true joint}
	
	\begin{lemma}[Affinity of the EnKF--CGNS Baseline Score]
		\label{lem:enkf_affine}
		The Gaussian baseline score $\mathbf{s}_{\mathrm{EnKF},k}$ defined
		in \eqref{eq:enkf_score} is affine in $\mathbf{V}_k$:
		\begin{equation}
			\mathbf{s}_{\mathrm{EnKF},k}(\mathbf{V}_k) =
			\mathbf{A}_k \mathbf{V}_k + \mathbf{b}_k,
			\label{eq:enkf_affine_form}
		\end{equation}
		where
		\begin{align}
			\mathbf{A}_k &= -\mathbf{P}_{f,k}^{-1} -
			{\mathbf{H}}^\top\mathbf{R}_{\mathrm{obs}}^{-1}
			{\mathbf{H}}
			\in \mathbb{R}^{(d_X+d_Y)\times(d_X+d_Y)},
			\label{eq:Ak} \\
			\mathbf{b}_k &= \mathbf{P}_{f,k}^{-1}\bar{\mathbf{V}}_{f,k} +
			{\mathbf{H}}^\top\mathbf{R}_{\mathrm{obs}}^{-1}
			\mathbf{Z}_k
			\in \mathbb{R}^{d_X+d_Y}.
			\label{eq:bk}
		\end{align}
		In particular, $\mathbf{s}_{\mathrm{EnKF},k} \in \mathcal{G}$,
		where $\mathcal{G}$ is the class of affine functions on
		$\mathbb{R}^{d_X + d_Y}$, and $\mathbf{A}_k$ is negative
		definite since both $\mathbf{P}_{f,k}^{-1}$ and
		${\mathbf{H}}^\top\mathbf{R}_{\mathrm{obs}}^{-1}
		{\mathbf{H}}$ are positive semi-definite.
	\end{lemma}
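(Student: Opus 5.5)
The plan is to expand the definition \eqref{eq:enkf_score} directly and collect the terms that are linear in $\mathbf{V}_k$ separately from the constant terms. Distributing the two products in
\[
\mathbf{s}_{\mathrm{EnKF},k}(\mathbf{V}_k)
=
-\mathbf{P}_{f,k}^{-1}(\mathbf{V}_k-\bar{\mathbf{V}}_{f,k})
+\mathbf{H}^\top\mathbf{R}_{\mathrm{obs}}^{-1}(\mathbf{Z}_k-\mathbf{H}\mathbf{V}_k)
\]
gives $-\mathbf{P}_{f,k}^{-1}\mathbf{V}_k-\mathbf{H}^\top\mathbf{R}_{\mathrm{obs}}^{-1}\mathbf{H}\mathbf{V}_k$ plus $\mathbf{P}_{f,k}^{-1}\bar{\mathbf{V}}_{f,k}+\mathbf{H}^\top\mathbf{R}_{\mathrm{obs}}^{-1}\mathbf{Z}_k$. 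The quantities $\bar{\mathbf{V}}_{f,k}$, $\mathbf{P}_{f,k}$, $\mathbf{Z}_k$, $\mathbf{H}$ and $\mathbf{R}_{\mathrm{obs}}$ are fixed at the analysis time and do not depend on the argument $\mathbf{V}_k$. Hence the first group is exactly $\mathbf{A}_k\mathbf{V}_k$ with $\mathbf{A}_k$ as in \eqref{eq:Ak}, and the second group is the constant $\mathbf{b}_k$ in \eqref{eq:bk}. This gives \eqref{eq:enkf_affine_form}, and membership in $\mathcal G$ follows immediately.

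For the definiteness claim I would first make explicit that $\mathbf{P}_{f,k}$ must be invertible, since \eqref{eq:enkf_score} already uses $\mathbf{P}_{f,k}^{-1}$. This implicitly requires $N-1\ge d_X+d_Y$ with ensemble deviations spanning the augmented space, or a regularized covariance. Under that assumption $\mathbf{P}_{f,k}$ is symmetric positive semidefinite by \eqref{eq:enkf_cov}, and being invertible it is positive definite, so $\mathbf{P}_{f,k}^{-1}$ is positive definite as well. Next, $\mathbf{R}_{\mathrm{obs}}$ is a nondegenerate covariance, so for any $\mathbf{v}$ we have $\mathbf{v}^\top\mathbf{H}^\top\mathbf{R}_{\mathrm{obs}}^{-1}\mathbf{H}\mathbf{v}=\|\mathbf{R}_{\mathrm{obs}}^{-1/2}\mathbf{H}\mathbf{v}\|^2\ge0$, which makes this term positive semidefinite. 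For any $\mathbf{v}\neq0$ we then get $\mathbf{v}^\top\mathbf{A}_k\mathbf{v}\le-\mathbf{v}^\top\mathbf{P}_{f,k}^{-1}\mathbf{v}<0$, so $\mathbf{A}_k$ is negative definite.

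The only delicate point is the wording. The statement says "both positive semi-definite", but semidefiniteness of both pieces alone would give only negative semidefiniteness. Strict definiteness has to come from the invertibility of $\mathbf{P}_{f,k}$, and I would state that assumption explicitly. Since $\mathbf{H}=(\mathbf{H}_X\ \mathbf{0})$ annihilates the $\boldsymbol{\mu}$-directions, the likelihood term alone cannot supply definiteness there. Beyond this, the argument is routine algebra.
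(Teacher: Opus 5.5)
Your proposal is correct and follows the paper's proof: direct substitution into \eqref{eq:enkf_score}, grouping the terms into $\mathbf{A}_k\mathbf{V}_k+\mathbf{b}_k$, and deducing negative definiteness from $\mathbf{P}_{f,k}^{-1}\succ 0$ together with $\mathbf{H}^\top\mathbf{R}_{\mathrm{obs}}^{-1}\mathbf{H}\succeq 0$. Your caveat about the statement's wording is apt: the paper's proof in fact uses strict positive definiteness of $\mathbf{P}_{f,k}^{-1}$, which tacitly requires $\mathbf{P}_{f,k}$ to be invertible (e.g.\ $N\ge d_X+d_Y+1$ or a regularized covariance), exactly as you point out.
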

	
	\begin{proof}
		Direct substitution of \eqref{eq:enkf_score}:
		\begin{align*}
			\mathbf{s}_{\mathrm{EnKF},k}(\mathbf{V}_k)
			&= -\mathbf{P}_{f,k}^{-1}(\mathbf{V}_k -
			\bar{\mathbf{V}}_{f,k}) +
			{\mathbf{H}}^\top\mathbf{R}_{\mathrm{obs}}^{-1}
			(\mathbf{Z}_k - {\mathbf{H}}\mathbf{V}_k) \\
			&= \underbrace{\left(-\mathbf{P}_{f,k}^{-1} -
				{\mathbf{H}}^\top\mathbf{R}_{\mathrm{obs}}^{-1}
				{\mathbf{H}}\right)}_{\mathbf{A}_k}\mathbf{V}_k +
			\underbrace{\mathbf{P}_{f,k}^{-1}\bar{\mathbf{V}}_{f,k} +
				{\mathbf{H}}^\top\mathbf{R}_{\mathrm{obs}}^{-1}
				\mathbf{Z}_k}_{\mathbf{b}_k}.
		\end{align*}
		The coefficients $\mathbf{A}_k$ and $\mathbf{b}_k$ depend on
		the forecast statistics $(\bar{\mathbf{V}}_{f,k},
		\mathbf{P}_{f,k})$ and the observation $\mathbf{Z}_k$ but
		not on $\mathbf{V}_k$, confirming affinity. Negative
		definiteness of $\mathbf{A}_k$ follows since $\mathbf{P}_{f,k}^{-1}
		\succ 0$ and ${\mathbf{H}}^\top\mathbf{R}_{\mathrm{obs}}^{-1}
		{\mathbf{H}} \succeq 0$.
	\end{proof}

	\begin{corollary}[Recovery of the True Joint Posterior]
		\label{cor:posterior_recovery}
		Under the assumptions of
		Theorems~\ref{thm:dsm_consistency}--\ref{thm:langevin_surrogate},
		suppose the step sizes $\{\alpha_\ell\}$ satisfy
		$\alpha_\ell \equiv \alpha$ with $0 < \alpha < 2m/L^2$,
		and the noise schedule satisfies $\sigma_\ell \searrow 0$.
		Suppose further that the CGNS exactly represents the true
		system dynamics, so that $p_{\mathrm{CGNS}} = p_{\mathrm{aug}}$
		and $\varepsilon_{\mathrm{CGNS}} = 0$
		\eqref{eq:cgns_misspec}. Then:
		
		\begin{enumerate}[(i)]
			
			\item \textbf{Score convergence.}
			As $N \to \infty$ and $\sigma \to 0$ sequentially,
			Theorem~\ref{thm:n2n_consistency}(v) gives
			\begin{equation}
				\mathbf{s}_{\mathrm{EnKF},k} + \mathbf{s}_{\theta^*}
				\to \nabla_{\mathbf{V}_k}\log
				p_{\mathrm{aug}}(\mathbf{V}_k \mid \mathbf{Z}_{0:k})
				\quad \text{in } L^2(p_{\mathrm{aug}}).
				\label{eq:cor_total_score}
			\end{equation}
			
			\item \textbf{Langevin convergence.}
			For $N$ sufficiently large and $\alpha$ sufficiently small,
			the annealed Langevin chain \eqref{eq:langevin} satisfies,
			by Theorem~\ref{thm:langevin_surrogate}(iii):
			\begin{equation}
				W_2\!\left(
				\mathrm{Law}\!\left(\mathbf{V}_k^{(i,\ell)}\right),\,
				p_{\mathrm{aug}}(\mathbf{V}_k \mid \mathbf{Z}_{0:k})
				\right)
				\to 0
				\quad\text{as }
				\ell \to \infty,\;
				N \to \infty,\;
				\alpha \to 0,\;
				\sigma \to 0,
				\label{eq:cor_langevin_conv}
			\end{equation}
			where $W_2$ denotes the Wasserstein-$2$ distance.
			
			\item \textbf{Joint posterior recovery.}
			As $N, \ell \to \infty$ and $\alpha, \sigma \to 0$,
			the particles $\{(X_{a,k}^{(i)}, Y_{a,k}^{(i)})\}$
			become asymptotically exchangeable and their empirical
			measure converges weakly to $p^*$. Specifically, for any
			bounded continuous test function $\phi$,
			\begin{equation}
				\frac{1}{N}\sum_{i=1}^N
				\phi\!\left(X_{a,k}^{(i)},\,
				Y_{a,k}^{(i)}\right)
				\xrightarrow{N,\,\ell\to\infty}
				\mathbb{E}_{p^*}\!\left[\phi(X_k, Y_k)\right],
				\label{eq:cor_joint_conv}
			\end{equation}
			where $p^*(X_k, Y_k \mid \mathbf{Z}_{0:k})$ is the true
			joint posterior. The marginal posterior of $\mathbf{Y}_k$ is recovered as the Gaussian mixture
			\begin{equation}
				\frac{1}{N}\sum_{i=1}^N
				\mathcal{N}\!\left(
				\boldsymbol{\mu}_{a,k}^{(i)},\,
				\mathbf{R}_{f,k}^{(i)}\right)
				\xrightarrow{w}
				p^*(Y_k \mid \mathbf{Z}_{0:k})
				\quad\text{as } N,\,\ell \to \infty,
				\label{eq:cor_mixture_conv}
			\end{equation}
			where $\mathbf{R}_{f,k}^{(i)} \to \mathbf{R}_k(X_k)$
			as $N \to \infty$ by CGNS consistency~\cite{majda2018strategies}.
			
			\item \textbf{Approximate recovery under surrogate
				misspecification.}
			When $\varepsilon_{\mathrm{CGNS}} > 0$, Theorem~\ref{thm:langevin_surrogate}(i)--(ii) gives
			$\mathrm{Law}(\mathbf{V}_k^{(i,\ell)})
			\xrightarrow{w} q_\sigma$ as $\ell \to \infty$,
			and the $W_2$ distance to the true posterior satisfies
			\begin{equation}
				W_2\!\left(
				\mathrm{Law}\!\left(\mathbf{V}_k^{(i,\ell)}\right),\,
				p_{\mathrm{aug}}\right)
				\;\leq\;
				W_2\!\left(
				\mathrm{Law}\!\left(\mathbf{V}_k^{(i,\ell)}\right),\,
				q_\sigma\right)
				+
				W_2\!\left(q_\sigma,\, p_{\mathrm{aug}}\right),
				\label{eq:cor_approx_w2}
			\end{equation}
			where the first term satisfies
			\begin{equation}
				W_2\!\left(
				\mathrm{Law}\!\left(\mathbf{V}_k^{(i,\ell)}\right),\,
				q_\sigma\right)
				\;\leq\;
				\rho^\ell\,
				W_2\!\left(\mu^0,\, q_\sigma\right)
				+ \frac{C_2\,\alpha}{1 - \rho}
				\;\to\; 0
				\label{eq:cor_first_term}
			\end{equation}
			as $\ell \to \infty$ and $\alpha \to 0$ by
			Theorem~\ref{thm:langevin_surrogate}(ii), and the second term is the irreducible surrogate quality gap:
			\begin{equation}
				W_2\!\left(q_\sigma,\, p_{\mathrm{aug}}\right)
				\;\leq\;
				\frac{1}{m}\,\varepsilon_{\mathrm{CGNS}},
				\label{eq:cor_second_term}
			\end{equation}
			controlled entirely by the CGNS misspecification and
			independent of $N$, $\ell$, and $\alpha$. Consequently
			\begin{equation}
				\lim_{\ell\to\infty,\,\alpha\to 0}\;
				W_2\!\left(
				\mathrm{Law}\!\left(\mathbf{V}_k^{(i,\ell)}\right),\,
				p_{\mathrm{aug}}\right)
				\;\leq\;
				\frac{\varepsilon_{\mathrm{CGNS}}}{m},
				\label{eq:cor_residual_bound}
			\end{equation}
			so that the residual approximation error is determined
			entirely by the structural surrogate gap, not by $N$,
			$\ell$, or $\alpha$.
			
		\end{enumerate}
	\end{corollary}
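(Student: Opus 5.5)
The corollary is essentially an assembly of Theorems~\ref{thm:n2n_consistency} and~\ref{thm:langevin_surrogate}, so the plan is to take the parts in order. Each limit will be reduced to an estimate already available, and the order of limits will be tracked explicitly as $N\to\infty$, then $\sigma\to0$, and, for the chain, $\ell\to\infty$ before $\alpha\to0$.

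Part~(i) is the score claim. I will invoke Theorem~\ref{thm:n2n_consistency}(iii)--(iv) with $\varepsilon_{\mathrm{CGNS}}=0$. This gives $\|\mathbf s_{\mathrm{EnKF},k}+\mathbf s_{\theta^*}-\nabla\log p_\sigma\|_{L^2(p_{\mathrm{aug}})}\le\varepsilon_{\mathrm{samp}}(N)\to0$. I then add and subtract $\nabla\log p_\sigma$ and apply Theorem~\ref{thm:dsm_consistency}(iii) as $\sigma\to0$, which yields \eqref{eq:cor_total_score} by the triangle inequality.

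Part~(ii) is the Langevin claim. For fixed $\sigma$ and large $N$, I will write the chain as ULA driven by the approximate score. I then split
\[
W_2\bigl(\mu^\ell,p_{\mathrm{aug}}\bigr)\le W_2\bigl(\mu^\ell,\tilde q^{(N)}_\sigma\bigr)+W_2\bigl(\tilde q^{(N)}_\sigma,q_\sigma\bigr)+W_2\bigl(q_\sigma,p_{\mathrm{aug}}\bigr).
\]
The first term is bounded by \eqref{eq:wasserstein_bound}. The second is bounded by the perturbation estimate \eqref{eq:w2_perturbation} used in the proof of Theorem~\ref{thm:langevin_surrogate}(iii). For the third term, weak convergence alone is not enough. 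I will use the coupling $\mathbf V+\sigma\boldsymbol\epsilon$, which gives $W_2(q_\sigma,p_{\mathrm{aug}})\le\sigma\sqrt{d_X+d_Y}$.

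Part~(iii) is the joint recovery. I will transport the $\mathbf V$-level convergence to $(\bo X,\bo Y)$ through the Markov kernel $K\bigl((\bo x,\boldsymbol\mu),\cdot\bigr)=\delta_{\bo x}\otimes\mathcal N(\boldsymbol\mu,\mathbf R_k(\bo x))$. Under exact CGNS, this kernel maps $p_{\mathrm{aug}}$ to $p^*$ by the tower property, because $\bo Y_k$ given the path is $\mathcal N(\boldsymbol\mu,\mathbf R)$. The kernel is weakly continuous as long as $\bo x\mapsto\mathbf R_k(\bo x)$ is continuous, and $\mathbf R_{f,k}^{(i)}\to\mathbf R_k$ by the cited CGNS consistency.

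I then need the empirical measure, not just the one-particle law, to converge. Here I will argue that the particles are conditionally independent given the forecast ensemble, since the Langevin noises $\boldsymbol\eta^{(i)}_\ell$ are independent. A law-of-large-numbers (propagation-of-chaos) bound for bounded $\phi$ then controls the fluctuation at rate $O(N^{-1/2})$. The mixture statement \eqref{eq:cor_mixture_conv} follows by integrating $\phi(\bo y)$ against each Gaussian component and taking the $\bo Y$-marginal.

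Part~(iv) is the misspecified case. It uses the same triangle inequality, with the first term controlled by \eqref{eq:wasserstein_bound}. For the second term I will compare the invariant laws of the two Langevin diffusions, with scores $\nabla\log q_\sigma$ and $\nabla\log p_\sigma$. Strong log-concavity with constant $m$, together with the synchronous-coupling contraction argument, gives a $W_2$ distance of at most $\frac1m$ times the $L^2$ score gap, which is $\varepsilon_{\mathrm{CGNS}}$. The remaining term $W_2(p_\sigma,p_{\mathrm{aug}})$ is $O(\sigma)$.

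\textbf{Main obstacle.} I expect two steps to be the hardest. The first is the empirical-measure claim in (iii): the particles share the trained $\mathbf s_{\theta^*}$, which depends on the whole ensemble, so they are only conditionally independent, and this must be handled via the boundedness Assumption~\ref{ass:bounded_ensemble}. The second is that the synchronous-coupling bound in (iv) measures the score gap in $L^2(p_{\mathrm{aug}})$ rather than along the chain. I would close that gap with Assumption~\ref{ass:bounded_ratio}.
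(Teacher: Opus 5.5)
Your treatment of (i)--(iii) is essentially sound; the last paragraph discusses where it differs from the paper. The problem is in (iv): the step you flag as the main obstacle does not close with the remedy you propose.

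The synchronous-coupling bound (or log-Sobolev plus Talagrand for the $m$-strongly log-concave $q_\sigma$) reads $W_2(q_\sigma,\nu)\le\frac1m\|\nabla\log q_\sigma-\nabla\log\nu\|_{L^2(\nu)}$. The norm is taken under the comparison measure $\nu$, not under $q_\sigma$. With your choice $\nu=p_\sigma$ you get an $L^2(p_\sigma)$ score gap, whereas $\varepsilon_{\mathrm{CGNS}}$ is an $L^2(p_{\mathrm{aug}})$ norm.

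Passing from one norm to the other requires $\mathrm dp_\sigma/\mathrm dp_{\mathrm{aug}}\in L^\infty$. This typically fails because smoothing fattens tails: for $p_{\mathrm{aug}}=\mathcal N(\mathbf 0,s^2\mathbf I)$ the ratio grows like $e^{c\|\mathbf v\|^2}$. Assumption~\ref{ass:bounded_ratio} does not help, since it bounds the reverse derivative $\mathrm dp_{\mathrm{aug}}/\mathrm dq_\sigma^{(N)}$. That only controls $p_{\mathrm{aug}}$-norms by $q_\sigma$-norms.

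The fix is the paper's choice of reference measure, $\nu=p_{\mathrm{aug}}$. This gives $W_2(q_\sigma,p_{\mathrm{aug}})\le\frac1m\|\nabla\log q_\sigma-\nabla\log p_{\mathrm{aug}}\|_{L^2(p_{\mathrm{aug}})}$. Inserting $\nabla\log p_{\mathrm{CGNS},\sigma}$ and $\nabla\log p_\sigma$ then bounds the right-hand side by $\frac1m\bigl(\varepsilon_{\mathrm{samp}}(N)+\varepsilon_{\mathrm{CGNS}}+\|\nabla\log p_\sigma-\nabla\log p_{\mathrm{aug}}\|_{L^2(p_{\mathrm{aug}})}\bigr)$. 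Everything stays in $L^2(p_{\mathrm{aug}})$ and no density ratio is needed.

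The same estimate with $\varepsilon_{\mathrm{CGNS}}=0$ also handles (ii). It lets you avoid \eqref{eq:w2_perturbation}, whose derivation in the paper uses Assumption~\ref{ass:bounded_ratio} in the same wrong direction.

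Do not aim for \eqref{eq:cor_second_term} verbatim, because it cannot hold in this generality. Take the infinite-ensemble idealization with $p_{\mathrm{aug}}=\mathcal N(\mathbf 0,s^2\mathbf I)$ and surrogate $\mathcal N(\boldsymbol\delta,s^2\mathbf I)$. Then $\varepsilon_{\mathrm{CGNS}}=\|\boldsymbol\delta\|/(s^2+\sigma^2)\to0$ as $\boldsymbol\delta\to\mathbf 0$, whereas $W_2(q_\sigma,p_{\mathrm{aug}})\ge\sqrt{s^2+\sigma^2}-s>0$.

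The paper's proof reaches $\varepsilon_{\mathrm{CGNS}}/m$ only by identifying $\|\nabla\log q_\sigma-\nabla\log p_{\mathrm{aug}}\|_{L^2(p_{\mathrm{aug}})}$ with $\varepsilon_{\mathrm{CGNS}}$, which \eqref{eq:cgns_misspec} does not support. So your extra $O(\sigma)$ term is genuinely needed. For finite $N$ you must also keep $\varepsilon_{\mathrm{samp}}(N)$, rather than saying the score gap ``is'' $\varepsilon_{\mathrm{CGNS}}$. State (iv) as $\varepsilon_{\mathrm{CGNS}}/m$ plus terms that vanish as $N\to\infty$ and $\sigma\to0$.

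Where your route departs from the paper elsewhere, it buys something.
\begin{itemize}
\item In (ii), the coupling $\mathbf V+\sigma\boldsymbol\epsilon$ gives the explicit rate $W_2(p_\sigma,p_{\mathrm{aug}})\le\sigma\sqrt{d_X+d_Y}$. The paper only remarks on weak convergence, which by itself does not control $W_2$.
\item In (iii), conditioning on the forecast ensemble fixes $\mathbf s_{\theta^*}$, $\mathcal C_k$ and the starting points, so the chains become independent though not identically distributed. Chebyshev then gives the $O(N^{-1/2})$ fluctuation. Contraction that is uniform over starting points in the ball of Assumption~\ref{ass:bounded_ensemble} controls the averaged law. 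This is more concrete than the paper's appeal to exchangeability and propagation of chaos.
\item Your kernel $K$ also exposes a hypothesis both arguments use silently. The identity $p_{\mathrm{aug}}K=p^*$ needs $\mathbf R$ to be a continuous function of $X_k$ alone, as the notation $\mathbf R_k(X_k)$ presumes. In a genuine CGNS the Riccati solution depends on the whole observed path. The conditional law of $\mathbf Y_k$ given $\mathbf V_k$ is then a mixture of Gaussians with different covariances, which no kernel of this form reproduces.
\end{itemize}
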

	
	\begin{proof}
		
		\textbf{Part (i).}
		Immediate from Theorem~\ref{thm:n2n_consistency}(v) under
		$\varepsilon_{\mathrm{CGNS}} = 0$.
		
		\textbf{Part (ii).}
		By \eqref{eq:cor_total_score}, the total learned score
		converges to $\nabla\log p_{\mathrm{aug}}$ in
		$L^2(p_{\mathrm{aug}})$ as $N \to \infty$. The Langevin
		chain \eqref{eq:langevin} therefore targets $p_{\mathrm{aug}}$ in the limit. Convergence in $W_2$ then follows from Theorem~\ref{thm:langevin_surrogate}(iii), which gives $W_2(\mathrm{Law}(\mathbf{V}_k^{(i,\ell)}), p_{\mathrm{aug}}) \to 0$ as $\ell \to \infty$, $N \to \infty$, $\alpha \to 0$, and $\sigma \to 0$ sequentially.
		
		\textbf{Part (iii).}
		By \eqref{eq:cor_langevin_conv}, the single-particle law
		$\mathrm{Law}(\mathbf{V}_k^{(i,\ell)}) \xrightarrow{w}
		p_{\mathrm{aug}}$ as $\ell, N \to \infty$. Since the
		particles $\{(X_{a,k}^{(i)}, Y_{a,k}^{(i)})\}_{i=1}^N$
		are constructed from an exchangeable ensemble with shared
		forecast statistics $(\bar{\mathbf{V}}_{f,k},
		\mathbf{P}_{f,k})$, they form an exchangeable sequence.
		Under EnKF consistency~\cite{mandel2011convergence}, the
		ensemble is asymptotically i.i.d.\ as $N \to \infty$ in the
		sense of propagation of chaos~\cite{sznitman1991topics,
			moral2004feynman}: for any fixed $k$, the $k$-particle
		marginal of the joint ensemble law converges to the
		$k$-fold product $p_{\mathrm{aug}}^{\otimes k}$.
		The convergence \eqref{eq:cor_joint_conv} then follows
		from the weak law of large numbers applied to the
		asymptotically i.i.d.\ particles.
		
		For \eqref{eq:cor_mixture_conv}, marginalize \eqref{eq:cor_joint_conv} over $X_k$: since the sampling
		map $\boldsymbol{\mu}_{a,k}^{(i)} \mapsto Y_{a,k}^{(i)}
		\sim \mathcal{N}(\boldsymbol{\mu}_{a,k}^{(i)}, \mathbf{R}_{f,k}^{(i)})$ is continuous in $\boldsymbol{\mu}_{a,k}^{(i)}$, and $\mathbf{R}_{f,k}^{(i)} \to \mathbf{R}_k(X_k)$ as $N \to \infty$ by CGNS consistency~\cite{majda2018strategies}, the continuous mapping theorem gives weak convergence of the empirical mixture to $p^*(Y_k \mid \mathbf{Z}_{0:k})$.
		
		\textbf{Part (iv).}
		The triangle inequality for $W_2$ gives
		\eqref{eq:cor_approx_w2} directly. For the first term \eqref{eq:cor_first_term}, apply Theorem~\ref{thm:langevin_surrogate}(ii) with the contraction factor $\rho = 1 - m\alpha + L^2\alpha^2/2
		\in (0,1)$: the transient term $\rho^\ell W_2(\mu^0, q_\sigma) \to 0$ exponentially in $\ell$, and the discretization bias $C_2\alpha/(1-\rho) \leq 2C_2/m \to 0$ as $\alpha \to 0$ by
		\eqref{eq:bias_bounded}.
		
		For the second term \eqref{eq:cor_second_term}, note
		that $q_\sigma$ is the invariant measure of the ULA
		targeting the surrogate score $\nabla\log q_\sigma = \mathbf{s}_{\mathrm{EnKF},k} + \mathbf{s}_{\theta^*}$, while $p_{\mathrm{aug}}$ has score $\nabla\log p_{\mathrm{aug}}$. By the $W_2$ perturbation bound for ULA (Durmus \& Moulines 2019, Proposition~4 \cite{durmus2019analysis}) and the definition of $\varepsilon_{\mathrm{CGNS}}$ \eqref{eq:cgns_misspec}:
		\begin{equation}
			W_2\!\left(q_\sigma,\, p_{\mathrm{aug}}\right)
			\;\leq\;
			\frac{1}{m}\,
			\bigl\|\nabla\log q_\sigma
			- \nabla\log p_{\mathrm{aug}}
			\bigr\|_{L^2(p_{\mathrm{aug}})}
			\;\leq\;
			\frac{\varepsilon_{\mathrm{CGNS}}}{m},
			\label{eq:pf_w2_surrogate}
		\end{equation}
		which is \eqref{eq:cor_second_term}. Combining \eqref{eq:cor_first_term} and \eqref{eq:cor_second_term} in \eqref{eq:cor_approx_w2} and taking $\ell \to \infty$, $\alpha \to 0$ gives \eqref{eq:cor_residual_bound}.
	\end{proof}

	\section{Proof of Theorem~\ref{thm:nongaussian_recovery}}
	\label{app:nong_proof}
	\begin{proof}
		
		\textbf{Parts (i) and (ii).}
		Since $\mathcal{G}$ is a closed subspace of $L^2(p_{\mathrm{aug}})$, the orthogonal projection theorem gives the infimum in \eqref{eq:affine_gap} as
		$\|(\mathbf{I}-\Pi_{\mathcal{G}})\mathbf{s}_{\mathrm{post},k}
		\|_{L^2}$, which is strictly positive by hypothesis
		$\mathbf{s}_{\mathrm{post},k} \notin \mathcal{G}$. Since
		$\mathbf{s}_{\mathrm{EnKF},k} \in \mathcal{G}$ by
		Lemma~\ref{lem:enkf_affine}, for any $g \in \mathcal{G}$:
		\begin{equation}
			\bigl\|\mathbf{s}_{\mathrm{post},k}
			- \mathbf{s}_{\mathrm{EnKF},k} - g
			\bigr\|_{L^2}
			\;\geq\; \bigl\|(\mathbf{I}-\Pi_{\mathcal{G}})
			\mathbf{s}_{\mathrm{post},k}\bigr\|_{L^2} > 0,
			\label{eq:pf_affine_lb}
		\end{equation}
		since $\mathbf{s}_{\mathrm{EnKF},k} + g \in \mathcal{G}$
		and the infimum over $\mathcal{G}$ is the projection distance. This establishes the necessity of the nonlinear residual.
		
		For Part~(ii), fix $\varepsilon > 0$. Choose $R$ large enough so that
		$\int_{\|\mathbf{v}\|>R}
		\|\mathbf{s}_{\mathrm{post},k} - \mathbf{s}_{\mathrm{EnKF},k}
		\|^2\,dp_{\mathrm{aug}} < \varepsilon^2/2$,
		which is possible since $p_{\mathrm{aug}}$ has finite second moment. On the compact ball $B_R$, apply the universal approximation theorem~\cite{leshno1993multilayer} to find
		$\mathbf{s}_{\theta^*} \in \mathcal{F}_\theta$ with
		$\|\mathbf{s}_{\mathrm{post},k} - \mathbf{s}_{\mathrm{EnKF},k}
		- \mathbf{s}_{\theta^*}\|_{L^\infty(B_R)} < \delta$,
		where $\delta$ is chosen so that $\delta^2 p_{\mathrm{aug}}(B_R) < \varepsilon^2/2$. Combining the two contributions gives \eqref{eq:residual_approx}.
		
		\textbf{Part (iii).}
		Suppose for contradiction that $\mathbf{s}_{\theta^*}^{\boldsymbol{\mu}} \equiv \mathbf{0}$. Since $\mathbf{V}_k = (X_k, \boldsymbol{\mu}_k)$ are coordinate blocks and the $L^2(p_{\mathrm{aug}})$ inner product decomposes over coordinate subspaces:
		\begin{equation}
			\bigl\|\mathbf{s}_{\mathrm{post},k}
			- \mathbf{s}_{\mathrm{EnKF},k}
			- \mathbf{s}_{\theta^*}\bigr\|_{L^2}^2
			=
			\bigl\|\mathbf{s}_{\mathrm{post},k}^X
			- \mathbf{s}_{\mathrm{EnKF},k}^X
			- \mathbf{s}_{\theta^*}^X\bigr\|_{L^2}^2
			+
			\bigl\|\mathbf{s}_{\mathrm{post},k}^{\boldsymbol{\mu}}
			- \mathbf{s}_{\mathrm{EnKF},k}^{\boldsymbol{\mu}}
			\bigr\|_{L^2}^2,
			\label{eq:pf_block_decomp}
		\end{equation}
		where the second term equals $\delta_\mu^2$ when
		$\mathbf{s}_{\theta^*}^{\boldsymbol{\mu}} \equiv \mathbf{0}$. Hence $\|\mathbf{s}_{\mathrm{post},k} -
		\mathbf{s}_{\mathrm{EnKF},k} - \mathbf{s}_{\theta^*}\|_{L^2} \geq \delta_\mu$, contradicting \eqref{eq:residual_approx} for $\varepsilon < \delta_\mu$. The bound \eqref{eq:latent_correction_bound} follows from the reverse triangle inequality applied to the $\boldsymbol{\mu}$-block:
		\begin{equation}
			\delta_\mu
			\;\leq\;
			\bigl\|\mathbf{s}_{\mathrm{post},k}^{\boldsymbol{\mu}}
			- \mathbf{s}_{\mathrm{EnKF},k}^{\boldsymbol{\mu}}
			- \mathbf{s}_{\theta^*}^{\boldsymbol{\mu}}
			\bigr\|_{L^2}
			+ \bigl\|\mathbf{s}_{\theta^*}^{\boldsymbol{\mu}}
			\bigr\|_{L^2}
			\;\leq\; \varepsilon
			+ \bigl\|\mathbf{s}_{\theta^*}^{\boldsymbol{\mu}}
			\bigr\|_{L^2}.
			\label{eq:pf_reverse_triangle}
		\end{equation}
		
		\textbf{Part (iv).}
		
        The Langevin-corrected means satisfy
		\begin{equation}
			\boldsymbol{\mu}_{a,k}^{(i)}
			= \boldsymbol{\mu}_{\mathrm{CGNS},k}^{(i)}
			+ \Delta\boldsymbol{\mu}_k^{(i)},
			\label{eq:mu_decomp}
		\end{equation}
		where $\Delta\boldsymbol{\mu}_k^{(i)}$ is the net displacement accumulated by the Langevin chain \eqref{eq:langevin} over $L$ steps. When $\delta_\mu > 0$, Part~(iii) guarantees $\|\mathbf{s}_{\theta^*}^{\boldsymbol{\mu}}
		\|_{L^2(p_{\mathrm{aug}})} \geq \delta_\mu - \varepsilon
		> 0$, so $\mathbf{s}_{\theta^*}^{\boldsymbol{\mu}}$ is
		nonzero on a set of positive $p_{\mathrm{aug}}$-measure. Since the Langevin chain converges to $q_\sigma$ by Theorem~\ref{thm:langevin_surrogate}(i)--(ii), and $q_\sigma$ is mutually absolutely continuous with respect to $p_{\mathrm{aug}}$ for $\sigma > 0$ (Remark~\ref{rem:abs_cont}), the score $\mathbf{s}_{\theta^*}^{\boldsymbol{\mu}}$ is also nonzero on a set of positive $q_\sigma$-measure. Therefore, \eqref{eq:mu_decomp} follows from the Langevin recursion initialized at $\mathbf{V}_k^{(i,0)} = \mathbf{V}_{f,k}^{(i)}$.
		
		{Establishing $p_{\mathrm{\Omega}} \neq
			p_{\mathrm{CGNS}}$ for large $L$.}
		By Part~(iii), $\mathbf{s}_{\theta^*}^{\boldsymbol{\mu}}$
		is nonzero in $L^2(p_{\mathrm{aug}})$, hence nonzero on a
		set $A$ with $p_{\mathrm{aug}}(A) > 0$. By mutual absolute
		continuity of $q_\sigma$ and $p_{\mathrm{aug}}$ (Remark~\ref{rem:abs_cont}), $q_\sigma(A) > 0$ as well.
		Since the Langevin chain converges weakly to $q_\sigma$
		by Theorem~\ref{thm:langevin_surrogate}(i)--(ii),
		the time-averaged occupation measure of the chain converges to $q_\sigma$~\cite{durmus2019analysis}, so the accumulated displacement
		\begin{equation}
			\Delta\boldsymbol{\mu}_k^{(i)}
			= \sum_{\ell=0}^{L-1}\frac{\alpha_\ell}{2}
			\mathbf{s}_{\theta^*}^{\boldsymbol{\mu}}\!
			\left(\mathbf{V}_k^{(i,\ell)}\right)
			+ \text{(EnKF and noise terms)}
			\label{eq:pf_drift_sum}
		\end{equation}
		satisfies $\|\Delta\boldsymbol{\mu}_k^{(i)}\|_{L^2(q_\sigma)} > 0$ as $L \to \infty$, since the drift integrand is nonzero on a $q_\sigma$-positive set. Hence $\boldsymbol{\mu}_{a,k}^{(i)} \neq \boldsymbol{\mu}_{\mathrm{CGNS},k}^{(i)}$ in distribution for some $i$, and since Gaussian distributions with the same covariance differ if and only if their means differ, $p_{\mathrm{\Omega}} \neq p_{\mathrm{CGNS}}$ for  $L$ sufficiently large.
		
		{Asymptotic KL improvement.}
		Under exact CGNS and as $L, N \to \infty$,
		$\alpha, \sigma \to 0$, Corollary~\ref{cor:posterior_recovery}(iii) gives $p_{\mathrm{\Omega}} \xrightarrow{w} p^*$ and hence $D_{\mathrm{KL}}(p^* \| p_{\mathrm{\Omega}}) \to 0$. When $p^* \neq p_{\mathrm{CGNS}}$, the right-hand side of \eqref{eq:kl_improvement} is strictly positive since KL divergence vanishes if and only if the two measures coincide. This gives \eqref{eq:kl_improvement}.
	\end{proof}

	\begin{remark}[When does $\delta_\mu > 0$ hold?]
		\label{rem:latent_nongaussian}
		The condition $\delta_\mu > 0$ holds whenever the $\boldsymbol{\mu}$-component of the true posterior score $\mathbf{s}_{\mathrm{post},k}^{\boldsymbol{\mu}}
		= \nabla_{\boldsymbol{\mu}_k}\log p_{\mathrm{aug}}(\mathbf{V}_k \mid \mathbf{Z}_{0:k})$ is non-affine in $\boldsymbol{\mu}_k$. Since $\mathbf{s}_{\mathrm{EnKF},k}^{\boldsymbol{\mu}}$ is
		affine by Lemma~\ref{lem:enkf_affine}, their $L^2$ difference is nonzero whenever this non-affinity holds. This is implied by, but strictly weaker than, the true posterior being non-Gaussian: the degenerate case $\delta_\mu = 0$ occurs when the $\boldsymbol{\mu}$-score component happens to be affine, which holds in particular when the true posterior is Gaussian, but may also hold for certain non-Gaussian posteriors whose non-Gaussianity is concentrated entirely in the $X$-marginal.
	\end{remark}
	
	\section{Numerical details of experiments}
	\label{app:experiments}

	\subsection{Topographic Model: Spectral Equations and CGNS Structure}
	\label{app:topo_equations}

	To discretize the system~\eqref{eq:vorticity_pde_app}-\eqref{eq:T_pde_app} in space, we project the system using the Fourier spectral modes.
	
	\subsubsection*{Spectral projection}
	Expanding all fields in Fourier modes $e^{ikx}$ and using $\psi_k = -\omega_k/\lambda_m^2$, we reformulate the system\eqref{eq:vorticity_pde_app}-\eqref{eq:T_pde_app} for the layered topographic model as
	\begin{align}
		d\omega_k &=\left[ -i\ell\!\left(Uk - \frac{\beta}{k}\right)\!\omega_k
		- i\ell k h_k U- \gamma_k\omega_k + f_k\right] dt + \sigma_k\,dW_k,
		\label{eq:spectral_vort_app} \\[4pt]
		dU &= \left[2\ell\,\mathrm{Im}\!\left(\sum_k \frac{\bar{h}_k\,\omega_k}{k} \right) - \gamma_U U + f_U \right]dt+ \sigma_U\,dW_U,  \label{eq:spectral_mean_app} \\[4pt]
		dT_k &= \left[\left(-\gamma_T - ikU\right) T_k
		+ \frac{i\alpha k}{\lambda_m^2}\,\omega_k\right] dt,
		\label{eq:spectral_T_app}
	\end{align}
	where $\ell$ is a geometric scaling factor. $dW_k$ and $dW_U$ are Wiener processes with diffusion coefficients $\sigma_k$ and $\sigma_U$, respectively.
	
	\subsubsection*{CGNS structure}
	\label{app:topo_cgns}
	With $\bo X = U$ and
	$\bo Y = (\omega_1,\ldots,\omega_K, T_1,\ldots,T_K)$, equations
	\eqref{eq:spectral_mean_app}--\eqref{eq:spectral_T_app} constitute a CGNS.  The drift of $U$ in \eqref{eq:spectral_mean_app} is linear in $\omega_k$ (with coefficient $2\ell\,\mathrm{Im}(\bar{h}_k/k)$), satisfying \eqref{eq:cgns_obs}.  The drifts of $\omega_k$ and $T_k$ in \eqref{eq:spectral_vort_app}--\eqref{eq:spectral_T_app} are linear in their respective components (with coefficients depending on $U$), satisfying \eqref{eq:cgns_lat}.

	\subsubsection*{CGNS filter and EnKF baseline}
Figure~\ref{fig:top_enkf_baseline} shows the EnKF posterior with discrete observations of $U$ every 200 time steps, using the CGNS as the forecast model.  These serve as the Gaussian baseline \eqref{eq:enkf_analysis_ensemble} against which $\Omega$'s residual correction is evaluated in
	Section~\ref{sec:exp_topo}.
	
	\begin{figure}[h]
		\centering
		\includegraphics[width=0.8\linewidth]{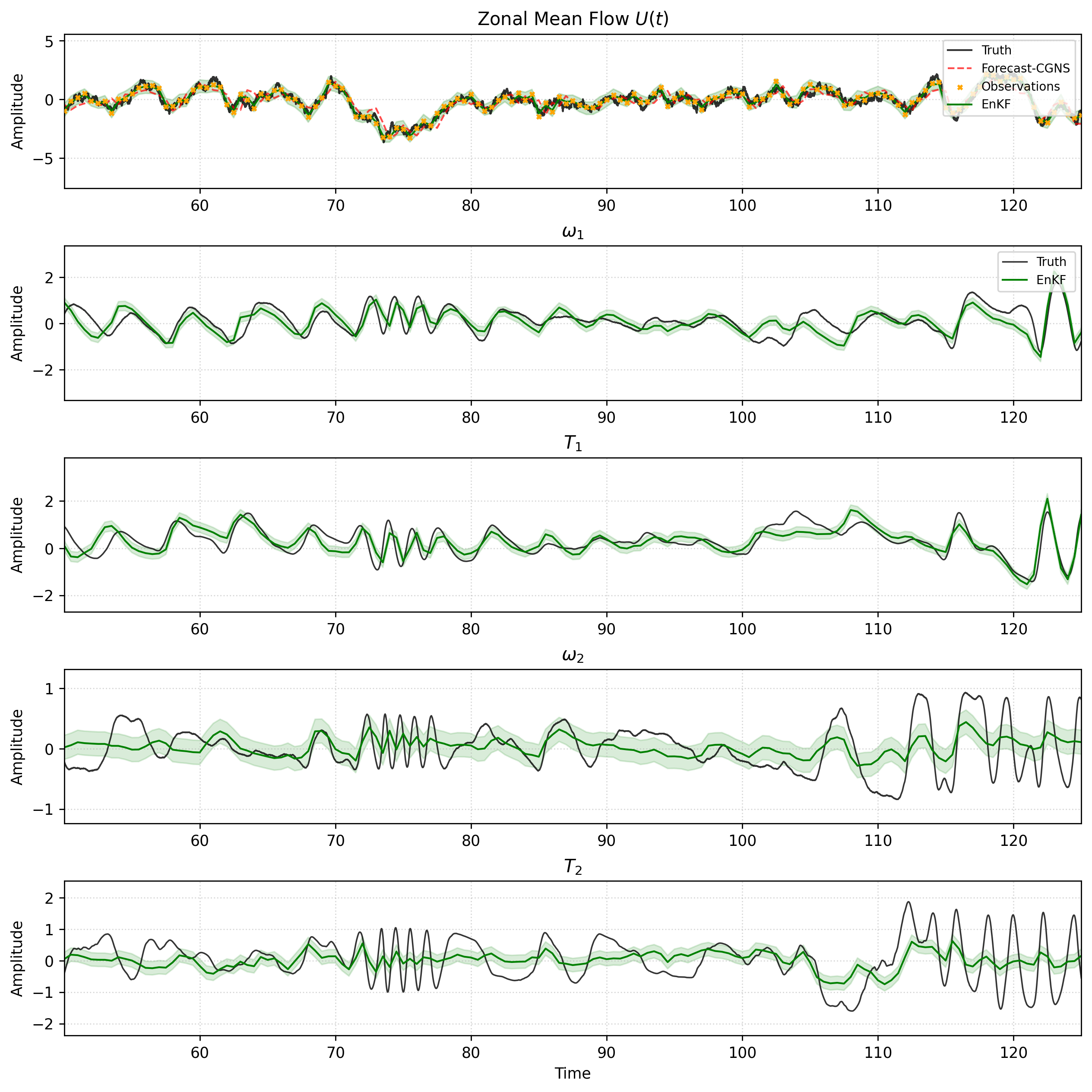}
		\caption{EnKF posterior for the topographic model with discrete observations of $U$ every 200 time steps, using the CGNS as the forecast model.  This is the Gaussian baseline \eqref{eq:enkf_analysis_ensemble} that $\Omega$ corrects in Section~\ref{sec:exp_topo}.}
		\label{fig:top_enkf_baseline}
	\end{figure}

	\section{Dimensional Robustness of Conditional Gaussian Inference}
	\label{app:latent_dimension}
	
	A central advantage of the CGNS framework is that the Monte Carlo error in recovering the latent posterior does not grow with the dimension of $\mathbf{Y}$. As discussed in \ref{rem:cod}, the factorization $p(\mathbf{X}, \mathbf{Y}) = p(\mathbf{Y} \mid \mathbf{X})\,p(\mathbf{X})$ and the closed form of $p(\mathbf{Y}_k \mid \mathbf{X}_{0:k})$ through the CGNS filtering equations
	\eqref{eq:cgns_mean}--\eqref{eq:cgns_cov} imply that  the integration over $\mathbf{Y}$ is performed analytically inside each ensemble component, and the sampling error is governed entirely by the observed process $\mathbf{X}$. This appendix provides numerical evidence for this dimensional robustness using the barotropic topographic turbulence model at increasing Fourier truncation levels.

	We consider the topographic model~\eqref{eq:spectral_vort_app} with Fourier truncation level $k_{\max}$, giving a latent state consisting of the Fourier coefficients
	\begin{equation}
		\mathbf{Y} = (v_k, \bar{v}_k, T_k, \bar{T}_k),\, k=1,\cdots, k_{\max}
		\qquad d_Y = 4\,k_{\max},
		\label{eq:app_latent_dim}
	\end{equation}
	where $d_Y $ is the latent dimension. For each value of $k_{\max}$, truth trajectories are
	generated and the CGNS filter is applied using observations  of the large-scale variable $X = U$.
	Two complementary experiments are reported in~Figure~\ref{fig:CGNS_dimension_MC}. The first varies the number of independent realizations $M$ of the process for fixed latent dimensions $d_Y \in \{8, 16, 32, 48, 64\}$, measuring the convergence rate  of the error beteewn the sampled $Y$using the CGNS and the truth with $M$. The second varies $d_Y$ from $4$ to $64$, measuring how the reconstruction quality changes as the latent space grows. The quality of the recovered latent statistics is measured by the normalized error between the true latent state and the CGNS conditional posterior mean  $\boldsymbol{\mu}_{f,k}$ as $\frac{\|\mathbf{Y}_{\mathrm{truth}} - \boldsymbol{\mu}_{f}\|_2} {\|\mathbf{Y}_{\mathrm{truth}}\|_2}$.
	
	In Figure~\ref{fig:CGNS_dimension_MC} (a), all curves closely follow the reference Monte Carlo rate $M^{-1/2}$ and remain nearly close across latent dimensions ranging from $d_Y = 8$ to $d_Y = 64$. This indicates that increasing the latent dimension does not increase the number of ensemble members required to achieve a given accuracy.

	Figure~\ref{fig:CGNS_dimension_MC}(b) shows  the error between the conditional posterior mean  and latent variable does not exhibit systematic growth as $d_Y$ increases from $4$ to $64$, more than an order of magnitude increase in the latent space size. The absence of dimensional degradation confirms that the conditional Gaussian closure \eqref{eq:cgns_mean}--\eqref{eq:cgns_cov} efficiently absorbs the latent complexity, and later leaving the $\Omega$ ensemble approximation quality unaffected by $d_Y$.

	\begin{figure}[t]
		\centering
		\begin{subfigure}[t]{0.48\textwidth}
			\centering
			\includegraphics[width=\linewidth]{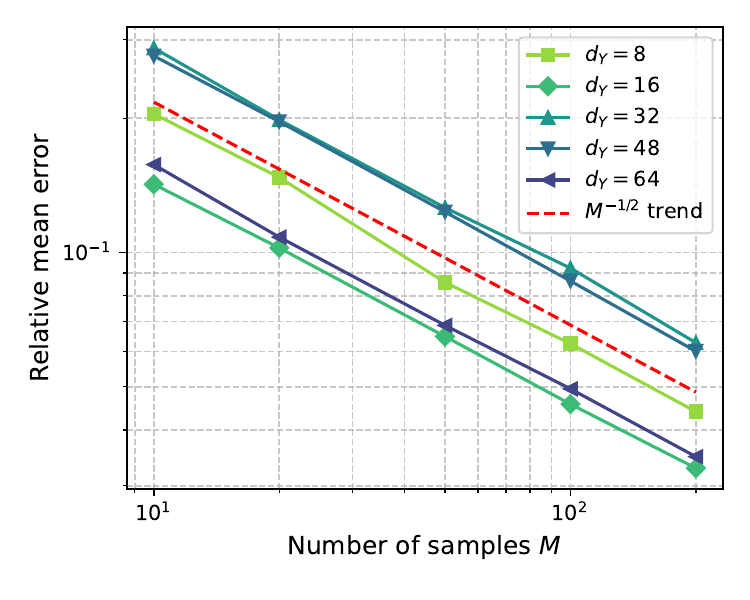}
			\caption{Relative error between the sampled and true latent variable as a function of the number of 	independent realizations $M$ for latent dimensions $d_Y \in \{8, 16, 32, 48, 64\}$. All curves closely follow the reference rate $M^{-1/2}$ and nearly overlap, confirming that convergence is insensitive to the latent dimension.}
		\end{subfigure}
		\hfill
		\begin{subfigure}[t]{0.48\textwidth}
			\centering
			\includegraphics[width=\linewidth]{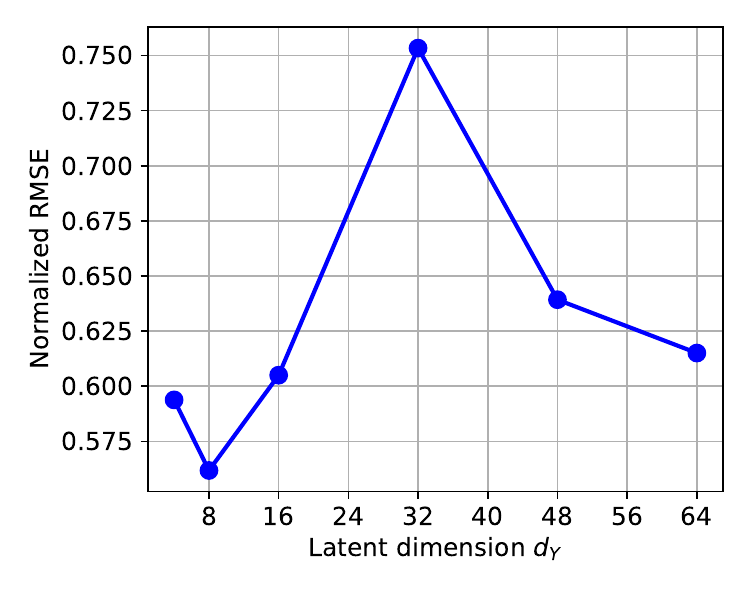}
			\caption{Relative error  as a function of latent dimension $d_Y = 4k_{\max}$. The reconstruction error does not grow systematically as $d_Y$ 	increases from $4$ to $64$, demonstrating  robustness of the CGNS conditional Gaussian closure in high-dimensional latent spaces.}
		\end{subfigure}
		\caption{Numerical investigation of dimensional
			robustness in CGNS-based latent inference.
			\textbf{(a)} The $\mathcal{O}(M^{-1/2})$ Monte Carlo
			convergence rate is maintained across all tested latent dimensions. \textbf{(b)} Reconstruction  accuracy remains stable as the latent dimension grows. Together, these results confirm that the
			dominant sampling burden is associated with the observed process $\mathbf{X}$, while the latent variables $\mathbf{Y}$ are efficiently recovered through the conditional Gaussian closure.}
		\label{fig:CGNS_dimension_MC}
	\end{figure}

	\subsection{Lorenz-96: CGNS Structure Failure and Surrogate Construction}
	\label{app:l96_surrogate}
	
	\label{app:l96_cgns_failure}
	Partitioning the L96 state into odd variables $\mathbf{x} = (u_1, u_3, \ldots)$ and even variables $\mathbf{y} = (u_2, u_4, \ldots)$, the equation for a representative pair centered on $j = 2k$ is
	\begin{align}
		du_{2k-1}
		&= \bigl[(u_{2k+1} - u_{2k-3})\,u_{2k-2}
		- u_{2k-1} + F\bigr]\,dt
		+ \sigma\,dW_{2k-1},
		\label{eq:l96_odd_app} \\[4pt]
		du_{2k}
		&= \bigl[(u_{2k+2} - u_{2k-2})\,u_{2k-1}
		- u_{2k} + F\bigr]\,dt
		+ \sigma\,dW_{2k}.
		\label{eq:l96_even_app}
	\end{align}
	The hidden-state equation \eqref{eq:l96_even_app} contains the term $u_{2k+2}\,u_{2k-1}$, a product of a hidden variable ($u_{2k+2}$, even-indexed) with an observed variable ($u_{2k-1}$, odd indexed). This is bilinear, however, it also contains $u_{2k-2}\,u_{2k-1}$, which is a product of a hidden variable ($u_{2k-2}$) with an observed variable.
	Moreover,  the same hidden variable $u_{2k+2}$ appears in both the observed equation \eqref{eq:l96_odd_app} (through $u_{2k-2} = u_{2(k-1)}$, i.e., a hidden neighbor) and the hidden equation, making the effective drift of $\mathbf{y}$ nonlinear in $\mathbf{y}$: the term $(u_{2k+2} - u_{2k-2})\,u_{2k-1}$ is a product of two hidden-state components $u_{2k+2}$ and
	$u_{2k-1}$ viewed from the perspective of the hidden-state equation (since $u_{2k-2}$ is even-indexed and hence hidden).  This violates
	the linear-in-$\mathbf{y}$ requirement \eqref{eq:cgns_lat}.
	
	\subsubsection*{Surrogate CGNS}
	The surrogate \eqref{eq:surr_obs}--\eqref{eq:surr_hid} restores
	conditional Gaussianity by two modifications: (i) products of hidden variables replace the quadratic hidden cross-products in the hidden equation with the {observed} neighbor,
	which are linear in $\mathbf{y}$; and (ii) an inflation noise $\sigma_{\mathrm{s}} = 5$ is added to the observed-state diffusion to compensate for the discarded nonlinear terms. The verification of the two CGNS structural conditions and the role of $\sigma_{\mathrm{s}}$ are described in Section~\ref{sec:exp_l96}.
	
	Figure~\ref{fig:l96_surrogate} shows the surrogate CGNS posterior for representative hidden variables under continuous observations, confirming that the surrogate captures the dominant dynamics of the
	hidden state despite the approximation.
	
	\begin{figure}[h]
		\centering
		\includegraphics[width=0.85\linewidth]{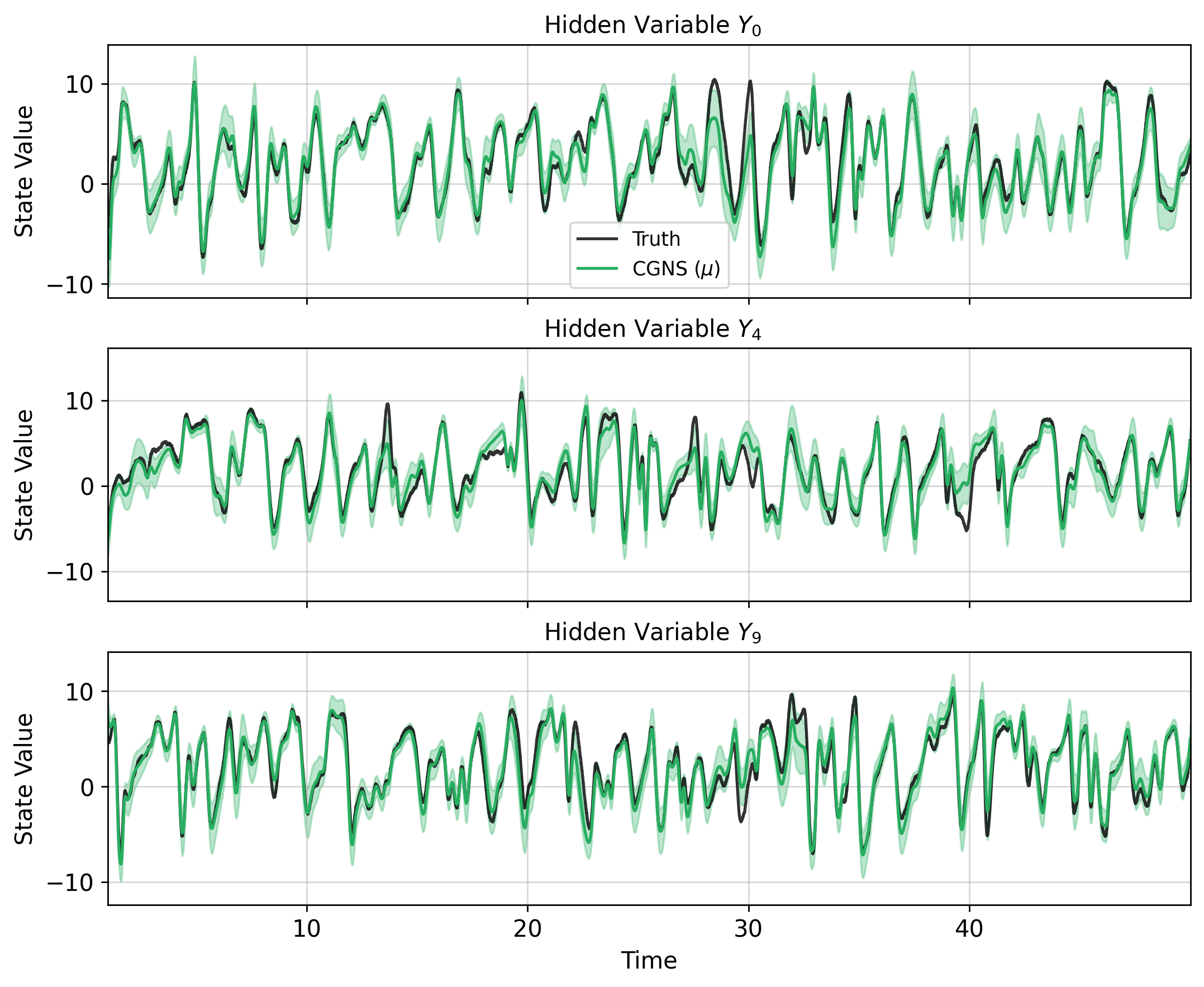}
		\caption{Surrogate CGNS posterior for representative hidden variables $Y_k = u_{2k}$ of the stochastic Lorenz-96 system under continuous observations.  The posterior mean  tracks the reference with uncertainty bands that correctly reflect the inflation noise $\sigma_{\mathrm{s}}$.}
		\label{fig:l96_surrogate}
	\end{figure}
	
	Figure~\ref{fig:score_surrogate} shows how the learned residual score complements the surrogate model. We compare the residual score based on normalized input values for two cases: the accurate surrogate CGNS used in this experiment and a version with smaller inflation noise, $\sigma_s = 4$.
	
	\begin{figure}[htbp]
	\centering
	\includegraphics[width=0.85\linewidth]{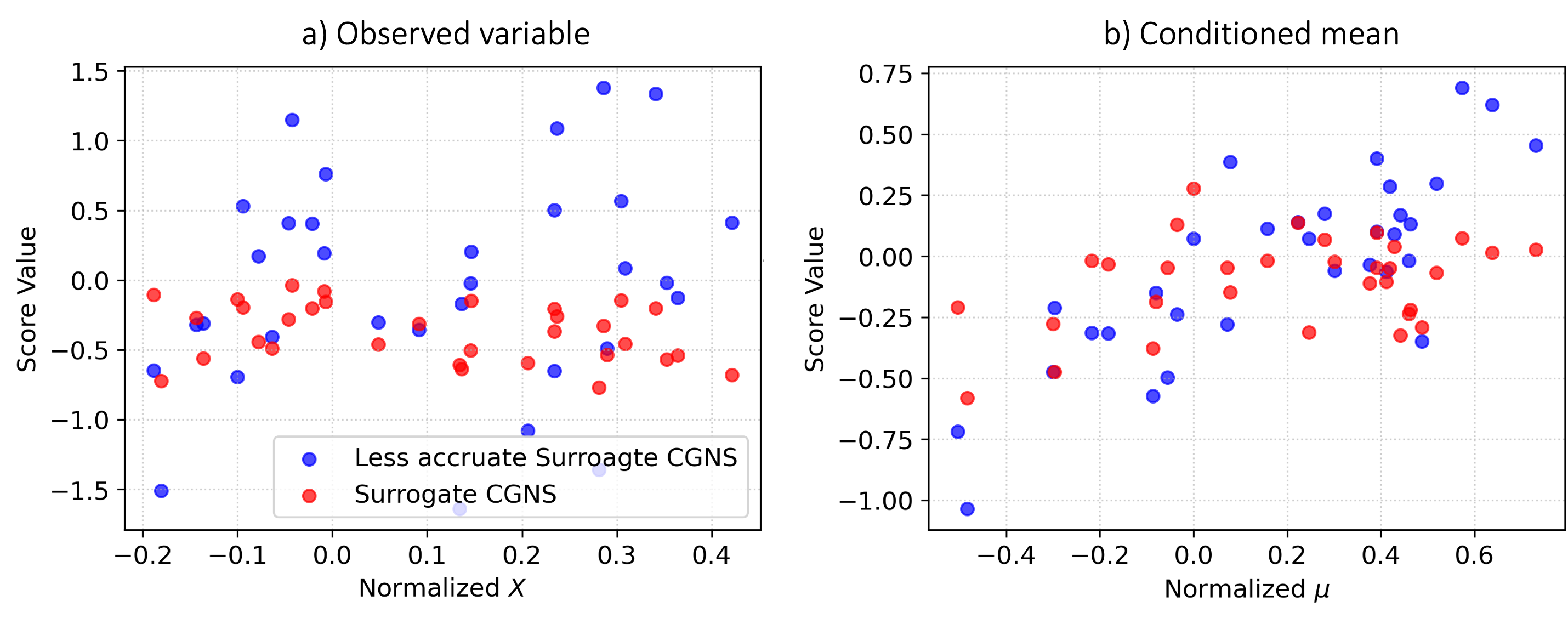}
	\caption{ The learned residual score corresponding to the first-indexed observed variable (a) and conditioned mean (b) for the cases: the less accurate surrogate model with $\sigma_s = 4$ and the accurate surrogate model $\sigma_s = 5$. The residual score is more pronounced when the surrogate model is less accurate.  }
	\label{fig:score_surrogate}
\end{figure}

\end{document}